\title{Stochastic Learning for Sparse Discrete Markov Random Fields with Controlled Gradient Approximation Error}
\newcommand*\samethanks[1][\value{footnote}]{\footnotemark[#1]}
\author{ {\bf Sinong Geng \thanks{\quad Sinong Geng and Zhaobin Kuang contribute equally. Their names are listed in alphabetical order.  Corresponds to: sgeng2@wisc.edu.}} \\
UW-Madison\\
\And
{\bf Zhaobin Kuang \samethanks[1]}  \\
UW-Madison\\
\And
{\bf Jie Liu}  \\
University of Washington\\
\And
{\bf Stephen Wright} \\
UW-Madison\\
\And
{\bf David Page} \\
UW-Madison\\
}
\newenvironment{prettyitem}[1]{
\vspace{-0.5mm}
\begin{itemize}[leftmargin=#1]
\setlength \itemsep{-0.3em}}
{\end{itemize}}
\newcommand{\grad}{\bm{\nabla}}
\newcommand{\norm}[1]{\lVert #1 \rVert}
\newcommand{\Norm}[1]{\left\lVert #1 \right\rVert}
\newcommand{\curly}[1]{\left\{#1\right\}}
\newcommand{\btheta}{\bm{\theta}}
\newcommand{\bTheta}{\bm{\Theta}}
\newcommand{\bx}{\mathbf{x}}
\newcommand{\bX}{\mathbf{X}}
\newcommand{\bY}{\mathbf{Y}}
\newcommand{\bdelta}{\bm{\delta}}
\newcommand\given[1][]{\:#1\vert\:}
\newtheorem{lemma}{Lemma}
\newtheorem{theorem}{Theorem}
\newtheorem{definition}{Definition}
\DeclarePairedDelimiter\abs{\lvert}{\rvert}%
\let\oldabs\abs
\def\abs{\@ifstar{\oldabs}{\oldabs*}}
\algrenewcommand\algorithmicindent{0.8em}
\begin{document}

\maketitle

\begin{abstract}
We study the $L_1$-regularized maximum likelihood estimator/estimation (MLE) problem for discrete Markov random fields (MRFs), where efficient and scalable learning requires both sparse regularization and approximate inference. To address these challenges, we consider a stochastic learning framework called stochastic proximal gradient (SPG; \citealt{honorio2012convergence, atchade2014stochastic, miasojedow2016sparse}). SPG is an \emph{inexact} proximal gradient algorithm \citep{schmidt2011convergence}, whose inexactness stems from the stochastic oracle (Gibbs sampling) for gradient approximation -- exact gradient evaluation is infeasible in general due to the NP-hard inference problem for discrete MRFs \citep{koller2009probabilistic}. Theoretically, we provide novel \emph{verifiable} bounds to inspect and control the quality of gradient approximation. Empirically, we propose the \emph{tighten asymptotically} (TAY) learning strategy based on the verifiable bounds to boost the performance of SPG.
\end{abstract}

\section{INTRODUCTION}
Markov random fields (MRFs, a.k.a.~Markov networks, undirected graphical models) are a compact representation of the joint distribution among multiple variables, with each variable being a node and an edge between two nodes indicating conditional dependence between the two corresponding variables. Sparse discrete MRF learning is proposed in the seminal work of \cite{lee2006efficient}. By considering an $l_1$-regularized MLE problem, many components of the parameterization are driven to zero, yielding a sparse solution to structure learning. However, in general, solving an $l_1$-regularized MLE problem exactly for a discrete MRF is infamously difficult due to the NP-hard inference problem posed by exact gradient evaluation \citep{koller2009probabilistic}. We hence inevitably have to compromise accuracy for the gain of efficiency and scalability via \emph{inexact} learning techniques \citep{liu2013bayesian, liu2014learning, liu2016multiple,geng2018temporal}.

In this paper, we consider stochastic proximal gradient (SPG; \citealt{honorio2012convergence, atchade2014stochastic, miasojedow2016sparse}), a stochastic learning framework for $l_1$-regularized discrete MRFs. SPG hinges on a stochastic oracle for gradient approximation of the log-likelihood function (inexact inference). However, both the theoretical guarantees and the practical performances of existing algorithms are unsatisfactory.  

The stochastic oracle behind SPG is Gibbs sampling \citep{levin2009markov}, which is an effective approach to draw samples from an intractable probability distribution. With enough samples, the intractable distribution can be approximated effectively by the empirical distribution, and hence many quantities (e.g., the gradient of the log-likelihood function) related to the intractable distribution can be estimated efficiently. Since SPG uses Gibbs sampling for gradient approximation, it can be viewed as an inexact proximal gradient method \citep{schmidt2011convergence}, whose success depends on whether the gradient approximation error can be effectively controlled. While previous works \citep{honorio2012convergence, atchade2014stochastic, miasojedow2016sparse} have shown that the quality of the gradient approximation can be improved \emph{in the long run} with increasingly demanding computational resources, such long term guarantees might not translate to satisfactory performance in practice (see Section~\ref{sec:exp}). Therefore, it is desirable to estimate and control the gradient approximation error of SPG meticulously in each iteration so that a more refined approximation to the exact gradient will be rewarded with a higher gain of efficiency and accuracy in practice.

Careful analysis and control of the quality of the gradient approximation of SPG call for the cross-fertilization of theoretical and empirical insights from stochastic approximate inference \citep{bengio2009justifying,fischer2011bounding}, inexact proximal methods \citep{schmidt2011convergence}, and statistical sampling \citep{mitliagkas2017improving}. Our contributions are hence both theoretical and empirical. Theoretically, we provide novel \emph{verifiable} bounds (Section~\ref{sec:bound}) to inspect and control the gradient approximation error induced by Gibbs sampling. Also, we provide a proof sketch for the main results in Section~\ref{sec:proof-sketch}. Empirically, we propose the \emph{tighten asymptotically} (TAY) learning strategy (Section~\ref{sec:applications}) based on the verifiable bounds to boost the performance of SPG.

\section{BACKGROUND}

We first introduce $l_1$-regularized discrete MRFs in Section~\ref{sec:mrf}. We then briefly review SPG as a combination of proximal gradient for sparse statistical learning and Gibbs sampling for addressing the intractable exact gradient evaluation problem. 


\subsection{$l_1$-Regularized Discrete MRF}
\label{sec:mrf}
For the derivation, we focus on the binary pairwise case and we illustrate that our framework can be generalized to other models in Section ~\ref{sec:applications}.
Let  $\bX = \begin{bmatrix}
X_1,X_2,\cdots,X_p \end{bmatrix}^\top \in \curly{0,1}^p$
be a $p\times 1$ binary random vector. We use an uppercase letter such as $X$ to denote a random variable and the corresponding lowercase letter to denote a particular \emph{assignment} of the random variable, i.e., $X = x$. We also use boldface letters to represent vectors and matrices and regular letters to represent scalars. We define the function $\bm{\psi}:\curly{0,1}^p \rightarrow \curly{0,1}^m, \; \bx \rightarrow \bm{\psi}(\bx)$ to represent the \emph{sufficient statistics} (a.k.a.~\emph{features}) whose values depend on the assignment $\bx$ and compose an $m\times 1$ vector $\bm{\psi}(\bx)$, with its $j^{th}$ component denoted as $\psi_j (\bx)$. We use $\mathbb{X}$ to represent a dataset with $n$ independent and identically distributed (i.i.d.) samples.

With the notation introduced above, the $l_1$-regularized discrete MRF problem can be formulated as the following convex optimization problem:
\begin{equation}
\begin{gathered}
\label{eq:l1mrf}
\hat{\btheta} = \arg\min_{\btheta\in\bTheta} -\frac{1}{n}\sum_{\mathbf{x}\in \mathbb{X}} \btheta^\top \bm{\psi}(\mathbf{x}) + A(\btheta) + \lambda \norm{\btheta}_1,
\end{gathered}
\end{equation}
\begin{figure}[H]
\begin{minipage}[t]{.475\textwidth}
\begin{algorithm}[H]
\caption{Gibbs Sampling (Gibbs-1)}
\label{alg:Gibbs-1}	
\begin{algorithmic}[1]
\Require initial samples $\mathbb{S}_0$ and $\btheta$.
\Ensure $\mathbb{S}$.
\Function{Gibbs-1}{$\mathbb{S}_0$, $\btheta$}
\State $\mathbb{S} \leftarrow \mathbb{S}_0$, and decide $p$ from $\mathbb{S}_0$.
\For{$i \in \left\{ 1, \cdots, p\right \}$}
\For{$\bx \in \mathbb{S}$}
\State Compute $\text{P}_{\btheta}(X_i\given \bx_{-i})$ according to (\ref{eq:cond}).
\State Update $x_i$ by $\text{P}_{\btheta}(X_i\given \bx_{-i})$.
\EndFor
\EndFor
\State \Return $\mathbb{S}$.
\EndFunction
\end{algorithmic}
\end{algorithm}
\end{minipage}
\hfill
  \hfill
  \begin{minipage}[t]{.475\textwidth}
\begin{algorithm}[H]
\caption{Gradient Approximation (GRAD)}
\label{alg:grad}	
\begin{algorithmic}[1]
\Require $\btheta$, $\mathbb{E}_{\mathbb{X}}\bm{\psi}(\bx)$, and $q$.
\Ensure $\bm{\Delta} f(\btheta)$.
\Function{Grad}{$\btheta$, $\mathbb{E}_{\mathbb{X}}\bm{\psi}(\bx)$, $q$}
\State Initialize $\mathbb{S}$ with $q$ samples.
\While{true}
\State $\mathbb{S} \leftarrow$ \Call{Gibbs-1}{ $\mathbb{S}$, $\btheta$}.
\If{stopping criteria met}\label{step:stop}
\State Compute $\mathbb{E}_{\mathbb{S}}\bm{\psi}(\bx)$ according to (\ref{eq:replacement}).
\State $\bm{\Delta} f(\btheta) \leftarrow \mathbb{E}_{\mathbb{S}}\bm{\psi}(\bx)-\mathbb{E}_{\mathbb{X}}\bm{\psi}(\bx)$. 
\State\textbf{break}.
\EndIf
\EndWhile
\State \Return $\bm{\Delta} f(\btheta)$.
\EndFunction
\end{algorithmic}
\end{algorithm}
  \end{minipage}%
  \hfill
  \hfill
  \begin{minipage}[t]{.475\textwidth}
    \begin{algorithm}[H]
\caption{Stochastic Proximal Gradient (SPG)}
\label{alg:pxgb}
\begin{algorithmic}[1]
\Require $\mathbb{X}$, $\lambda$, and $q$.
\Ensure $\tilde{\btheta}$.
\Function{SPG}{$\mathbb{X}$, $\lambda$, $q$}
\State Compute $\mathbb{E}_{\mathbb{X}}\bm{\psi}(\bx)$ according to (\ref{eq:der-moment}).
\State Initialize $\btheta^{(0)}$ randomly and $k \leftarrow 0$.
\State Choose step length $\alpha$. 
\While{true}
\State\label{step:pcd}$\bm{\Delta} f(\btheta^{(k)})\leftarrow$ \Call{Grad}{$\btheta^{(k)}$, $\mathbb{E}_{\mathbb{X}}\bm{\psi}(\bx)$, $q$}.
\State $\btheta^{(k+1)} \leftarrow \bm{\mathcal{S}}_{\alpha\lambda}\left(\btheta^{(k)} - \alpha \bm{\Delta} f(\btheta^{(k)})\right).$
\If{Stopping criteria met}\label{step:converge}
\State $\tilde{\btheta} =\btheta^{(k+1)}$, \Return $\tilde{\btheta}$.
\EndIf
\State $k \leftarrow k+1$
\EndWhile
\EndFunction
\end{algorithmic}
\end{algorithm}
  \end{minipage}
\end{figure}
with
\begin{equation*}
\begin{gathered}
 A(\btheta) = \log \hspace{-4mm} \sum_{\mathbf{x}\in \curly{0,1}^p} \hspace{-3mm} \exp(\btheta^\top \bm{\psi(\bx)}),
\end{gathered}
\end{equation*}
where $\bTheta\subseteq \mathbb{R}^m$ is the parameter space of $\btheta$'s, $\lambda \ge 0$, and $A(\btheta)$ is the \emph{log partition function}. We denote the differentiable part of (\ref{eq:l1mrf}) as
\begin{equation}
\label{eq:f}
f(\btheta) = -\frac{1}{n}\sum_{\bx\in \mathbb{X}} \btheta^\top \bm{\psi}(\bx) + A(\btheta).
\end{equation}
Solving (\ref{eq:l1mrf})  requires evaluating the gradient of $f(\btheta)$, which is given by:
\begin{equation}
\begin{gathered}
\label{eq:grad}
\grad f(\btheta) = \mathbb{E}_{\btheta} \bm{\psi}(\bx)-\mathbb{E}_{\mathbb{X}} \bm{\psi}(\bx),
\end{gathered}
\end{equation}
with
\begin{equation}
\begin{gathered}
\label{eq:der-moment}
\mathbb{E}_{\btheta} \bm{\psi}(\bx) = \hspace{-4mm} \sum_{\bx\in\curly{0,1}^p} \hspace{-3mm} \mathrm{P}_{\btheta}(\bx)\bm{\psi}(\bx), \quad \mathbb{E}_{\mathbb{X}} \bm{\psi}(\bx) =  \frac{1}{n}\sum_{\bx \in \mathbb{X}} \bm{\psi}(\bx).
\end{gathered}
\end{equation}
$\mathbb{E}_{\btheta} \bm{\psi}(\bx)$ represents the expectation of the sufficient statistics under $\mathrm{P}_{\btheta}(\bx)= \frac{\exp (\btheta^\top\bm{\psi}(\bx))}{\exp(A(\btheta))}$, which is a discrete MRF probability distribution parameterized by $\btheta$. $\mathbb{E}_{\mathbb{X}} \bm{\psi}(\bx)$ represents the expectation of the sufficient statistics under the empirical distribution. Computing $\mathbb{E}_{\mathbb{X}} \bm{\psi}(\bx)$ is straightforward, but computing $\mathbb{E}_{\btheta} \bm{\psi}(\bx)$ exactly is intractable due to the entanglement of $A(\btheta)$. As a result, various approximations have been made \citep{wainwright2007high, hofling2009estimation, viallon2014empirical}. 


\subsection{Stochastic Proximal Gradient}
\label{sec:SPG}

To efficiently solve (\ref{eq:l1mrf}), many efforts have been made in combining Gibbs sampling \citep{levin2009markov} and proximal gradient descent \citep{parikh2014proximal} into SPG, a method that adopts the proximal gradient framework to update iterates, but uses Gibbs sampling as a stochastic oracle to approximate the gradient when the gradient information is needed \citep{honorio2012convergence,atchade2014stochastic,miasojedow2016sparse}.

Specifically, Gibbs sampling with $q$ chains running $\tau$ steps (Gibbs-$\tau$) can generate $q$ samples for $\mathrm{P}_{\btheta}(\bx)$. Gibbs-$\tau$ is achieved by iteratively applying Gibbs-$1$ for $\tau$ times. Gibbs-$1$ is summarized in Algorithm~\ref{alg:Gibbs-1}, where 
\begin{equation}
\label{eq:cond}
\text{P}_{\btheta}(X_i \mid \mathbf{x}_{-i}) = \text{P}_{\theta}(\bx_i \given x_{1}, \cdots, x_{i-1}, x_{i+1}, \cdots, x_{p})
\end{equation}
represents the conditional distribution of $X_i$ given the assignment of the remaining variables $\mathbf{x}_{-i}$ under the parameterization $\btheta$. Denoting the set of these $q$ (potentially repetitive) samples as $\mathbb{S}$, we can approximate $\mathbb{E}_{\btheta} \bm{\psi}(\bx)$ by the easily computable 
\begin{equation}
\begin{gathered}
\label{eq:replacement}
\mathbb{E}_{\mathbb{S}} \bm{\psi}(\bx)=\frac{1}{q} \sum_{\bx \in \mathbb{S}} \bm{\psi}(\bx)
\end{gathered}
\end{equation}
and thus reach the approximated gradient $\bm{\Delta} f(\btheta) = \mathbb{E}_{\mathbb{S}}\bm{\psi}(\bx)-\mathbb{E}_{\mathbb{X}}\bm{\psi}(\bx)$ with the gradient approximation error:
\begin{equation*} 
\bdelta(\btheta) = \bm{\Delta} f(\btheta)-\grad f(\btheta). 
\end{equation*} 
By replacing $\grad f(\btheta)$ with $\bm{\Delta} f(\btheta)$ in proximal gradient, the update rule for SPG can be derived as $\btheta^{(k+1)} = \bm{\mathcal{S}}_{\alpha\lambda}\left(\btheta^{(k)}-\alpha \bm{\Delta} f(\btheta^{(k)})\right)$,
where $\alpha>0$ is the step length and $\bm{\mathcal{S}}_{\lambda}(\bm{a})$ is the soft-thresholding operator whose value is also an $m \times 1$ vector, with its $i^{th}$ component defined as $\mathcal{S}_{\lambda}(\bm{a})_i = \mathrm{sgn}(a_i)\max(0, \abs{a_i} - \lambda)$ and $\mathrm{sgn}(a_i)$ is the sign function.

By defining 
\begin{align}
\label{eq:G}
\begin{split}
\bm{G}_\alpha(  \btheta^{(k)}) := &\frac{1}{\alpha} \left(\btheta^{(k)} - \btheta^{(k+1)}\right)
\\=  &\frac{1}{\alpha} \left(\btheta^{(k)} - S_{\alpha\lambda} \left(\btheta^{(k)}-\alpha \bm{\Delta} f(\btheta^{(k)})\right)\right),
\end{split}
\end{align}
we can rewrite the previous update rule in a form analogous to the update rule of a standard gradient descent, resulting in the update rule of a \emph{generalized gradient descent} algorithm:
\begin{equation}
\label{eq:ggd}
\btheta^{(k+1)} = \btheta^{(k)} - \alpha \bm{G}_\alpha(\btheta^{(k)}).
\end{equation}
SPG is summarized in Algorithm~\ref{alg:pxgb}. Its gradient evaluation procedure based on Algorithm~\ref{alg:Gibbs-1} is given  in Algorithm~\ref{alg:grad}.

\section{MOTIVATION}
\label{sec:convergence-rate-err}
Both practical performance and theoretical guarantees of SPG are still far from satisfactory. Empirically, there are no convincing schemes for selecting $\tau$ and $q$, which hinders the efficiency and accuracy of SPG. Theoretically, to the best of our knowledge, existing non-asymptotic convergence rate guarantees can only be achieved for SPG with an averaging scheme \citep{schmidt2011convergence, honorio2012convergence, atchade2014stochastic} (see also Section~\ref{sec:convergence-rate}), instead of ordinary SPG. In contrast, in the exact proximal gradient descent method, the objective function value is non-decreasing and convergent to the optimal value under some mild assumptions \citep{parikh2014proximal}. In Section~\ref{sec:decreasing-objective}, we identify that the absence of non-asymptotic convergence rate guarantee for SPG primarily comes from the existence of gradient approximation error $\bdelta(\btheta) $. In Section~\ref{sec:convergence-rate}, we further validate that the objective function value achieved by SPG is also highly dependent on $\bdelta(\btheta) $. These issues bring about the demand of inspecting and controlling $\bdelta(\btheta) $ in each iteration.

\subsection{Setup and Assumptions}
For the ease of presentation, we rewrite the objective function in (\ref{eq:l1mrf}) as $g(\btheta) = f(\btheta)+h(\btheta)$,
where $h(\btheta) = \lambda \Norm{\btheta}_1$, and $f(\btheta)$ is given in (\ref{eq:f}). Since $\grad f(\btheta)$ is Lipschitz continuous \citep{honorio2012lipschitz}, we denote its Lipschitz constant as $L$. We also make the same assumption that $\alpha \le 1/L$ as \cite{schmidt2011convergence}.


\subsection{Decreasing Objective}
\label{sec:decreasing-objective}
It is well-known that exact proximal gradient enjoys a $O\left(\frac{1}{k}\right)$ convergence rate \citep{parikh2014proximal}. One premise for this convergence result is that the objective function value decreases in each iteration. However, satisfying the decreasing condition is much more intricate in the context of SPG. Theorem~\ref{thm:obj-decrease} clearly points out that $\bdelta(\btheta)$ is one main factor determining whether the objective function decreases in SPG.  

\begin{theorem}\normalfont
\label{thm:obj-decrease}
Let $\btheta^{(k)}$ be the iterate of SPG after the $k^{th} $ iteration. Let $\btheta^{(k+1)}$ be defined as in (\ref{eq:ggd}). With $\alpha \le 1/L$, we have
\begin{align*}
\begin{split}
g(\btheta^{(k+1)}) - g(\btheta^{(k)}) \le 
\alpha \bdelta (\btheta^{(k)})^\top \bm{G}_\alpha &(\btheta^{(k)})
\\- &\frac{\alpha}{2}\norm{\bm{G}_\alpha(\btheta^{(k)})}_2^2.
\end{split}
\end{align*}
Furthermore, a sufficient condition  for $g(\btheta^{(k+1)})< g(\btheta^{(k)})$ is 
\begin{equation*}
\norm{\bdelta(\btheta^{(k)})}_2 < \frac{1}{2}\norm{\bm{G}_\alpha(\btheta^{(k)})}_2.
\end{equation*}
\end{theorem}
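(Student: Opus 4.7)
The plan is to separately bound the change in the smooth part $f$ and the non-smooth part $h$ of $g = f + h$, then combine. This mirrors the classical descent analysis for proximal gradient, but with the twist that the step uses the approximate gradient $\bm{\Delta} f$ rather than $\grad f$, so the error term $\bdelta(\btheta) = \bm{\Delta} f(\btheta) - \grad f(\btheta)$ appears explicitly.

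First I would apply the descent lemma to $f$. Since $\grad f$ is $L$-Lipschitz and $\btheta^{(k+1)} - \btheta^{(k)} = -\alpha\bm{G}_\alpha(\btheta^{(k)})$, standard smoothness gives
\begin{equation*}
f(\btheta^{(k+1)}) \le f(\btheta^{(k)}) - \alpha\,\grad f(\btheta^{(k)})^\top \bm{G}_\alpha(\btheta^{(k)}) + \tfrac{\alpha^2 L}{2}\norm{\bm{G}_\alpha(\btheta^{(k)})}_2^2,
\end{equation*}
and then I would use $\alpha L \le 1$ to replace $\alpha^2 L/2$ by $\alpha/2$. Next I would write the inexact gradient as $\bm{\Delta} f(\btheta^{(k)}) = \grad f(\btheta^{(k)}) + \bdelta(\btheta^{(k)})$, splitting the inner product above into an exact part (which I will keep) and the error $\bdelta(\btheta^{(k)})$ (which will fold into the final bound).

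Second, I need a companion inequality for $h$. The key fact is that $\btheta^{(k+1)}$ is the proximal map of $\alpha h$ at $\btheta^{(k)} - \alpha \bm{\Delta} f(\btheta^{(k)})$, which by the first-order optimality condition gives
\begin{equation*}
\bm{G}_\alpha(\btheta^{(k)}) - \bm{\Delta} f(\btheta^{(k)}) \in \partial h(\btheta^{(k+1)}).
\end{equation*}
Because $h = \lambda\norm{\cdot}_1$ is convex, the subgradient inequality at $\btheta^{(k+1)}$ evaluated at $\btheta^{(k)}$ yields
\begin{equation*}
h(\btheta^{(k+1)}) - h(\btheta^{(k)}) \le -\alpha\,\bm{G}_\alpha(\btheta^{(k)})^\top\bm{G}_\alpha(\btheta^{(k)}) + \alpha\,\bm{\Delta} f(\btheta^{(k)})^\top \bm{G}_\alpha(\btheta^{(k)}),
\end{equation*}
after substituting $\btheta^{(k)} - \btheta^{(k+1)} = \alpha\bm{G}_\alpha(\btheta^{(k)})$.

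Third, I would add the $f$ and $h$ bounds. The two $\bm{\Delta} f$ and $\grad f$ inner products recombine as $\bdelta(\btheta^{(k)})^\top \bm{G}_\alpha(\btheta^{(k)})$, and the quadratic terms $\tfrac{\alpha}{2}\norm{\bm{G}_\alpha}_2^2 - \alpha\norm{\bm{G}_\alpha}_2^2 = -\tfrac{\alpha}{2}\norm{\bm{G}_\alpha}_2^2$, giving exactly the stated inequality. For the sufficient condition, I would apply Cauchy--Schwarz to the error term, $\alpha\,\bdelta(\btheta^{(k)})^\top \bm{G}_\alpha(\btheta^{(k)}) \le \alpha\norm{\bdelta(\btheta^{(k)})}_2\norm{\bm{G}_\alpha(\btheta^{(k)})}_2$, and observe that $\norm{\bdelta(\btheta^{(k)})}_2 < \tfrac{1}{2}\norm{\bm{G}_\alpha(\btheta^{(k)})}_2$ forces the right-hand side to be strictly negative.

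The main obstacle I anticipate is being careful about the optimality condition for the inexact proximal step: one has to track that the subgradient of $h$ at $\btheta^{(k+1)}$ equals $\bm{G}_\alpha(\btheta^{(k)}) - \bm{\Delta} f(\btheta^{(k)})$ rather than $\bm{G}_\alpha - \grad f$, since that is precisely where the error $\bdelta$ enters the proof. Everything else is routine manipulation and a single use of Cauchy--Schwarz.
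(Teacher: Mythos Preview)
Your proposal is correct and follows essentially the same route as the paper. The paper packages the descent-lemma step for $f$ and the subgradient inequality for $h$ (using that $\bm{G}_\alpha(\btheta)-\bm{\Delta} f(\btheta)\in\partial h(\btheta^\dag)$) into a slightly more general two-point inequality (their Lemma~\ref{lemma:ineq}, valid for arbitrary $\btheta_1,\btheta_2$) and then specializes to $\btheta_1=\btheta_2=\btheta^{(k)}$; you go straight to that specialization, but the ingredients and the final Cauchy--Schwarz step are identical.
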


According to Theorem~\ref{thm:obj-decrease}, if the magnitude of the noise, quantified by $\norm{\bdelta(\btheta^{(k)})}_2$, is reasonably small, the objective function value decreases in each iteration. Under this condition, we can further construct a theoretical support for the convergence rate of the objective function value in the Section~\ref{sec:convergence-rate}. 

\subsection{Convergence Rate}
\label{sec:convergence-rate}

Assuming that $\bdelta(\btheta)$ is small enough in each iteration to generate a decreasing objective value sequence, we can derive  Theorem~\ref{thm:convergence-rate} following Proposition 1 in \cite{schmidt2011convergence}:

\begin{theorem}\normalfont
\label{thm:convergence-rate}
Let $\mathcal{K}=(\btheta^{(0)},\btheta^{(1)},\btheta^{(2)},\cdots,\btheta^{(\kappa)})$ be the iterates generated by Algorithm~\ref{alg:pxgb}. Then if $g(\btheta^{(k+1)}) \le g(\btheta^{(k)})$ with $k \in \{1, 2, \cdots, \kappa-1 \}$, we have
\begin{align}
\label{eq:con-avg}
\begin{split}
g (\btheta^{(\kappa)}) - g(\hat{\btheta}) \le &\\
\frac{L}{2\kappa}& \left(\norm{\btheta^{(0)}-\hat{\btheta}}_2 + \frac{2}{L}\sum_{k=1}^\kappa \norm{\bdelta(\btheta^{(k)})}_2 \right)^2.
\end{split}
\end{align}
\end{theorem}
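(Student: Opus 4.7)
The plan is to adapt the analysis of Proposition 1 in Schmidt et al.\ (2011) for inexact proximal gradient methods to the specific notation of SPG, where the gradient error is $\bdelta(\btheta^{(k)})$. I would first establish a per-iteration comparison lemma against $\hat{\btheta}$. Starting from the optimality condition of the proximal step, the Lipschitz descent inequality for $f$ (using $\alpha \le 1/L$), and the convexity of $f$ and $h$, one obtains an inequality of the form
\[
g(\btheta^{(k+1)}) - g(\hat{\btheta}) \le \frac{1}{2\alpha}\left(\norm{\btheta^{(k)}-\hat{\btheta}}_2^2 - \norm{\btheta^{(k+1)}-\hat{\btheta}}_2^2\right) + \norm{\bdelta(\btheta^{(k)})}_2\,\norm{\btheta^{(k+1)}-\hat{\btheta}}_2,
\]
where the last term collects the deviation induced by replacing $\grad f$ with $\bm{\Delta} f$ after a Cauchy--Schwarz step. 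Theorem~\ref{thm:obj-decrease} already supplies the building blocks for this bound (the generalized gradient identity, the Lipschitz descent, and the noise term $\alpha\bdelta(\btheta^{(k)})^\top \bm{G}_\alpha(\btheta^{(k)})$).

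Second, because the hypothesis guarantees that $g(\btheta^{(k+1)}) \le g(\btheta^{(k)})$ for $k=1,\dots,\kappa-1$, the left-hand side $g(\btheta^{(\kappa)})-g(\hat{\btheta})$ is upper bounded by the average of $g(\btheta^{(k+1)})-g(\hat{\btheta})$ over $k$, so summing the per-iteration inequality from $k=0$ to $\kappa-1$ telescopes the distance terms and yields
\[
\kappa\bigl(g(\btheta^{(\kappa)}) - g(\hat{\btheta})\bigr) \le \frac{L}{2}\norm{\btheta^{(0)}-\hat{\btheta}}_2^2 + \sum_{k=1}^{\kappa} \norm{\bdelta(\btheta^{(k)})}_2\,\norm{\btheta^{(k)}-\hat{\btheta}}_2,
\]
after substituting $\alpha \le 1/L$ and dropping the nonpositive $-\norm{\btheta^{(\kappa)}-\hat{\btheta}}_2^2$ term.

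Third, to complete the argument one needs a uniform bound on $\norm{\btheta^{(k)}-\hat{\btheta}}_2$. Starting again from the per-iteration inequality and discarding the nonnegative quantity $g(\btheta^{(k+1)})-g(\hat{\btheta}) \ge 0$ produces the recursion
\[
\norm{\btheta^{(k+1)}-\hat{\btheta}}_2^2 \le \norm{\btheta^{(k)}-\hat{\btheta}}_2^2 + \tfrac{2}{L}\norm{\bdelta(\btheta^{(k)})}_2\,\norm{\btheta^{(k+1)}-\hat{\btheta}}_2.
\]
Applying Lemma~1 of Schmidt et al.\ (a variant of a lemma of Bertsekas) to the sequence $u_k := \norm{\btheta^{(k)}-\hat{\btheta}}_2$ with $a_k := \tfrac{2}{L}\norm{\bdelta(\btheta^{(k)})}_2$ gives $u_k \le u_0 + \sum_{j=1}^{k} a_j$. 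Substituting this uniform bound into the telescoped inequality and recognizing the resulting expression as a complete square $(u_0 + \sum_j a_j)^2$ yields the claimed rate (\ref{eq:con-avg}).

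The main obstacle is the uniform bound on $\norm{\btheta^{(k)}-\hat{\btheta}}_2$: the recursion is not a simple geometric one because the increment depends on $\norm{\btheta^{(k+1)}-\hat{\btheta}}_2$ itself, and handling it rigorously requires invoking the Schmidt--Bertsekas style lemma rather than direct induction. A smaller subtlety is ensuring the telescoping step is legitimate given that the hypothesis only asserts monotonicity of $g(\btheta^{(k)})$ (not of $\norm{\btheta^{(k)}-\hat{\btheta}}_2$), which is why we rely on the monotonicity of the objective to replace $g(\btheta^{(\kappa)})$ with the average and recover a factor $\kappa$ on the left-hand side.
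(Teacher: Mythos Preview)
Your proposal is correct and shares the same underlying idea as the paper---combine the analysis behind Proposition~1 of Schmidt et al.\ (2011) with the monotonicity hypothesis---but the level of detail differs markedly. The paper's proof is a two-liner: it cites Proposition~1 as a black box to obtain the bound on $g\bigl(\tfrac{1}{\kappa}\sum_k \btheta^{(k)}\bigr)$ and then asserts $g\bigl(\tfrac{1}{\kappa}\sum_k \btheta^{(k)}\bigr) \ge g(\btheta^{(\kappa)})$ from monotonicity. You instead unpack the proof of Proposition~1 (per-iteration descent inequality, telescoping sum, and the recursion lemma controlling $\|\btheta^{(k)}-\hat\btheta\|_2$) and apply monotonicity at the telescoping stage via $g(\btheta^{(\kappa)}) \le \tfrac{1}{\kappa}\sum_k g(\btheta^{(k)})$. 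What your route buys is rigor at exactly that juncture: the paper's stated inequality $g(\text{average point}) \ge g(\btheta^{(\kappa)})$ does not follow from convexity plus monotone $g(\btheta^{(k)})$ alone; one really needs the average-of-\emph{values} bound that sits inside Schmidt et al.'s argument, which is precisely what you derive explicitly.
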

Recall that $\hat{\btheta}$ is an optimal solution to the sparse MLE problem defined in (\ref{eq:l1mrf}). From (\ref{eq:con-avg}), it is obvious that if the gradient approximation error is reasonably small, then during the early iterations of SPG, $\norm{\btheta^{(0)}-\hat{\btheta}}_2$ dominates $\frac{2}{L}\sum_{k=1}^\kappa \norm{\bdelta(\btheta^{(k)})}_2$.  Therefore, in the beginning, the convergence rate is $O(1/\kappa)$. However, as the iteration proceeds, $\frac{2}{L}\sum_{k=1}^\kappa \norm{\bdelta(\btheta^{(k)})}_2$ accumulates and hence in practice SPG can only maintain a convergence rate of $O(1/\kappa)$ up to some noise level that is closely related to $\bdelta(\btheta^{(k)})$. Therefore, $\bdelta(\btheta^{(k)})$ plays an importance role in the performance of SPG.

Notice that Theorem~\ref{thm:convergence-rate} offers convergence analysis of the objective function value in the last iteration $g (\btheta^{(\kappa)})$. This result is different from the existing non-asymptotic analysis on $g(\sum_{k=1}^{\kappa} \btheta^{(k)}/{\kappa})$, the objective function evaluated on the average of all the visited solutions \citep{schmidt2011convergence, honorio2012convergence, atchade2014stochastic}. Theorem~\ref{thm:convergence-rate} is more practical than previous analysis, since $\sum_{k=1}^{\kappa} \btheta^{(k)}/{\kappa}$ is a dense parameterization not applicable to structure learning.

According to the analysis above, we need to control $\bdelta(\btheta^{(k)})$ in each iteration to achieve a decreasing and $O\left( \frac{1}{k} \right)$-converging objective function value sequence. Therefore, we focus on checkable bounds for gradient approximation error in Section~\ref{sec:bound}.

\section{MAIN RESULTS}
\label{sec:bound}
In this section, we derive an asymptotic and a non-asymptotic bound to control the gradient approximation error $\bdelta(\btheta^{(k)})$ in each iteration. For this purpose, we consider an arbitrary $\btheta$, and perform gradient approximation via Gibbs-$\tau$ using Algorithm~\ref{alg:grad}, given an initial value for the Gibbs sampling algorithm, $\tilde{\bx}_0$. By bounding $\bdelta(\btheta)$, we can apply the same technique to address $\bdelta(\btheta^{(k)})$. 

We first provide a bound for the magnitude of the conditional expectation of $\bdelta(\btheta)$, $\Norm{\mathbb{E}_{\tilde{\bx}_\tau} [\bdelta(\btheta)\mid \tilde{\bx}_0]}_2$, in Section~\ref{sec:asy-bound}. Based on this result, we further draw a non-asymptotic bound for the magnitude of the gradient approximation error, $\Norm{\bdelta(\btheta)}_2$, in Section~\ref{sec:non-asy-bound}. Both results are \emph{verifiable} in each iteration. 

For the derivation of the conclusions, we will focus on binary pairwise Markov networks (BPMNs). Let $\bx\in\curly{0,1}^p$ and $\btheta$ be given, a binary pairwise Markov network \citep{hofling2009estimation, geng2017efficient} is defined as: 
\begin{equation}
\label{eq:bpmn}
\mathrm{P}_{\btheta} (\bx) = \frac{1}{Z(\btheta)} \exp \left(\sum_{i=1}^p\sum_{j\ge i}^p\theta_{ij}x_ix_j\right),
\end{equation}
where $Z(\btheta)=\exp(A(\btheta))$ is the partition function. $\theta_{ij}$ is a component of $\btheta$ that represents the strength of conditional dependence between $X_i$ and $X_j$. 

\subsection{An Asymptotic Bound}
\label{sec:asy-bound}
We first consider the magnitude of the conditional expectation of $\bdelta(\btheta)$ with respect to $\tilde{\bx}_\tau$, $\Norm{\mathbb{E}_{\tilde{\bx}_\tau} [\bdelta(\btheta)\mid \tilde{\bx}_0]}_2$. To this end, we define $\mathbf{U}$ a $p\times p$ \emph{computable} matrix that is related to $\btheta$ and the type of MRF in question. $U_{ij}$, the component in the $i^{th}$ row and the $j^{th}$ column of $\mathbf{U}$, is defined as follows:
\begin{align}
\label{eq:U}
U_{ij} =   \frac{\lvert \exp\left(-\xi_{ij}\right) - 1 \rvert b^*}{\left(1+b^*\exp\left(-\xi_{ij}\right)\right)(1+b^*)}, 
\end{align}
where
\begin{gather*}
b^* =  \max\curly{r,\min\curly{s,\exp\left(\frac{\xi_{ij}}{2}\right)}},\\
s =  \exp \left( - \xi_{ii} -\sum_{k\ne i,k\ne j}\xi_{ik} \max\curly{-\mathrm{sgn}(\xi_{ik}),0}  \right)，\\
r = \exp \left(- \theta_{ii} - \sum_{k\ne i,k\ne j} \xi_{i, k} \max\curly{\mathrm{sgn}(\xi_{i, k}),0}   \right),
\end{gather*}
and $\mathrm{sgn}(\xi_{ik})$ is the sign function evaluated on $\xi_{ij} = \theta_{\min\curly{i,j},\max\curly{i,j}}$.

We then define $\mathbf{B}_i$ as a $p\times p$ identity matrix except that its $i^{th}$ row is replaced by the $i^{th}$ row of $\mathbf{U}$, with $i\in\curly{1,2,\cdots,p}$. We further define 
\begin{equation*}
\mathbf{B} = \mathbf{B}_p \mathbf{B}_{p-1} \mathbf{B}_{p-2} \cdots \mathbf{B}_i \cdots\mathbf{B}_1
\end{equation*}
and the grand sum $\mathscr{G}(\mathbf{B}) = \sum_{i=1}^p\sum_{j=1}^p  B_{ij}$, where $B_{ij}$ is the entry in the $i^{th}$ row and the $j^{th}$ column of $\mathbf{B}$. With the definitions above, $\Norm{\mathbb{E}_{\tilde{\bx}_\tau} [\bdelta(\btheta)\mid \tilde{\bx}_0]}_2$ can be upper bounded by Theorem~\ref{thm:exp-err-bound}.

\begin{theorem}\label{thm:exp-err-bound}\normalfont
Let $\tilde{\bx}_\tau$ be the sample generated after running Gibbs sampling for $\tau$ steps (Gibbs-$\tau$) under the parameterization $\btheta$ initialized by $\tilde{\bx}_0\in\curly{0,1}^p$; then with $m$ denoting the size of sufficient statistics, the following inequality holds:
\begin{equation}
\label{eq:exp-err-bound}
\Norm{\mathbb{E}_{\tilde{\bx}_\tau} [\bdelta(\btheta)\mid \tilde{\bx}_0]}_2 \le  2\sqrt{m} \mathscr{G}(\mathbf{B}^\tau),
\end{equation}
where $\mathbf{B}^\tau$ represents the $\tau^{th}$ power of $\mathbf{B}$.
\end{theorem}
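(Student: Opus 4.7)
The plan is to reduce the bound on the conditional expectation of the gradient approximation error to a bound on a total-variation distance between Gibbs-$\tau$ and the stationary distribution $\mathrm{P}_\btheta$, and then to unwind that total-variation bound through the structure of a Gibbs sweep, using $\mathbf{U}$ as a Dobrushin-type single-coordinate influence matrix. Since the $q$ chains in Algorithm~\ref{alg:grad} are exchangeable and enter $\bdelta(\btheta)$ only through averages of $\bm{\psi}$, and $\mathbb{E}_\mathbb{X}\bm{\psi}(\bx)$ is deterministic, the first step is to write
\begin{equation*}
\mathbb{E}_{\tilde{\bx}_\tau}[\bdelta(\btheta)\mid \tilde{\bx}_0]
= \mathbb{E}[\bm{\psi}(\tilde{\bx}_\tau)\mid \tilde{\bx}_0]-\mathbb{E}_\btheta\bm{\psi}(\bx),
\end{equation*}
so the problem is purely about how close $\tilde{\bx}_\tau$ is (in distribution) to the BPMN equilibrium.

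Next I would bound this componentwise. For the BPMN in (\ref{eq:bpmn}) the sufficient statistics are products $x_ix_j\in\{0,1\}$, so each $|\psi_j|\le 1$; by the dual form of total variation, $|\mathbb{E}[\psi_j(\tilde{\bx}_\tau)\mid \tilde{\bx}_0]-\mathbb{E}_\btheta\psi_j|\le 2\,d_\mathrm{TV}\!\bigl(\mathcal{L}(\tilde{\bx}_\tau\mid \tilde{\bx}_0),\mathrm{P}_\btheta\bigr)$. Stacking over the $m$ coordinates and converting componentwise bounds to an $\ell_2$ bound gives the factor $2\sqrt{m}$. The remaining task is to show $d_\mathrm{TV}\!\bigl(\mathcal{L}(\tilde{\bx}_\tau\mid \tilde{\bx}_0),\mathrm{P}_\btheta\bigr)\le \mathscr{G}(\mathbf{B}^\tau)$, or more precisely that the grand sum of $\mathbf{B}^\tau$ dominates the right cumulative per-coordinate TV after $\tau$ sweeps.

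The core technical step is to prove the single-update influence bound: for a Gibbs update of coordinate $i$, the variational distance induced between two copies of the chain that differ only at coordinate $j$ is at most $U_{ij}$, with $U_{ij}$ as in (\ref{eq:U}). Writing $\mathrm{P}_\btheta(X_i\mid \bx_{-i})$ as a logistic form in the linear score $\sum_k \xi_{ik}x_k$, the difference $|\mathrm{P}_\btheta(X_i=1\mid \bx_{-i})-\mathrm{P}_\btheta(X_i=1\mid \bx'_{-i})|$ when $\bx,\bx'$ differ only at $j$ reduces to $|\sigma(\eta)-\sigma(\eta-\xi_{ij})|$, and then one maximizes over the remaining $x_k$'s, $k\ne i,j$. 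Pushing the worst signs of the other coordinates into the envelope produces the clipped quantity $b^*\in[r,s]$, and the closed-form maximum of the logistic difference at $b^*$ gives exactly $U_{ij}$. This is the step I expect to be most delicate, as one must verify that $b^*$ is the true unconstrained maximizer after accounting for the $\{0,1\}$ restriction on the frozen coordinates.

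Given the one-coordinate bound, I would propagate it through a full sweep. Coupling two chains that start at arbitrary $\bx_0,\bx_0'$, after updating coordinate $i$ the coordinatewise discrepancy vector (where the $k$th entry is the probability the chains differ at $k$) transforms by left-multiplication with $\mathbf{B}_i$: the off-diagonal row $i$ inherits the $U_{ij}$ bounds, while all other coordinates are untouched. Composing $\mathbf{B}_p\cdots\mathbf{B}_1=\mathbf{B}$ over one sweep and iterating gives $\mathbf{B}^\tau$ after $\tau$ sweeps. Summing the resulting discrepancy vector (which upper bounds the coupling probability, hence the TV distance) over all coordinates and maximizing over starting states yields $\mathscr{G}(\mathbf{B}^\tau)$. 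Combining this with the $2\sqrt{m}$ factor from the componentwise-to-$\ell_2$ conversion produces (\ref{eq:exp-err-bound}). The one nontrivial bookkeeping check along the way is that $\mathscr{G}(\mathbf{B}^\tau)$ legitimately dominates a sum over destination coordinates of a max over origin coordinates; this follows from all $B_{ij}\ge 0$ and from bounding the starting discrepancy vector entrywise by $1$.
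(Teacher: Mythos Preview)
Your proposal is correct and follows essentially the same route as the paper: reduce $\Norm{\mathbb{E}_{\tilde{\bx}_\tau}[\bdelta(\btheta)\mid\tilde{\bx}_0]}_2$ to $2\sqrt{m}$ times a total-variation distance using $|\psi_j|\le 1$ (the paper's Lemma~\ref{lem:exp-err}), verify that $U_{ij}$ upper bounds the Dobrushin influence $C_{ij}$ for the BPMN conditionals via the logistic computation (the paper's Lemma~\ref{lem:u-bpmn}), and then propagate through the sweep via the $\mathbf{B}_i$ matrices to get $\mathscr{G}(\mathbf{B}^\tau)$. The only cosmetic difference is that the paper invokes Theorem~9 of \cite{mitliagkas2017improving} for the last step, whereas you sketch the underlying coupling argument directly.
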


In Theorem~\ref{thm:exp-err-bound}, the bound provided is not only observable in each iteration, but also efficient to compute, offering a convenient method to inspect the quality of the gradient approximation. When the spectral norm of $\mathbf{U}$ is less than $1$, the left hand side of (\ref{eq:exp-err-bound}) will converge to 0. Thus, by increasing $\tau$, we can decrease $\Norm{\mathbb{E}_{\tilde{\bx}_\tau}[\bdelta(\btheta)\mid \tilde{\bx}_0]}_2$ to an arbitrarily small value. 

Theorem~\ref{thm:exp-err-bound} is derived by bounding the influence of a variable on another variable in $\bX$ (i.e., the Dobrushin influence defined in \ref{def:d-influence-matrix}) with $\mathbf{U}$. Furthermore, $\mathbf{U}$ defined in (\ref{eq:U}) is a sharp bound of the Dobrushin influence whenever $b^* \neq \exp\left(\frac{\xi_{ij}}{2}\right)$, explaining why (\ref{eq:exp-err-bound}) using the definition of $\mathbf{U}$ is tight enough for practical applications. 

\subsection{A Non-Asymptotic Bound}
\label{sec:non-asy-bound}
In order to provide a non-asymptotic guarantee for the quality of the gradient approximation, we need to concentrate $\Norm{\bdelta(\btheta)}_2$ around $\Norm{\mathbb{E}_{\tilde{\bx}_\tau} [\bdelta(\btheta) \mid \tilde{\bx}_0]}_2$. Let $q$ defined in Section~\ref{sec:SPG} be given. Then, $q$ trials of Gibbs sampling are run, resulting in $q$ samples, $\{\tilde{\bx}_\tau^{(1)}, \tilde{\bx}_\tau^{(2)}, \cdots, \tilde{\bx}_\tau^{(q)}\}$. That is to say, for each sufficient statistic, $\psi_j(\btheta)$, with $j \in \{1, 2, \cdots, m\}$, we have $q$ samples, $\curly{\psi_j^{(1)}(\btheta) , \psi_j^{(2)}(\btheta) ,\cdots, \psi_j^{(q)}(\btheta)}$. Defining the sample variance of the corresponding sufficient statistics as $V_{\psi_j}$, we have Theorem~\ref{thm:sample-err-bound} to provide a non-asymptotic bound for $\Norm{\bdelta(\btheta)}_2$: 

\begin{theorem}\normalfont
\label{thm:sample-err-bound}
Let $\btheta$, $q$, and an arbitrary $\tilde{\bx}_0\in\curly{0,1}^p$ be given. Let $m$ represent the dimension of $\btheta$ and $\Norm{\bdelta(\btheta)}_2$ represent the magnitude of the gradient approximation error by running $q$ trials of Gibbs-$\tau$ initialized by  $\tilde{\bx}_0$. Compute $\mathbf{B}$ according to Section~\ref{sec:asy-bound} and choose $\epsilon_j>0$. Then, with probability at least $1 - 2 \sum_{j = 1}^m \beta_j$, where $\beta_j > 0$, $j\in\curly{1,2,\cdots,m}$,
\begin{equation}
\begin{gathered}
\label{eq:sample-err-bound}
\Norm{ \bdelta(\btheta)}_2 \le 2\sqrt{m}\left(\mathscr{G}(\mathbf{B}^\tau)+\sqrt{\frac{\sum_{ j = 1}^m \epsilon_j^2}{4m}}\right),
\end{gathered}
\end{equation}
with $\beta_j$ satisfying
\begin{equation}
\begin{gathered}
\epsilon_j =2 \left(\sqrt{\frac{V_{\psi_j}\ln 2/\beta_j } {2q}} + \frac{7\ln2/\beta_j}{3(q-1)}\right). 
\end{gathered}
\end{equation}
\end{theorem}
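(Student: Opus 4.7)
The plan is to decompose the gradient approximation error into a bias term (which is $\tilde{\bx}_0$-measurable once the $q$ chains are initialized) and a zero-mean Monte Carlo fluctuation, bound each piece separately, and combine via the triangle inequality. Conditioning on the common initial state $\tilde{\bx}_0$, the starting point is
\begin{equation*}
\Norm{\bdelta(\btheta)}_2 \le \Norm{\mathbb{E}_{\tilde{\bx}_\tau}[\bdelta(\btheta)\mid \tilde{\bx}_0]}_2 + \Norm{\bdelta(\btheta)-\mathbb{E}_{\tilde{\bx}_\tau}[\bdelta(\btheta)\mid \tilde{\bx}_0]}_2.
\end{equation*}
Theorem~\ref{thm:exp-err-bound} already controls the first summand by $2\sqrt{m}\,\mathscr{G}(\mathbf{B}^\tau)$, which is exactly the first term inside the parentheses of~(\ref{eq:sample-err-bound}); everything then reduces to a high-probability bound on the centered sample-mean fluctuation.

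Next I would treat that fluctuation coordinate by coordinate. Because Algorithm~\ref{alg:grad} launches $q$ independent Gibbs chains all started at $\tilde{\bx}_0$, the variables $\psi_j^{(1)}(\btheta),\ldots,\psi_j^{(q)}(\btheta)$ are i.i.d.\ under the conditional law given $\tilde{\bx}_0$, and in a BPMN each feature has the form $x_ix_j\in\{0,1\}\subset[0,1]$. The empirical Bernstein inequality of Maurer and Pontil then yields, with probability at least $1-2\beta_j$,
\begin{equation*}
\abs{(\bdelta(\btheta))_j-\mathbb{E}[(\bdelta(\btheta))_j\mid \tilde{\bx}_0]}\le \sqrt{\frac{2V_{\psi_j}\ln(2/\beta_j)}{q}}+\frac{7\ln(2/\beta_j)}{3(q-1)}\le \epsilon_j.
\end{equation*}
A union bound over $j=1,\ldots,m$ renders all $m$ coordinate inequalities simultaneous with probability at least $1-2\sum_{j=1}^m\beta_j$. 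Passing to the $\ell_2$-norm via $\Norm{\bm{v}}_2\le \sqrt{\sum_j v_j^2}$ and rewriting $\sqrt{\sum_j\epsilon_j^2}=2\sqrt{m}\sqrt{\sum_j\epsilon_j^2/(4m)}$, then adding the bias bound from Theorem~\ref{thm:exp-err-bound}, delivers exactly~(\ref{eq:sample-err-bound}).

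The main technical point is not the arithmetic but the choice of concentration inequality. A Hoeffding-type bound would give the same shape, but would depend on a worst-case (unknown) variance proxy, so the right-hand side would no longer be verifiable at runtime; using the empirical Bernstein form of Maurer and Pontil keeps every quantity in~(\ref{eq:sample-err-bound}) computable from the $q$ drawn samples together with $\btheta$. A secondary subtlety is that the conditioning on $\tilde{\bx}_0$ is precisely what decouples the Markov-chain bias, which Theorem~\ref{thm:exp-err-bound} controls through the Dobrushin-type matrix $\mathbf{B}$, from the cross-chain Monte Carlo noise, which is genuinely i.i.d.\ across the $q$ chains; without this split, neither ingredient would apply directly.
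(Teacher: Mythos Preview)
Your proposal is correct and follows essentially the same route as the paper's proof: triangle-inequality split into bias plus Monte Carlo fluctuation, Theorem~\ref{thm:exp-err-bound} for the bias, coordinatewise empirical Bernstein (Maurer--Pontil) with a union bound for the fluctuation, then the $\ell_2$ aggregation. The only cosmetic difference is that the paper first rescales $\delta_j\in[-1,1]$ to $Z_j=(1+\delta_j)/2\in[0,1]$ before invoking Maurer--Pontil (so that $V_{Z_j}=V_{\psi_j}/4$ and the bound comes out equal to $\epsilon_j$), whereas you apply the inequality directly to $\psi_j\in[0,1]$ and get the slightly tighter intermediate bound $\sqrt{2V_{\psi_j}\ln(2/\beta_j)/q}+7\ln(2/\beta_j)/(3(q-1))$, which you then correctly relax to $\epsilon_j$.
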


Notice that the bound in Theorem~\ref{thm:sample-err-bound} is easily \emph{checkable}, i.e., given $\tau$, $q$, $V_{\psi_j}$'s, and $\btheta$, we can determine a bound for $\Norm{ \bdelta(\btheta)}_2$ that holds with high probability. Furthermore, Theorem~\ref{thm:sample-err-bound} provides the sample complexity needed for gradient estimation. Specifically, given small enough $\beta_j$'s, if we let 
\begin{equation*}
\mathscr{G}(\mathbf{B}^\tau) = \sqrt{\sum_{ j = 1}^m \epsilon_j^2/4m},
\end{equation*}
we can show that
\begin{equation*} 
 2\sqrt{m}\left(\mathscr{G}(\mathbf{B}^\tau)+\sqrt{\sum_{ j = 1}^m \epsilon_j^2/4m}\right)= O \left (\frac{1}{q}\right ).
\end{equation*} 
That is to say, by assuming that $\mathscr{G}(\mathbf{B}^\tau)$ and  $\sqrt{\sum_{ j = 1}^m \epsilon_j^2/4m}$ share the same scale, the upper bound of the gradient approximation error converges to 0 as $q$ increases. Moreover, we include sample variance, $V_{\psi_j}$'s, in (\ref{eq:sample-err-bound}). This is because the information provided by sample variance leads to an improved data dependent bound.  

\section{PROOF SKETCH OF MAIN RESULTS}
\label{sec:proof-sketch}
As mentioned in Section~\ref{sec:non-asy-bound}, the non-asymptotic result in  Theorem~\ref{thm:sample-err-bound} is derived from the asymptotic bound in Theorem~\ref{thm:exp-err-bound} by concentration inequalities, we therefore only highlight the proof of Theorem~\ref{thm:exp-err-bound} in this section, and defer other technical results to Supplements. Specifically, the proof of Theorem~\ref{thm:exp-err-bound} is divided into two parts: bounding $\Norm{\mathbb{E}_{\tilde{\bx}_\tau} [\bdelta(\btheta)\mid \tilde{\bx}_0]}_2$ by the total variation distance (Section~\ref{sec:bound-expectation-tv}) and bounding the total variation distance (Section~\ref{sec:bound-TV}). 

\subsection{Bounding \texorpdfstring{$\Norm{\mathbb{E}_{\tilde{\bx}_\tau} [\bdelta(\btheta)\mid \tilde{\bx}_0]}_2$}{TEXT} by the Total Variation Distance}
\label{sec:bound-expectation-tv}

To quantify $\Norm{\mathbb{E}_{\tilde{\bx}_\tau} [\bdelta(\btheta)\mid \tilde{\bx}_0]}_2$, we first introduce the concept of total variation distance \citep{levin2009markov} that measures the distance between two distributions over $\curly{0,1}^p$.
\begin{definition}\normalfont
Let $u(\bx)$, and $v(\bx)$ be two probability distributions of $\bx\in\curly{0,1}^p$. Then the total variation distance between $u(\bx)$ and $v(\bx)$ is given as:
\begin{equation*}
\Norm{u(\bx)-v(\bx)}_{\text{TV}} = \frac{1}{2} \hspace{-2mm} \sum_{\bx\in\curly{0,1}^p} \hspace{-2mm} \lvert u(\bx)-v(\bx) \rvert.
\end{equation*}
\end{definition}

With the definition above, $\Norm{\mathbb{E}_{\tilde{\bx}_\tau} [\bdelta(\btheta)\mid \tilde{\bx}_0]}_2$ can be upper bounded by the total variation distance between two distributions ($\mathrm{P}_\tau(\bx\mid\tilde{\bx}_0)$ and $\mathrm{P}_{\btheta}(\bx)$) using the following lemma:

\begin{lemma}\label{lem:exp-err}\normalfont
Let $\tilde{\bx}_\tau$ be the sample generated after running Gibbs sampling for $\tau$ steps (Gibbs-$\tau$) under the parameterization $\btheta$ initialized by $\tilde{\bx}_0\in\curly{0,1}^p$, then the following is true:
\begin{equation*}
\Norm{\mathbb{E}_{\tilde{\bx}_\tau} [\bdelta(\btheta)\mid \tilde{\bx}_0]}_2 \le  2\sqrt{m}\Norm{\mathrm{P}_\tau(\bx\mid\tilde{\bx}_0) -\mathrm{P}_{\btheta}(\bx)}_{\text{TV}}.
\end{equation*}
\end{lemma}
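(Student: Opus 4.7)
The proof plan is to unfold the definition of $\bdelta(\btheta)$, realize that after conditioning on $\tilde{\bx}_0$ only the expectation under the $\tau$-step Gibbs distribution is random, and then bound each coordinate of the resulting vector by the total variation distance using the fact that the sufficient statistics of a BPMN are bounded in $[0,1]$.

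Concretely, I would first recall that $\bdelta(\btheta) = \mathbb{E}_{\mathbb{S}}\bm{\psi}(\bx) - \mathbb{E}_{\btheta}\bm{\psi}(\bx)$ (the empirical term $\mathbb{E}_{\mathbb{X}}\bm{\psi}(\bx)$ cancels between $\bm{\Delta}f$ and $\grad f$), and that $\mathbb{E}_{\btheta}\bm{\psi}(\bx)$ is deterministic once $\btheta$ is fixed. By linearity of expectation, independence of the $q$ chains, and the definition of $\mathrm{P}_\tau(\bx\mid\tilde{\bx}_0)$ as the distribution of the Gibbs sample after $\tau$ steps starting at $\tilde{\bx}_0$, I would obtain
\begin{equation*}
\mathbb{E}_{\tilde{\bx}_\tau}[\bdelta(\btheta)\mid \tilde{\bx}_0] \;=\; \sum_{\bx\in\curly{0,1}^p}\bigl(\mathrm{P}_\tau(\bx\mid\tilde{\bx}_0) - \mathrm{P}_{\btheta}(\bx)\bigr)\bm{\psi}(\bx).
\end{equation*}

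Next, I would bound the $j^{th}$ component coordinatewise. Since for a BPMN every sufficient statistic has the form $x_i x_j$ with $x_i,x_j\in\curly{0,1}$, we have $\abs{\psi_j(\bx)}\le 1$. Applying the triangle inequality followed by the definition of total variation distance yields
\begin{equation*}
\Bigl|\sum_{\bx}\bigl(\mathrm{P}_\tau(\bx\mid\tilde{\bx}_0) - \mathrm{P}_{\btheta}(\bx)\bigr)\psi_j(\bx)\Bigr| \;\le\; \sum_{\bx}\bigl|\mathrm{P}_\tau(\bx\mid\tilde{\bx}_0) - \mathrm{P}_{\btheta}(\bx)\bigr| \;=\; 2\Norm{\mathrm{P}_\tau(\bx\mid\tilde{\bx}_0)-\mathrm{P}_{\btheta}(\bx)}_{\text{TV}},
\end{equation*}
so every coordinate of the conditional-expectation vector is uniformly bounded by $2\Norm{\mathrm{P}_\tau(\bx\mid\tilde{\bx}_0)-\mathrm{P}_{\btheta}(\bx)}_{\text{TV}}$.

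Finally, I would convert this $\ell_\infty$ bound on an $m$-dimensional vector into the desired $\ell_2$ bound via $\Norm{\cdot}_2\le\sqrt{m}\Norm{\cdot}_\infty$, which directly produces the factor $2\sqrt{m}$ appearing in the lemma. None of the steps is genuinely hard; the only subtlety is bookkeeping the conditional expectation so that the randomness of $\mathbb{S}$ collapses to the single marginal distribution $\mathrm{P}_\tau(\cdot\mid\tilde{\bx}_0)$, and being explicit about boundedness of sufficient statistics (which is why the derivation is restricted to binary pairwise MRFs). This preparation is exactly what is needed to reduce Theorem~\ref{thm:exp-err-bound} to the separate task, handled in Section~\ref{sec:bound-TV}, of bounding $\Norm{\mathrm{P}_\tau(\bx\mid\tilde{\bx}_0)-\mathrm{P}_{\btheta}(\bx)}_{\text{TV}}$ by $\mathscr{G}(\mathbf{B}^\tau)$ through Dobrushin-influence contraction.
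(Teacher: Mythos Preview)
Your proposal is correct and follows essentially the same route as the paper: both compute $\mathbb{E}_{\tilde{\bx}_\tau}[\bdelta(\btheta)\mid\tilde{\bx}_0]=\sum_{\bx}(\mathrm{P}_\tau-\mathrm{P}_{\btheta})\bm{\psi}(\bx)$, bound each coordinate by $2\Norm{\mathrm{P}_\tau-\mathrm{P}_{\btheta}}_{\text{TV}}$ using $\abs{\psi_j(\bx)}\le 1$, and then pass from the $\ell_\infty$ bound to the $\ell_2$ bound via the factor $\sqrt{m}$. The only cosmetic difference is that the paper rewrites $\delta_j(\btheta)=\nabla_j\log\mathrm{P}_{\btheta}(\tilde{\bx}_\tau)$ and invokes the score identity $\sum_{\bx}\mathrm{P}_{\btheta}(\bx)\nabla_j\log\mathrm{P}_{\btheta}(\bx)=0$ to obtain the cancellation, which is exactly the subtraction of $\mathbb{E}_{\btheta}\bm{\psi}(\bx)$ you perform directly.
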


With Lemma~\ref{lem:exp-err}, bounding $\Norm{\mathbb{E}_{\tilde{\bx}_\tau} [\bdelta(\btheta)\mid \tilde{\bx}_0]}_2$ can be achieved by bounding the total variation distance $\Norm{\mathrm{P}_\tau(\bx\mid\tilde{\bx}_0) -\mathrm{P}_{\btheta}(\bx)}_{\text{TV}}$. Recent advances in the quality control of Gibbs samplers offer us \emph{verifiable} upper bounds for $\Norm{\mathrm{P}_\tau(\bx\mid\tilde{\bx}_0) -\mathrm{P}_{\btheta}(\bx)}_{\text{TV}}$ on the learning of a variety of MRFs \citep{mitliagkas2017improving}. However, they can not be applied to BPMNs because of the positivity constraint on parameters. We describe these next. 

\subsection{Bounding  \texorpdfstring{$\Norm{\mathrm{P}_\tau(\bx\mid\tilde{\bx}_0) -\mathrm{P}_{\btheta}(\bx)}_{\text{TV}}$}{TEXT}}

\label{sec:bound-TV}

Now we generalize the analysis in \cite{mitliagkas2017improving} to BPMNs without constraints on the sign of parameters by introducing the definition of the Dobrushin influence matrix and a technical lemma.
\begin{definition}[Dobrushin influence matrix]\normalfont
\label{def:d-influence-matrix}
The Dobrushin influence matrix of $\mathrm{P}_{\btheta}(\bx)$ is a $p\times p$ matrix $\mathbf{C}$ with its component in the $i^{th}$ row and the $j^{th}$ column, $C_{ij}$, representing the influence of $X_j$ on $X_i$ given as:
\begin{equation*}
C_{ij} = \max_{(\bX,\bY) \in N_j} \Norm{\mathrm{P}_{\btheta}(X_i\mid\bX_{-i})-\mathrm{P}_{\btheta}(Y_i\mid\bY_{-i})}_{\text{TV}},
\end{equation*}
where $(\bX,\bY) \in N_j$ represents $X_l=Y_l$ for all $l\ne j$.
\end{definition}

\begin{lemma}\normalfont
\label{lem:u-bpmn}
Let $\mathrm{P}_{\btheta} (\bx)$ represent a binary pairwise Markov network defined in (\ref{eq:bpmn}) that is parameterized by $\btheta$. An upper bound of the total influence matrix is given by $\mathbf{U}$ defined in Section~\ref{sec:asy-bound}.
\end{lemma}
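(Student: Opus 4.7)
}

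The plan is to reduce the computation of $C_{ij}$ to a scalar optimization problem by exploiting the Bernoulli structure of the conditionals in a BPMN, and then to identify the three cases that produce the clamped quantity $b^{*}$ in (\ref{eq:U}).

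First I would write out the conditional distribution induced by (\ref{eq:bpmn}): for any $\bx_{-i}\in\{0,1\}^{p-1}$,
\begin{equation*}
\mathrm{P}_{\btheta}(X_i=1\mid\bx_{-i}) \;=\; \frac{1}{1+\exp\!\left(-\xi_{ii}-\sum_{k\neq i}\xi_{ik}x_k\right)}.
\end{equation*}
Because this is a Bernoulli distribution, the total variation distance appearing in $C_{ij}$ collapses to the absolute difference of the two success probabilities. Fixing $\bx_{-i}=\by_{-i}$ off coordinate $j$ and setting $x_j=0$, $y_j=1$ (or vice versa, by symmetry), and letting $a:=\xi_{ii}+\sum_{k\neq i,\,k\neq j}\xi_{ik}x_k$ and $b:=\exp(-a)$, a short algebraic manipulation gives
\begin{equation*}
\Norm{\mathrm{P}_{\btheta}(X_i\mid\bx_{-i})-\mathrm{P}_{\btheta}(Y_i\mid\by_{-i})}_{\text{TV}}
\;=\; \frac{\lvert \exp(-\xi_{ij})-1\rvert\, b}{(1+b)\bigl(1+b\exp(-\xi_{ij})\bigr)}=: F(b).
\end{equation*}
Hence $C_{ij}=\max_{b\in[r,s]}F(b)$, where $r$ and $s$ arise as the extreme values of $b=\exp(-a)$ as $x_k\in\{0,1\}$ for $k\neq i,j$.

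Next I would pin down the range $[r,s]$. Since $b=\exp(-a)$ is decreasing in $a$, the maximum of $b$ (call it $s$) occurs when $a$ is minimized, which happens by setting $x_k=1$ exactly for those $k\neq i,j$ with $\xi_{ik}<0$; this gives precisely $s=\exp\bigl(-\xi_{ii}-\sum_{k\neq i,j}\xi_{ik}\max\{-\operatorname{sgn}(\xi_{ik}),0\}\bigr)$. The minimum $r$ is obtained symmetrically by flipping the roles of positive and negative $\xi_{ik}$, matching the formulas in Section~\ref{sec:asy-bound}.

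Then I would analyze $F$. Differentiating $b\mapsto b/[(1+b)(1+\alpha b)]$ with $\alpha=\exp(-\xi_{ij})$, the numerator of the derivative simplifies to $1-\alpha b^{2}$, so $F$ is unimodal on $(0,\infty)$ with a unique interior maximizer $b^{\#}=\alpha^{-1/2}=\exp(\xi_{ij}/2)$. Combining unimodality with the constraint $b\in[r,s]$ yields the closed-form maximizer
\begin{equation*}
b^{*}\;=\;\max\bigl\{r,\ \min\{s,\exp(\xi_{ij}/2)\}\bigr\},
\end{equation*}
which is exactly the clamped quantity in (\ref{eq:U}); substituting $b^{*}$ into $F$ gives $C_{ij}\le U_{ij}$, and therefore $\mathbf{C}\preceq\mathbf{U}$ entrywise.

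The only slightly delicate step is the third paragraph: one must argue that the max over the combinatorial set $\{0,1\}^{p-2}$ really coincides with the max over the continuous interval $[r,s]$. This is immediate here because $F$ is continuous and the $2^{p-2}$ attainable values of $b$ include both endpoints $r,s$, and because $F$ is unimodal the constrained maximum is attained either at an endpoint or at the interior critical point, which justifies the clamping and explains the tightness remark following Theorem~\ref{thm:exp-err-bound} (sharpness holds whenever $b^{\#}\in[r,s]$, so the interior optimum is actually attained by some admissible $\bx_{-i}$ up to the discretization).
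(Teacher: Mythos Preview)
Your proof is correct and follows the same line as the paper's: compute the Bernoulli conditional, collapse the total variation distance to the scalar function $F(b)=\lvert\exp(-\xi_{ij})-1\rvert\, b/[(1+b)(1+b\exp(-\xi_{ij}))]$ with $b\in[r,s]$, and then maximize. The only difference is that the paper outsources the final maximization to Lemma~15 of \cite{mitliagkas2017improving}, whereas you carry out the one-variable calculus directly (the numerator of $F'$ is $1-\exp(-\xi_{ij})b^{2}$, so $F$ is unimodal with peak at $\exp(\xi_{ij}/2)$); your self-contained version is a welcome elaboration of what that cited lemma actually says.

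One small slip in your closing parenthetical: the sharpness direction is reversed. The bound $U_{ij}$ equals $C_{ij}$ exactly when $b^{*}\in\{r,s\}$, since the endpoints \emph{are} attainable discrete values of $b$; it is only when $b^{*}=\exp(\xi_{ij}/2)$ lies strictly inside $(r,s)$ that the continuous maximizer may fail to be realized by any $\bx_{-i}\in\{0,1\}^{p-2}$. This is what the paper means by ``exact except when $b^{*}=\exp(\xi_{ij}/2)$'' in the remark following Lemma~\ref{lem:u-bpmn}. This does not affect the proof of the lemma itself, which only claims the upper bound.
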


It should be noticed that, similar to the Theorem 12 in \cite{mitliagkas2017improving}, Lemma~\ref{lem:u-bpmn} provides an exact calculation except when $b^* =\exp\left(\frac{\xi_{i, j}}{2}\right)$. 

Therefore, we can consider the $\mathbf{U}$ defined in Section~\ref{sec:asy-bound} as an upper bound for Dobrushin influence matrix in BPMN and thus apply $\mathbf{U}$ to Theorem 9 in \cite{mitliagkas2017improving}. Then, we have
\begin{equation*}
\Norm{\mathrm{P}_\tau(\bx\mid\tilde{\bx}_0) -\mathrm{P}_{\btheta}(\bx)}_{\text{TV}} \le \mathscr{G}(\mathbf{B}^\tau),
\end{equation*}
where $\mathbf{B}^\tau$ represents the $\tau^{th}$ power of $\mathbf{B}$. Theorem~\ref{thm:exp-err-bound} follows this combined with Lemma~\ref{lem:exp-err}

\section{STRUCTURE LEARNING}
\label{sec:applications}
With the two bounds introduced in Section~\ref{sec:bound}, we can easily examine and control the quality of gradient approximation in each iteration by choosing $\tau$. In detail, we introduce a criterion for the selection of $\tau$ in each iteration. Satisfying the proposed criterion, the objective function is guaranteed to decrease asymptotically. That is to say, the difference between $g(\btheta^{(k+1)})$ and $g(\hat{\btheta})$ is asymptotically \emph{tightened}, compared with the difference between $g\left(\btheta^{(k)}\right)$ and $g(\hat{\btheta})$. Therefore, we refer to the proposed criterion as \ref{eq:criterion-aggressive}. Furthermore, using \ref{eq:criterion-aggressive} we provide an improved SPG method denoted by TAY for short.

Specifically, staring from $\tau = 1$, TAY stops increasing $\tau$ when the following bound is satisfied:  
\begin{equation}
\label{eq:criterion-aggressive}
2\sqrt{m} \mathscr{G}(\mathbf{B}^\tau) < \frac{1}{2}\norm{\bm{G}_\alpha(\btheta^{(k)})}_2.  \tag{\textsc{TAY-Criterion}}
\end{equation}

We can also derive a non-asymptotic counterpart of \ref{eq:criterion-aggressive} by combining the results of Theorem~\ref{thm:obj-decrease} and Theorem~\ref{thm:sample-err-bound}: 
\begin{equation}
\begin{gathered}
\label{eq:criterion-conservative}
0<2\sqrt{m}\left( \mathscr{G}(\mathbf{B}^\tau)+\sqrt{\frac{\sum_{ j = 1}^m \epsilon_j^2}{4m}}\right) \le \frac{1}{2}\norm{\bm{G}_\alpha(\btheta^{(k)})}_2, \\ \epsilon_j =2 \left(\sqrt{\frac{2V_{\psi_j}\ln 2/\beta_j } {4q}} + \frac{7\ln2/\beta_j}{3(q-1)}\right),
\end{gathered}
\end{equation}
where the $V_{\psi_j}$'s and $\beta_j$'s are defined in Theorem~\ref{thm:sample-err-bound}. (\ref{eq:criterion-conservative}) provides the required sample complexity, $q$, for TAY in each iteration. However, the selection of $q$ according to (\ref{eq:criterion-conservative}) is conservative, because it includes the worst-case scenario where the gradient approximation errors in any two iterations cannot offset each other.

In Section~\ref{sec:tay-criterion} and \ref{sec:tay}, we theoretically analyze the performance guarantees of \ref{eq:criterion-aggressive} and the convergence of TAY, respectively. 
\subsection{ Guarantees of \ref{eq:criterion-aggressive}}
\label{sec:tay-criterion}
The theorem below provides the performance guarantee for \ref{eq:criterion-aggressive} in each iteration. 
\begin{theorem}\normalfont
\label{thm:criterion-decrease}
Let $\btheta^{(k)}$ and $\tilde{\bx}_0$ be given. Let $q$ and $\mathbf{B}$ defined in Theorem~\ref{thm:sample-err-bound} be given. For $\btheta^{(k+1)}$ generated in Algorithm~\ref{alg:pxgb} using \ref{eq:criterion-aggressive}, the following is true:
\begin{align*}
\lim_{q \to \infty}\mathrm{P}\left(g(\btheta^{(k+1)}) < g(\btheta^{(k)}) \given 
\quad\quad\quad\quad\quad\quad\quad \right.\\
\left. \quad2\sqrt{m} \mathscr{G}(\mathbf{B}^\tau) < \frac{1}{2}\norm{\bm{G}_\alpha(\btheta^{(k)})}_2   \right) = 1.
\end{align*}
\end{theorem}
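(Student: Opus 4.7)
\textbf{Proof proposal for Theorem~\ref{thm:criterion-decrease}.}
The plan is to chain together three earlier results: Theorem~\ref{thm:obj-decrease} supplies a deterministic sufficient condition under which the objective strictly decreases in one SPG step, Theorem~\ref{thm:sample-err-bound} controls the randomness in $\bdelta(\btheta^{(k)})$ induced by the finite sample size $q$, and the TAY-Criterion controls the bias term $2\sqrt{m}\,\mathscr{G}(\mathbf{B}^\tau)$ that is tracked by Theorem~\ref{thm:exp-err-bound}. The event we want to make probable is $\{g(\btheta^{(k+1)})<g(\btheta^{(k)})\}$, and Theorem~\ref{thm:obj-decrease} tells us this event is implied by
\begin{equation*}
\norm{\bdelta(\btheta^{(k)})}_2 < \tfrac{1}{2}\norm{\bm{G}_\alpha(\btheta^{(k)})}_2.
\end{equation*}
So it suffices to show that, conditional on the TAY-Criterion, the probability of this deterministic inequality tends to $1$ as $q\to\infty$.

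First I would fix $\btheta^{(k)}$, $\tilde{\bx}_0$, and the step $\tau$ chosen by TAY, so that $\mathscr{G}(\mathbf{B}^\tau)$ and $\norm{\bm{G}_\alpha(\btheta^{(k)})}_2$ are both deterministic quantities; the only randomness left is in the $q$ Gibbs samples used to form $\bdelta(\btheta^{(k)})$. Applying Theorem~\ref{thm:sample-err-bound} with any sequence of tolerances $\beta_j=\beta_j(q)$ gives
\begin{equation*}
\Norm{\bdelta(\btheta^{(k)})}_2 \le 2\sqrt{m}\,\mathscr{G}(\mathbf{B}^\tau) + 2\sqrt{m}\cdot\sqrt{\tfrac{1}{4m}\textstyle\sum_{j=1}^m \epsilon_j^2}
\end{equation*}
with probability at least $1-2\sum_{j=1}^m\beta_j$, where each $\epsilon_j$ is the Bernstein-type quantity from Theorem~\ref{thm:sample-err-bound}. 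I would then choose $\beta_j(q)$ that tends to $0$ slowly enough (e.g., $\beta_j(q)=q^{-2}$) so that simultaneously $2\sum_j\beta_j\to 0$ and $\ln(2/\beta_j)/q\to 0$, which forces $\epsilon_j\to 0$ and hence the residual term $2\sqrt{m}\cdot\sqrt{\sum_j\epsilon_j^2/4m}\to 0$ as $q\to\infty$.

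Under the conditioning event $2\sqrt{m}\,\mathscr{G}(\mathbf{B}^\tau) < \tfrac{1}{2}\norm{\bm{G}_\alpha(\btheta^{(k)})}_2$ there is a strictly positive slack
\begin{equation*}
\eta := \tfrac{1}{2}\norm{\bm{G}_\alpha(\btheta^{(k)})}_2 - 2\sqrt{m}\,\mathscr{G}(\mathbf{B}^\tau) > 0.
\end{equation*}
Once $q$ is large enough that the residual term above is below $\eta$, the high-probability bound from Theorem~\ref{thm:sample-err-bound} yields $\norm{\bdelta(\btheta^{(k)})}_2 < \tfrac{1}{2}\norm{\bm{G}_\alpha(\btheta^{(k)})}_2$ with probability at least $1-2\sum_j\beta_j(q)$, which by our choice of $\beta_j(q)$ tends to $1$. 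Invoking the sufficient condition of Theorem~\ref{thm:obj-decrease} finishes the argument.

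The main obstacle, and the only non-routine step, is the choice of the sequence $\beta_j(q)$: it must decay fast enough to make the union bound $2\sum_j\beta_j(q)\to 0$, yet slowly enough that the $\sqrt{\ln(2/\beta_j)/q}$ and $\ln(2/\beta_j)/(q-1)$ pieces of $\epsilon_j$ still vanish. A polynomial choice such as $\beta_j(q)=q^{-2}$ accomplishes both simultaneously and should suffice. Everything else is a clean composition of previously established bounds.
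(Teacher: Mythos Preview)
Your proposal is correct and follows essentially the same skeleton as the paper: reduce to the sufficient condition of Theorem~\ref{thm:obj-decrease}, exploit the strict slack $\eta>0$ guaranteed by the TAY-Criterion, and show the stochastic part of $\norm{\bdelta(\btheta^{(k)})}_2$ vanishes as $q\to\infty$. The only difference is in the concentration step: the paper applies the weak law of large numbers directly to the per-coordinate events $E_j^q(\cdot)$ introduced in the proof of Theorem~\ref{thm:sample-err-bound}, whereas you pass through the full non-asymptotic bound of Theorem~\ref{thm:sample-err-bound} with a decaying sequence $\beta_j(q)$; both routes are valid, and the paper's is slightly more direct since it avoids having to tune $\beta_j(q)$.
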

Theorem~\ref{thm:criterion-decrease} makes a statement that the objective function value decreases with large $q$. Specifically, \ref{eq:criterion-aggressive} assumes that the upper bound of the conditional expectation of $\Norm{\bdelta(\btheta)}_2$ is small enough to satisfy the sufficient condition proven in Theorem~\ref{thm:obj-decrease}. When the number of samples $q$ is large enough, $\Norm{\bdelta(\btheta)}_2$ itself is very likely to meet the condition and hence the objective function is also likely to decrease with \ref{eq:criterion-aggressive} satisfied. 

\subsection{ Convergence of TAY}
\label{sec:tay}
Finally, based on Theorem~\ref{thm:convergence-rate} and Theorem~\ref{thm:criterion-decrease}, we derive the following theorem on the convergence of TAY.  

\begin{theorem}\normalfont
\label{thm:convergence-rate-tay}
Let $\mathcal{K}=(\btheta^{(0)},\btheta^{(1)},\btheta^{(2)},\cdots,\btheta^{(\kappa)})$ be the iterates generated by TAY. Then, with $k \in \{1, 2, \cdots, \kappa-1 \}$, the following is true:
\begin{minipage}{\linewidth}
\begin{align*}
\lim_{q \to \infty}&\mathrm{P}\left[g (\btheta^{(\kappa)}) - g(\hat{\btheta}) \le \vphantom{\frac{L}{2\kappa} \left(\norm{\btheta^{(0)}-\hat{\btheta}}_2 + \frac{2}{L}\sum_{k=1}^\kappa \norm{\bdelta(\btheta^{(k)})}_2 \right)^2} \right.\\
&\left.\frac{L}{2\kappa} \left(\norm{\btheta^{(0)}-\hat{\btheta}}_2 + \frac{2}{L}\sum_{k=1}^\kappa \norm{\bdelta(\btheta^{(k)})}_2 \right)^2
  \right] = 1,
\end{align*}
\end{minipage}
where $\hat{\btheta}$ is defined in (\ref{eq:l1mrf}).
\end{theorem}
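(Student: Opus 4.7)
The plan is to combine Theorem~\ref{thm:convergence-rate} and Theorem~\ref{thm:criterion-decrease} via a union-bound argument, converting the deterministic per-iteration convergence rate of exact proximal gradient into a probabilistic statement for TAY that holds with probability approaching 1 as $q \to \infty$.

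First, I would define, for each $k \in \{1, \ldots, \kappa-1\}$, the event
\begin{equation*}
E_k = \left\{ g(\btheta^{(k+1)}) \le g(\btheta^{(k)}) \right\},
\end{equation*}
and let $E = \bigcap_{k=1}^{\kappa-1} E_k$ denote the event that the objective is non-increasing along the entire trajectory produced by TAY. Conditional on $E$, the hypothesis of Theorem~\ref{thm:convergence-rate} is satisfied for the iterates $\mathcal{K}$, and its conclusion yields the desired bound deterministically. Consequently,
\begin{equation*}
\mathrm{P}\left[ g(\btheta^{(\kappa)}) - g(\hat{\btheta}) \le \frac{L}{2\kappa}\left(\norm{\btheta^{(0)} - \hat{\btheta}}_2 + \frac{2}{L}\sum_{k=1}^\kappa \norm{\bdelta(\btheta^{(k)})}_2\right)^2\right] \ge \mathrm{P}(E),
\end{equation*}
so it suffices to show that $\mathrm{P}(E) \to 1$ as $q \to \infty$.

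Second, I would apply Theorem~\ref{thm:criterion-decrease} iteration by iteration. By construction, TAY selects $\tau$ at each iteration so that the premise $2\sqrt{m}\mathscr{G}(\mathbf{B}^\tau) < \tfrac{1}{2}\norm{\bm{G}_\alpha(\btheta^{(k)})}_2$ of Theorem~\ref{thm:criterion-decrease} is satisfied. Hence $\lim_{q\to\infty}\mathrm{P}(E_k) = 1$ for every fixed $k$. A standard complement-plus-union-bound step gives
\begin{equation*}
\mathrm{P}(E^c) = \mathrm{P}\!\left(\bigcup_{k=1}^{\kappa-1} E_k^c\right) \le \sum_{k=1}^{\kappa-1} \mathrm{P}(E_k^c),
\end{equation*}
and because $\kappa$ is fixed while each summand vanishes as $q \to \infty$, we conclude $\lim_{q\to\infty}\mathrm{P}(E) = 1$, which chained with the previous inequality proves the theorem.

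The main obstacle will be that Theorem~\ref{thm:criterion-decrease} is stated for a \emph{given} $\btheta^{(k)}$, whereas in the TAY trajectory each $\btheta^{(k)}$ is itself random, depending on the Gibbs draws of all previous iterations; thus the events $E_k$ are neither independent nor based on a fixed parameter. I would handle this by conditioning on the natural filtration $\mathcal{F}_{k-1}$ generated by all random draws through iteration $k-1$. Given $\mathcal{F}_{k-1}$, the iterate $\btheta^{(k)}$ and the TAY-selected $\tau$ are both measurable, so Theorem~\ref{thm:criterion-decrease} applies to yield $\mathrm{P}(E_k \mid \mathcal{F}_{k-1}) \to 1$ almost surely as $q \to \infty$; taking expectations and invoking bounded convergence then delivers $\mathrm{P}(E_k) \to 1$ unconditionally, which is what the union bound above requires. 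With this refinement in place, the overall argument is simply a clean combination of the two previously established theorems.
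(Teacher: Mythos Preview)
Your proposal is correct and follows essentially the same route as the paper: reduce to showing $\mathrm{P}(E)\to 1$ for the intersection of the per-iteration decrease events, invoke Theorem~\ref{thm:criterion-decrease} (whose premise TAY guarantees by construction) to kill each $\mathrm{P}(E_k^c)$, and combine via a union bound with Theorem~\ref{thm:convergence-rate}. Your filtration discussion handling the randomness of $\btheta^{(k)}$ is in fact more careful than the paper's own argument, which applies Theorem~\ref{thm:criterion-decrease} directly without making this conditioning explicit.
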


\begin{figure*}[t]
\centering
\begin{subfigure}{0.32\textwidth}
\centering
\includegraphics[scale=0.20]{./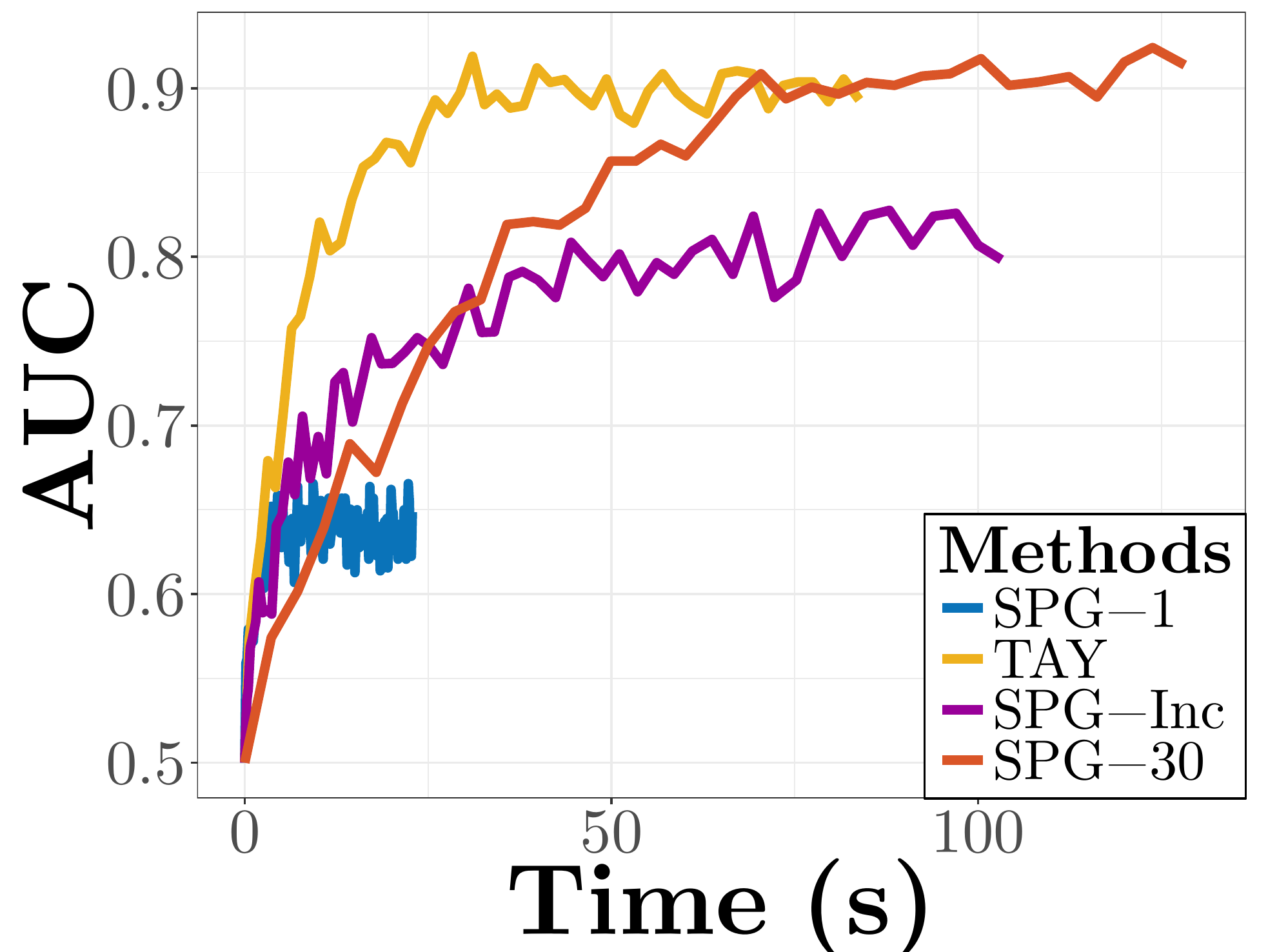}
\caption{AUC v.s. Time \\10 nodes}\label{fig:time1}
\end{subfigure}
\begin{subfigure}{0.32\textwidth}
\centering
\includegraphics[scale=0.20]{./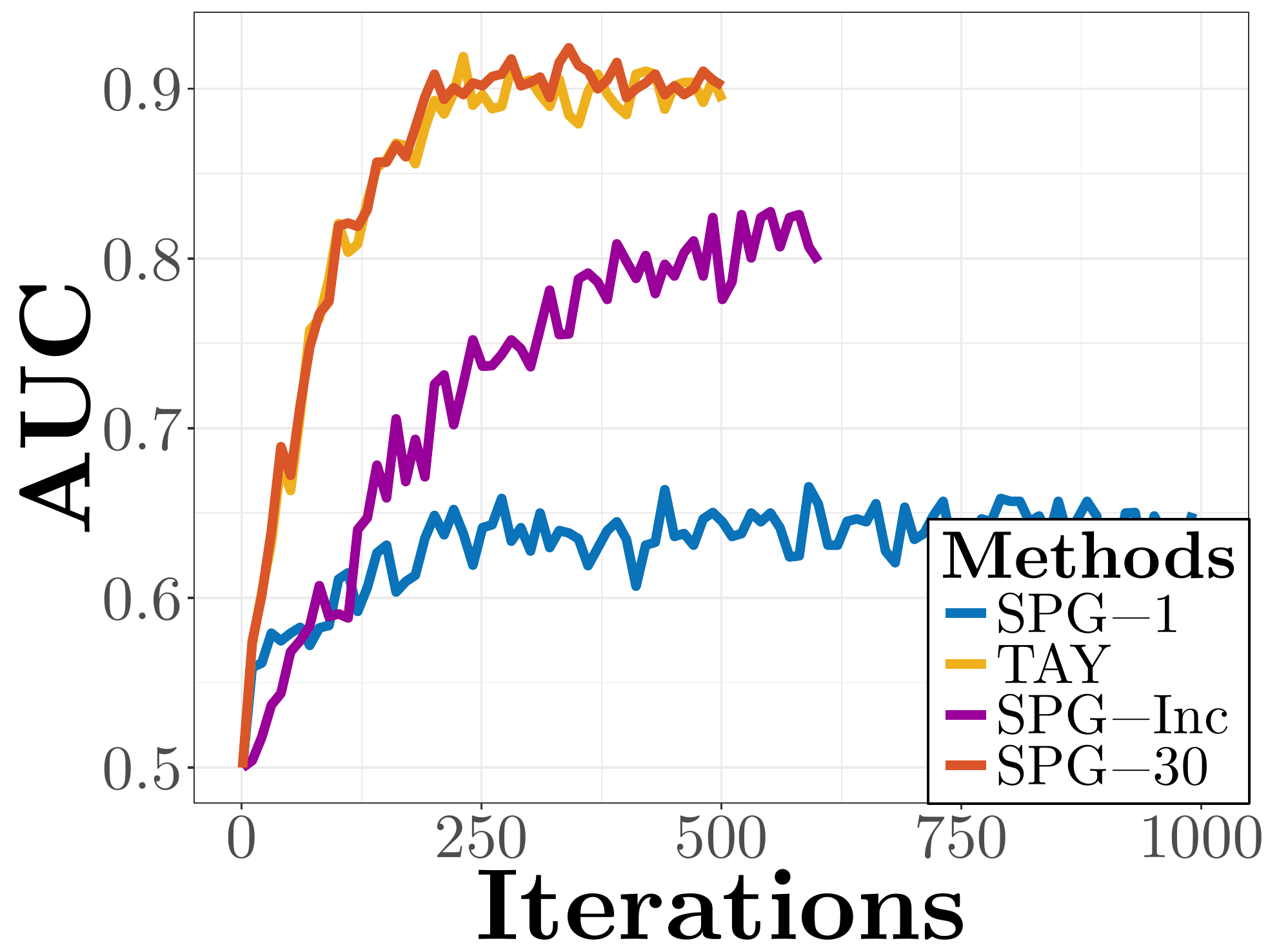}
\caption{AUC v.s. Iterations\\ 10 nodes}\label{fig:iterations1}
\end{subfigure}
\begin{subfigure}{0.32\textwidth}
\centering
\includegraphics[scale=0.20]{./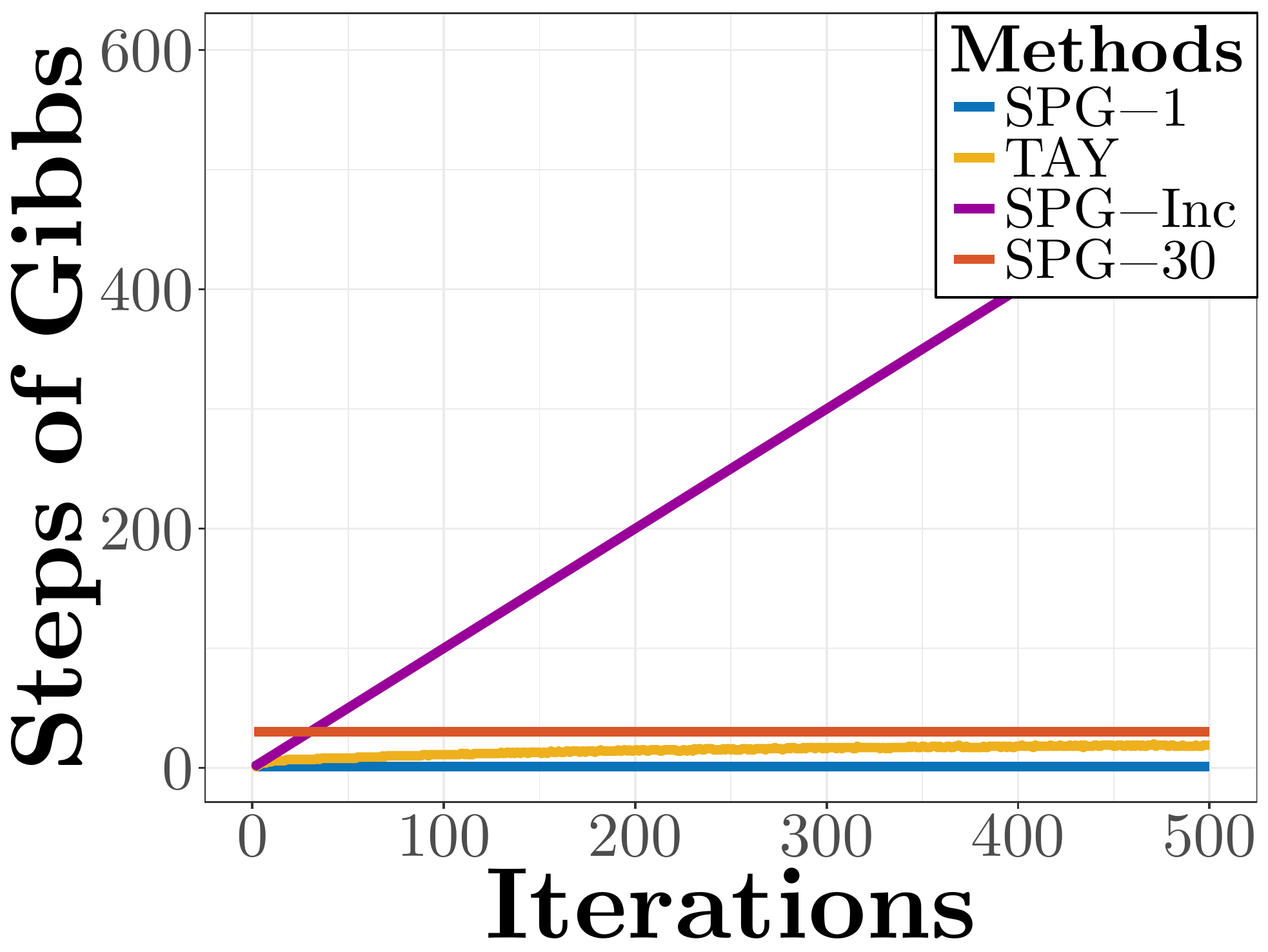}
\caption{$\tau$ v.s. Iterations \\ 10 nodes}
\end{subfigure}
\centering
\begin{subfigure}{0.32\textwidth}
\centering
\includegraphics[scale=0.20]{./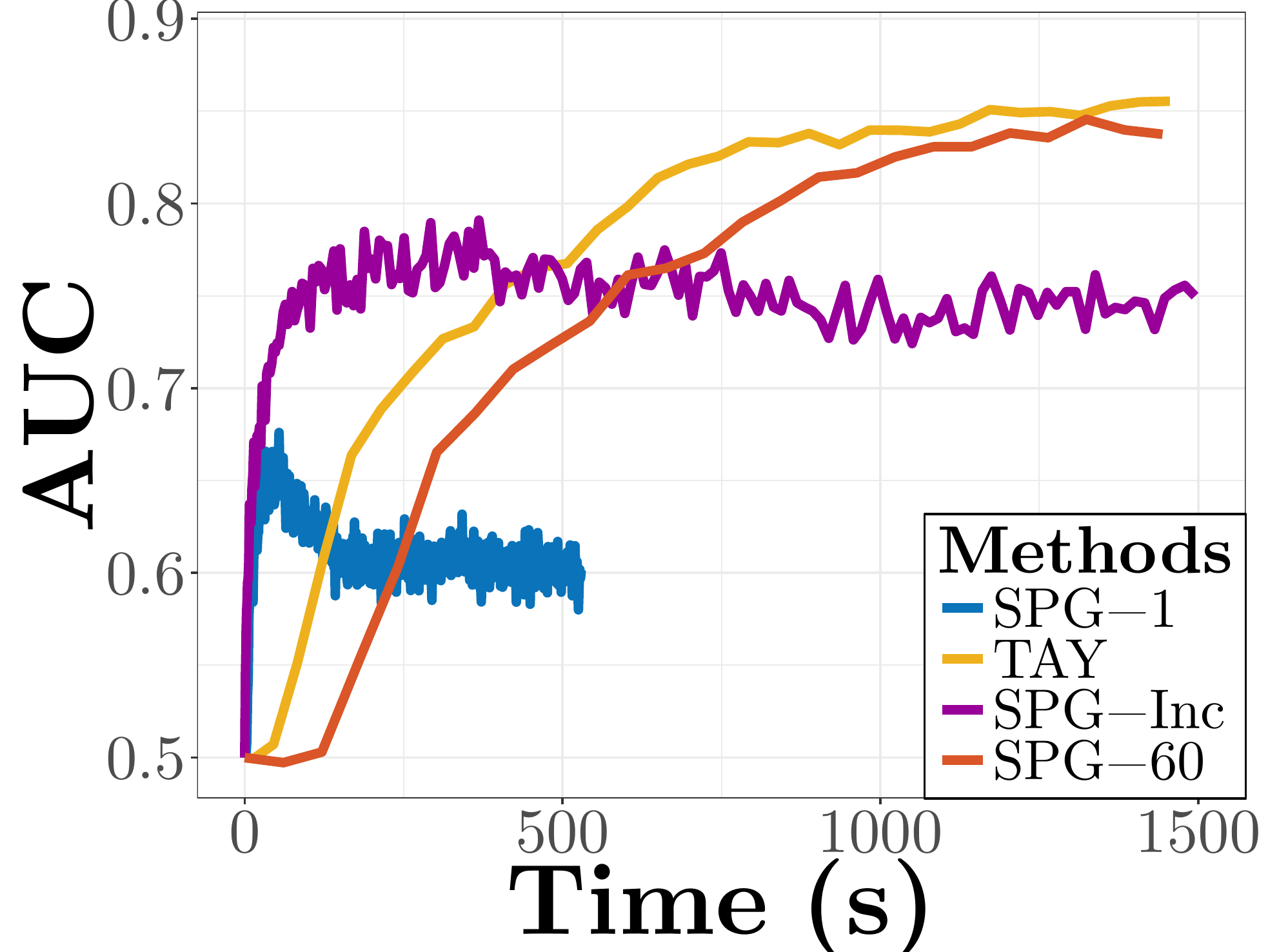}
\caption{AUC v.s. Time \\20 nodes}\label{fig:time2}
\end{subfigure}
\centering
\begin{subfigure}{0.32\textwidth}
\centering
\includegraphics[scale=0.20]{./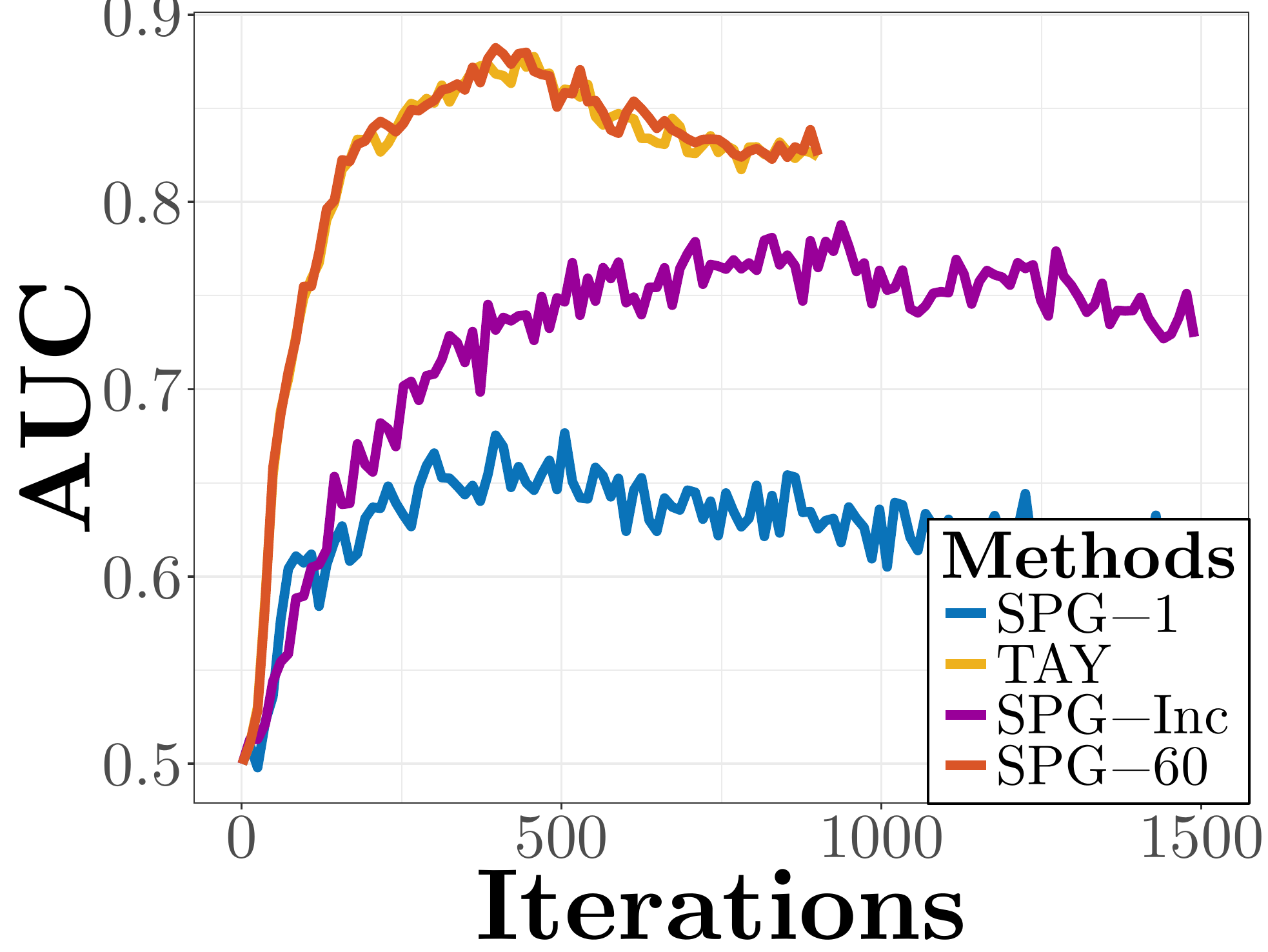}
\caption{AUC v.s. Iterations\\ 20 nodes}\label{fig:iterations2}
\end{subfigure}
\centering
\begin{subfigure}{0.32\textwidth}
\centering
\includegraphics[scale=0.20]{./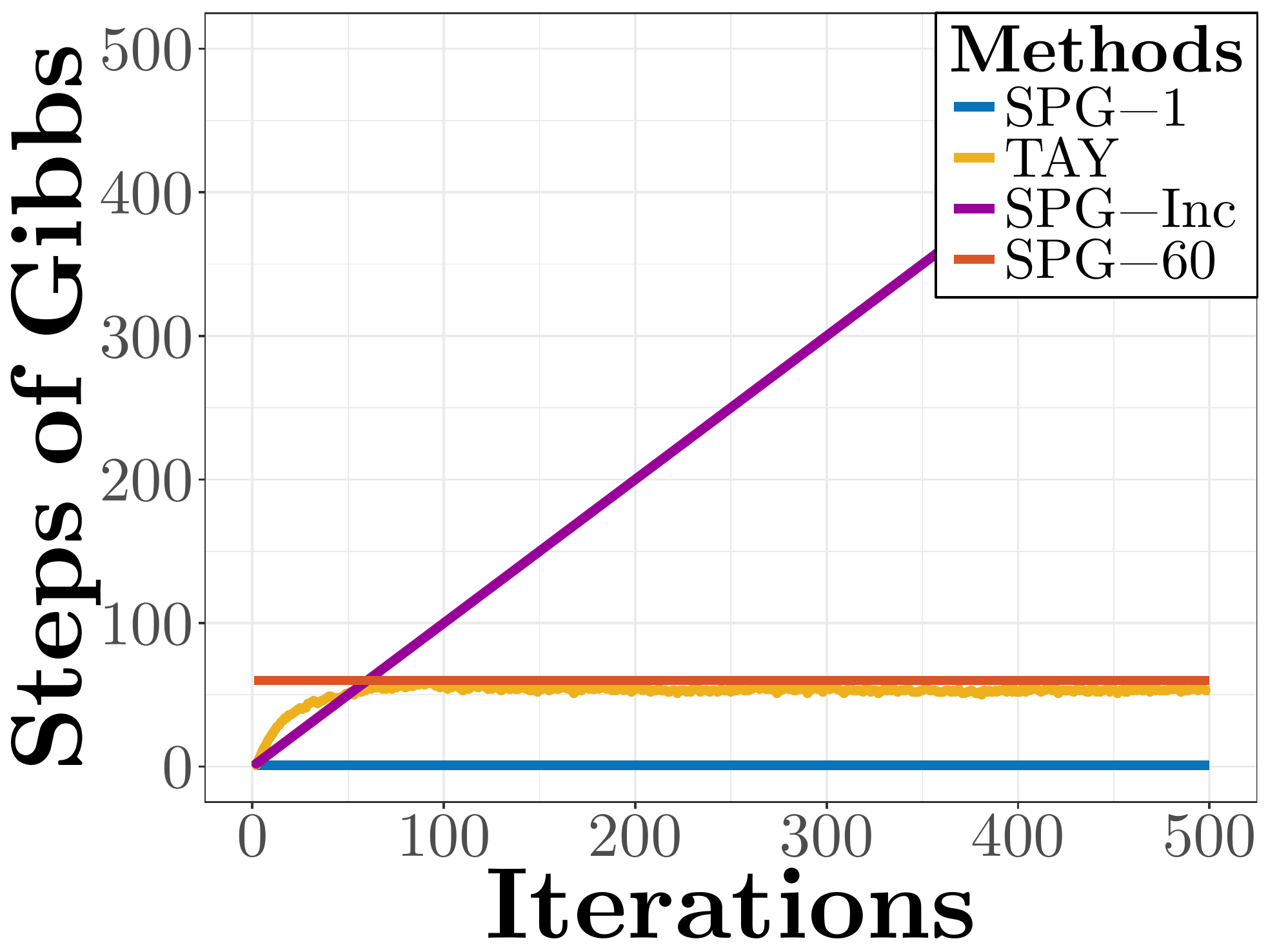}
\caption{$\tau$ v.s. Iterations \\ 20 nodes}
\end{subfigure}
\vspace{-0.3cm}
\caption{Area under curve (AUC) and the steps of Gibbs sampling ($\tau$) in each iteration for structure learning of a 20-node network.}
\label{fig:structure-learning}
\end{figure*}

\subsection{Generalizations}
As we demonstrate in Section~\ref{sec:bound} and Section~\ref{sec:proof-sketch}, the derivation of our main results relies on bounding the Dobrushin influence with $\mathbf{U}$ and we show a procedure to construct $\mathbf{U}$ in the context of BPMNs. Moreover, \cite{mitliagkas2017improving} and \cite{liu2014projecting} provide upper bounds $\mathbf{U}$'s for other types of discrete pairwise MRFs. Therefore, combined with their results, our framework can also be applied to other discrete pairwise Markov networks. Dealing with pairwise MRFs is without any loss of generality, since any discrete MRF can be transformed into a pairwise one \citep{wainwright2008graphical, ravikumar2010high}.

\section{EXPERIMENTS}
\label{sec:exp}
We demonstrate that the structure learning of discrete MRFs benefits substantially from the application of TAY with synthetic data and that the bound provided on the gradient estimation error by Theorem~\ref{thm:exp-err-bound} is tighter than existing bounds. To illustrate that TAY is readily available for practical problems, we also run TAY using a real world dataset.
Because of the limit of space, we only report the experiments under one set of representative experiment configurations. Exhaustive results using different experiment configurations are presented in the Supplements.

\subsection{Structure Learning}
\label{sec:structure-learning}
In order to demonstrate the utility of TAY for effectively learning the structures of BPMNs, we simulate two BPMNs (one with 10 nodes and the other one with 20 nodes):

\begin{prettyitem}{*}
\item We set the number of features to $p = 10$ ($p = 20$). Components of $\btheta$ in the ground truth model are randomly chosen to be nonzero with an edge generation probability of $0.3$. The non-zero components of the real parameter have a uniform distribution on $[-2,-1]\bigcup[1,2]$
\item 1000 (2000 for 20 nodes) samples are generated by Gibbs sampling with 1000 burn-in steps.
\item The results are averaged over 10 trials.
\end{prettyitem}

The sizes of the BPMNs generated in this paper are comparable to those in \citep{honorio2012convergence, atchade2014stochastic, miasojedow2016sparse}.
\begin{figure}[t]
\centering
\includegraphics[scale=0.25]{./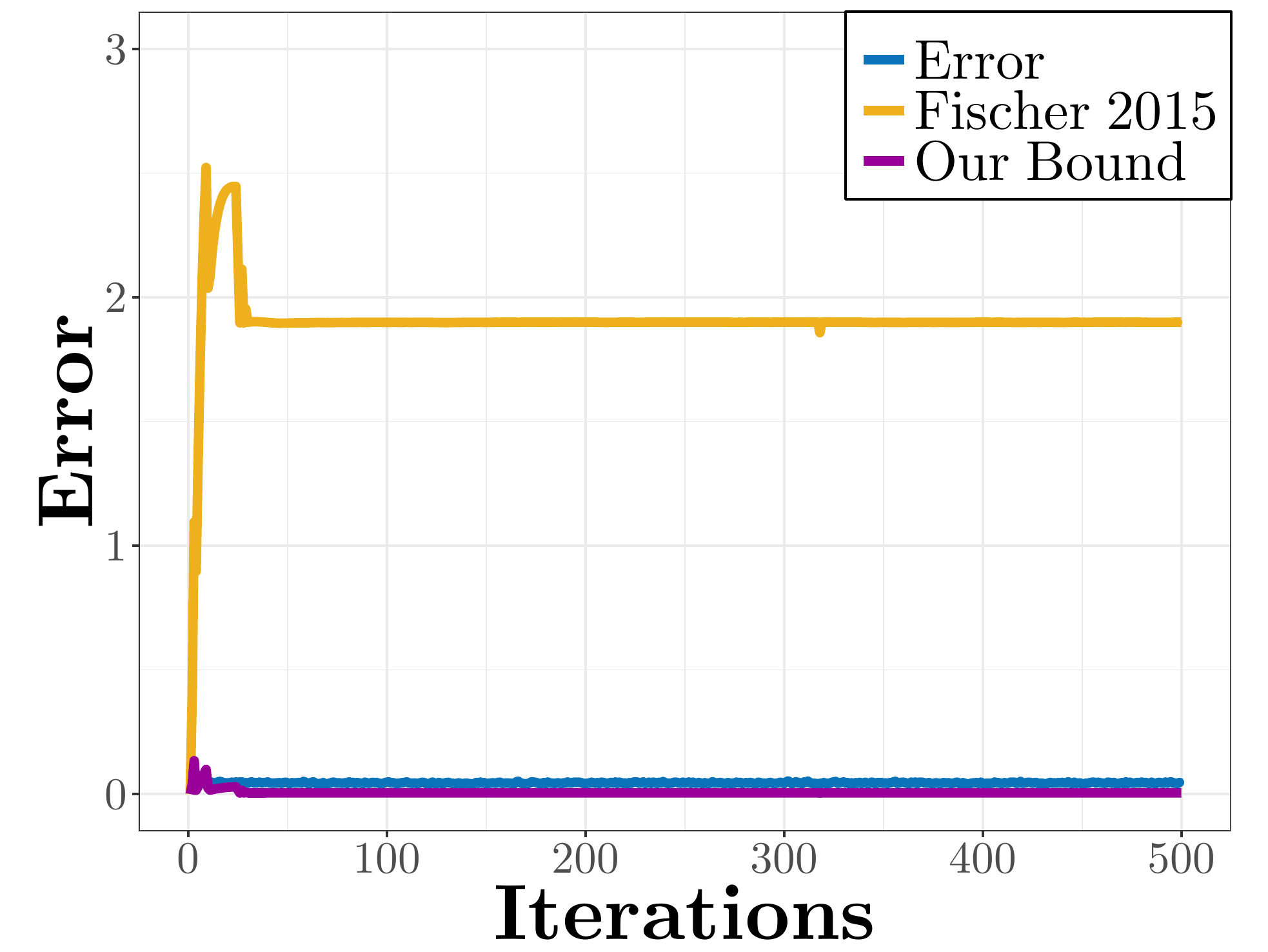}
\caption{The gradient approximation error, the existing bound and the bound (\ref{eq:exp-err-bound}) in the structure learning of a 10-node network.}\label{fig:tightness}
\end{figure}

\begin{figure*}[t]
\centering
\includegraphics[scale=0.5]{./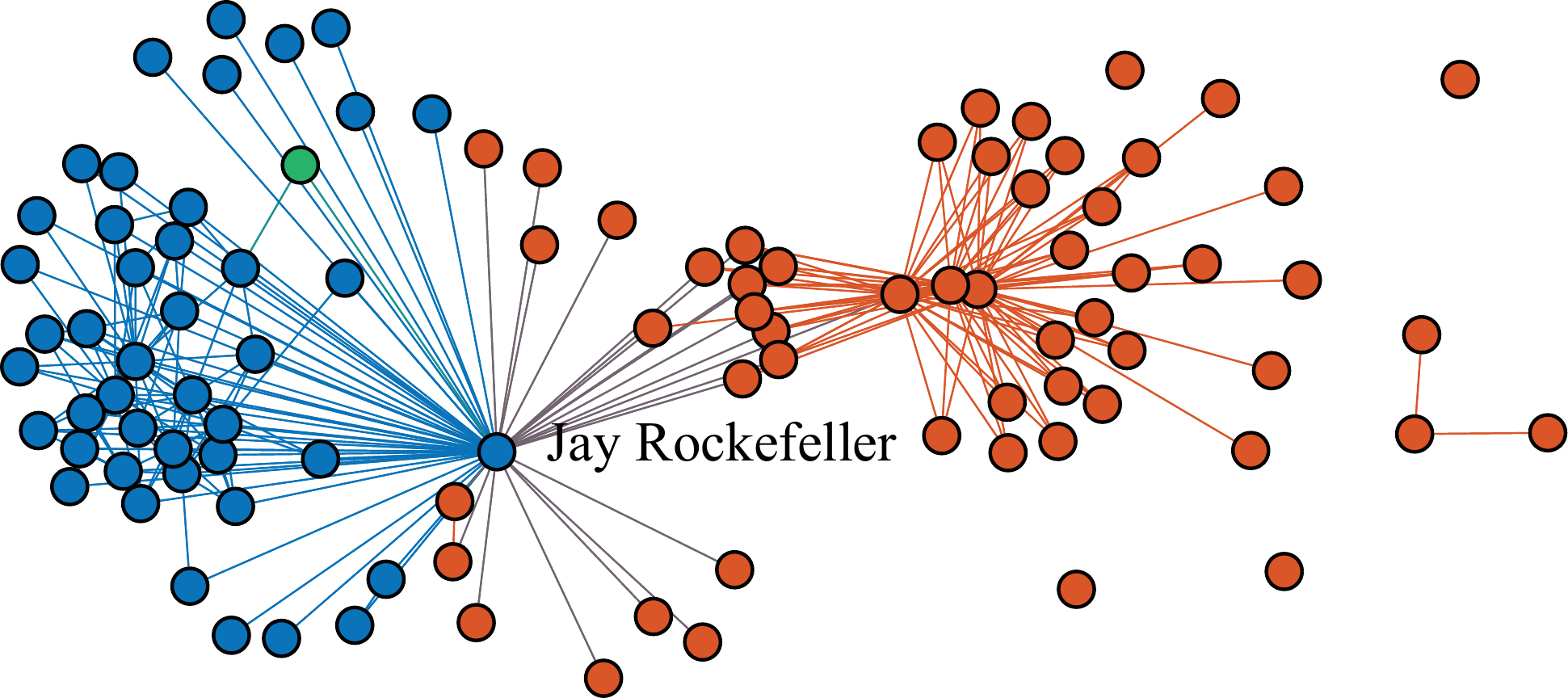}
\vspace{0.3cm}
\caption{The result of TAY on the senator voting data: Red vertices denote Republicans, blue Democracts, and green Independent. 
The figure is rendered by \texttt{Gephi} \citep{bastian2009gephi}.}\label{fig:senate} 
\end{figure*}
Then, using the generated samples, we consider SPG and TAY. According to the analysis in Section~\ref{sec:bound}, the quality of the gradient approximation is closely related to the number of Gibbs sampling steps $\tau$. However, for SPG, there are no convincing schemes for selecting $\tau$. We select $\tau = 30$ ($\tau = 60$ for 20 nodes) to ensure that the gradient approximation error is small enough. Furthermore, we also evaluate the performance of the algorithm using an increasing $\tau$ ( $\tau = k$ in the $k^{th}$ iteration), suggested by \cite{atchade2014stochastic} (SPG-Inc). 

To strike a fair comparison, we use the same step length $\alpha = 0.4$ and regularization parameter $\lambda = 0.025$ ( $\lambda = 0.017$ for 20 nodes) for different methods. We do not tune the step length individually for each method, since \cite{atchade2014stochastic} has shown that various learning rate selection schemes have minimal impact on the performance in the context of SPG. The number of chains used in Gibbs sampling, $q$, is not typically a tunable parameter either, since it indicates the allocation of the computational resources. For each method, it can be easily noticed that the larger the number of samples is, the slower but more accurate the method will be. Furthermore, if the $q$'s are different for different methods, it would be difficult to distinguish the effect of $\tau$ from that of $q$. Therefore, we set it to $2000$ for 10-node networks and $5000$ for 20-node networks. Performances of different methods are compared using the area under curve (AUC) of receiver operating characteristic (ROC) for structure learning in Figure~\ref{fig:structure-learning}. The Gibbs sampling steps in each method are also compared in Figure \ref{fig:structure-learning}. 

Notice that we plot AUCs against both time (Figure~\ref{fig:time1} and Figure~\ref{fig:time2}) and iterations (Figure~\ref{fig:iterations1} and Figure~\ref{fig:iterations2}). The two kinds of plots provide different information about the performances of different methods: the former ones focus on overall complexity and the latter illustrate iteration complexity. We run each method until it converges. Using much less time, TAY achieves a similar AUC to SPG with $\tau = 30$ and $\tau = 60$.
Moreover, SPG with $\tau = 1$ reaches the lowest AUC, since the quality of the gradient approximation cannot be guaranteed with such a small $\tau$. Therefore, the experimental results indicate that TAY adaptively chooses a $\tau$ achieving reasonable accuracy as well as efficiency for structure learning in each iteration. For a more thorough comparison, we also contrast the performance of TAY and a non-SPG-based method, i.e., the pseudo-likelihood method \citep{hofling2009estimation, geng2017efficient}, in the Supplements. As a result, the two methods achieve comparable AUCs.

\subsection{Tightness of the Proposed Bound}
According to the empirical results above, TAY needs a $\tau$ only on the order of ten, suggesting that the bound in Theorem~\ref{thm:exp-err-bound} is tight enough for practical applications. To illustrate this more clearly, we compare (\ref{eq:exp-err-bound}) with another bound on the expectation of the gradient approximation error derived by \cite{fischer2015training}.
Specifically, we calculate the gradient approximation error, the bound (\ref{eq:exp-err-bound}), and \cite{fischer2015training}'s bound, in each iteration of learning a 10-node network. The results are reported in Figure~\ref{fig:tightness}. Notice that the bound in \cite{fischer2015training} gets extraordinarily loose with more iterations. Considering this, we may need run Gibbs chains for thousands of steps if we use this bound. In contrast, bound (\ref{eq:exp-err-bound}) is close to and even slightly less than the real error. This is reflective of the fact that the proposed bound is on the expectation instead of the error itself. As a result, (\ref{eq:exp-err-bound}) is much tighter and thus more applicable.

\subsection{Real World Data}
In our final experiment, we run TAY using the Senate voting data from the second session of the $109^{th}$ Congress \citep{USSenate38:online}. The dataset has 279 samples and 100 variables. Each sample represents the vote cast by each of the 100 senators for a particular bill, where $0$ represents nay, and $1$ represents yea. Missing data are imputed as $0$'s. The task of interest is to learn a BPMN model that identifies some clusters that represent the dependency between the voting inclination of each senator and the party with which the senator is affiliated.

We use TAY with $ \alpha = 0.4$. 5000 Markov chains are used for Gibbs sampling. Since our task is exploratory analysis, $\lambda=0.1$ is selected in order to deliver an interpretable result. The proposed algorithm is run for 100 iterations. The resultant BPMN is shown in Figure~\ref{fig:senate}, where each node represents the voting record of a senator and the edges represent some positive dependency between the pair of senators connected. The nodes in red represent Republicans and the nodes in blue represents Democrats. The clustering effects of voting consistency within a party are captured, coinciding with conventional wisdom. More interestingly, Jay Rockefeller, as a Democrat, has many connections with Republicans. This is consistent with the fact that his family has been a ``traditionally Republican dynasty'' \citep{wiki:JayRocke32}.



\section{CONCLUSION}
We consider SPG for $l_1$-regularized discrete MRF estimation. Furthermore, we conduct a careful analysis of the gradient approximation error of SPG and provide upper bounds to quantify its magnitude. With the aforementioned analysis, we introduce a learning strategy called TAY and show that it can improve the accuracy and efficiency of SPG. 

\small \textbf{Acknowledgement}: Sinong Geng, Zhaobin Kuang, and David Page would like to gratefully acknowledge the NIH BD2K Initiative grant U54 AI117924 and the NIGMS grant 2RO1 GM097618. Stephen Wright would like to gratefully acknowledge NSF Awards IIS-1447449, 1628384, 1634597, and 1740707; AFOSR Award FA9550-13-1-0138; Subcontracts 3F-30222 and 8F-30039 from Argonne National Laboratory; and DARPA Award N660011824020.

\nocite{liu2013genetic,
liu2014new,
liu2016multiple}

\clearpage
\subsubsection*{References}
\renewcommand\refname{\vskip -1cm}
\bibliography{ref}

\clearpage\onecolumn
\appendix
\section*{Supplements}
\addcontentsline{toc}{section}{Appendices}
\section{Proofs}
\subsection{Proof of Theorem~\ref{thm:obj-decrease}}
\label{sec:proof-obj-decrease}
We first introduce the following technical lemma.

\begin{lemma}\label{lemma:ineq}\normalfont Let $g(\btheta)$, $f(\btheta)$, and $h(\btheta)$ be defined as in Section~\ref{sec:mrf}; hence $f(\btheta)$ is convex and differentiable, and $\grad f(\btheta)$ is Lipschitz continuous with Lipschitz constant $L$. Let $\alpha\le {1}/{L}$. Let $\bm{G}_\alpha(\btheta)$ and $\bm{\Delta} f(\btheta)$ be defined as in Section (\ref{sec:SPG}). Then for all $\btheta_1$ and $\btheta_2$, the following inequality holds:
\begin{align}
\begin{split}
\label{eq:lemma-ineq}
g(\btheta_1^\dag)& \le g(\btheta_2) + \bm{G}_{\alpha}^\top(\btheta_1)(\btheta_1-\btheta_2)+(\grad f(\btheta_1)-\bm{\Delta} f(\btheta_1))^\top (\btheta_1^\dag-\btheta_2)-\frac{\alpha}{2}\Norm{\bm{G}_\alpha (\btheta_1)}_2^2,
\end{split}
\end{align}\normalsize
where $\btheta_1^\dag = \btheta_1-\alpha \bm{G}_\alpha (\btheta_1)$.
\end{lemma}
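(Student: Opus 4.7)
The plan is to establish the inequality by separately bounding the smooth part $f(\btheta_1^\dag)$ and the nonsmooth part $h(\btheta_1^\dag)$, then adding the two and rearranging until the right-hand side takes exactly the form stated in the lemma. The key identity that ties everything together is $\btheta_1^\dag - \btheta_1 = -\alpha \bm{G}_\alpha(\btheta_1)$, which will turn squared-norm and inner-product terms into the desired $\bm{G}_\alpha(\btheta_1)$ expressions.

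First I would bound $f(\btheta_1^\dag)$ from above. Since $\grad f$ is $L$-Lipschitz, the descent lemma gives $f(\btheta_1^\dag) \le f(\btheta_1) + \grad f(\btheta_1)^\top(\btheta_1^\dag-\btheta_1) + \frac{L}{2}\Norm{\btheta_1^\dag-\btheta_1}_2^2$. Then I would use convexity of $f$ to replace $f(\btheta_1)$ by $f(\btheta_2) + \grad f(\btheta_1)^\top(\btheta_1-\btheta_2)$, collapsing the two linear terms into a single $\grad f(\btheta_1)^\top(\btheta_1^\dag-\btheta_2)$.

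Next I would handle $h(\btheta_1^\dag)$ via the optimality condition of the proximal step. By definition $\btheta_1^\dag$ minimizes $\frac{1}{2\alpha}\Norm{\btheta - (\btheta_1-\alpha\bm{\Delta} f(\btheta_1))}_2^2 + h(\btheta)$, so there exists a subgradient $\bm{s}\in\partial h(\btheta_1^\dag)$ with $\bm{s} = \frac{1}{\alpha}(\btheta_1-\btheta_1^\dag) - \bm{\Delta} f(\btheta_1) = \bm{G}_\alpha(\btheta_1) - \bm{\Delta} f(\btheta_1)$. Applying the subgradient inequality at $\btheta_2$ yields $h(\btheta_1^\dag) \le h(\btheta_2) + \bigl(\bm{G}_\alpha(\btheta_1)-\bm{\Delta} f(\btheta_1)\bigr)^\top(\btheta_1^\dag-\btheta_2)$.

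Adding the two bounds produces $g(\btheta_1^\dag) \le g(\btheta_2) + \bigl(\grad f(\btheta_1)-\bm{\Delta} f(\btheta_1)\bigr)^\top(\btheta_1^\dag-\btheta_2) + \bm{G}_\alpha(\btheta_1)^\top(\btheta_1^\dag-\btheta_2) + \frac{L}{2}\Norm{\btheta_1^\dag-\btheta_1}_2^2$. I would then use $\btheta_1^\dag-\btheta_1 = -\alpha\bm{G}_\alpha(\btheta_1)$ to split $\bm{G}_\alpha(\btheta_1)^\top(\btheta_1^\dag-\btheta_2) = \bm{G}_\alpha(\btheta_1)^\top(\btheta_1-\btheta_2) - \alpha\Norm{\bm{G}_\alpha(\btheta_1)}_2^2$, and to write $\frac{L}{2}\Norm{\btheta_1^\dag-\btheta_1}_2^2 = \frac{L\alpha^2}{2}\Norm{\bm{G}_\alpha(\btheta_1)}_2^2$. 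Combining these with the step-size assumption $\alpha\le 1/L$ (so $\frac{L\alpha^2}{2}\le\frac{\alpha}{2}$) collapses the quadratic-in-$\bm{G}_\alpha$ contribution to $-\frac{\alpha}{2}\Norm{\bm{G}_\alpha(\btheta_1)}_2^2$, delivering exactly \eqref{eq:lemma-ineq}.

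The only subtle step is the identification of the subgradient of $h$ from the proximal update; everything else is bookkeeping with convexity, the Lipschitz descent lemma, and the definition of $\bm{G}_\alpha$. I therefore expect the main obstacle to be stating the proximal optimality condition cleanly and keeping the $\grad f$ versus $\bm{\Delta} f$ terms aligned so that only the gradient-error direction $\grad f(\btheta_1) - \bm{\Delta} f(\btheta_1)$ appears in the final bound, isolating the inexactness in a single summand that Theorem~\ref{thm:obj-decrease} can subsequently exploit.
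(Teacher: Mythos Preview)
Your proposal is correct and follows essentially the same route as the paper's proof: descent lemma for $f$, convexity of $f$ to introduce $\btheta_2$, the proximal optimality condition to identify $\bm{G}_\alpha(\btheta_1)-\bm{\Delta} f(\btheta_1)\in\partial h(\btheta_1^\dag)$ and apply the subgradient inequality for $h$, and finally the substitution $\btheta_1^\dag-\btheta_1=-\alpha\bm{G}_\alpha(\btheta_1)$ together with $\alpha\le 1/L$ to collapse the quadratic term. The only cosmetic difference is that the paper applies $\alpha\le 1/L$ immediately after the descent lemma, whereas you defer it to the final step; the algebra is otherwise identical.
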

\begin{proof}
The proof is based on the convergence analysis of the standard proximal gradient method \citep{vandenberghe16}. $f(\btheta)$ is a convex differentiable function whose gradient is Lipschitz continuous with Lipschitz constant $L$. By the quadratic bound of the Lipschitz property:
\begin{equation*}
f(\btheta_1^\dag) \le f(\btheta_1) - \alpha \grad^\top f(\btheta_1) \bm{G}_\alpha(\btheta_1) + \frac{\alpha^2L}{2}\Norm{\bm{G}_\alpha(\btheta_1)}_2^2.
\end{equation*}
With $\alpha \le {1}/{L}$, and adding $h(\btheta_1^\dag)$ on both sides of the quadratic bound, we have an upper bound for $g(\btheta_1^\dag)$:
\begin{align*}
\begin{split}
g(\btheta_1^\dag)  \le f(\btheta_1) - \alpha \grad^\top f(\btheta_1) \bm{G}_\alpha(\btheta_1)  + \frac{\alpha}{2}&\Norm{\bm{G}_\alpha(\btheta_1)}_2^2 + h(\btheta_1^\dag).
\end{split}
\end{align*}
By convexity of $f(\btheta)$ and $h(\btheta)$, we have:
\begin{gather*}
f(\btheta_1) \le f(\btheta_2)+\grad^\top f(\btheta_1) (\btheta_1-\btheta_2),\\
 h(\btheta_1^\dag) \le h(\btheta_2) + (\bm{G}_\alpha(\btheta_1)-\Delta f(\btheta_1))^\top (\btheta_1^+ - \btheta_2),
\end{gather*}
which can be used to further upper bound $g(\btheta_1^\dag)$, and results in (\ref{eq:lemma-ineq}). Note that we have used the fact that $G_\alpha(\btheta_1)-\Delta f(\btheta_1)$ is a subgradient of $h(\btheta_1^\dag)$ in the last inequality.
\end{proof}

With Lemma~\ref{lemma:ineq}, we are now able to prove Theorem~\ref{thm:obj-decrease}.  In Lemma~\ref{lemma:ineq}, let $\btheta_1 = \btheta_2= \btheta^{(k)}$. Then by (\ref{eq:ggd}), $\btheta_1^\dag= \btheta^{(k+1)}$. The inequality in (\ref{eq:lemma-ineq}) can then be simplified as:
\begin{align*}
\begin{split}
g(\btheta^{(k+1)}) - g(\btheta^{(k)}) \le 
\alpha \bdelta (\btheta^{(k)})^\top \bm{G}_\alpha (\btheta^{(k)})
- \frac{\alpha}{2}&\norm{\bm{G}_\alpha(\btheta^{(k)})}_2^2.
\end{split}
\end{align*}
By the Cauchy-Schwarz inequality and the sufficient condition that $ \norm{\bdelta(\btheta^{(k)})}_2 < \frac{1}{2}\norm{\bm{G}_\alpha(\btheta^{(k)})}_2$, we can further simplify the inequality and conclude $g(\btheta^{(k+1)}) < g(\btheta^{(k)}) $.

\subsection{Proof of Theorem~\ref{thm:convergence-rate}}
To prove Theorem ~\ref{thm:convergence-rate}, we first review Proposition 1 in \cite{schmidt2011convergence}:
\begin{theorem}[Convergence on Average, \cite{schmidt2011convergence}]\normalfont
\label{thm:avg-convergence-rate}
Let $\mathcal{K}=(\btheta^{(0)},\btheta^{(1)},\btheta^{(2)},\cdots,\btheta^{(\kappa)})$ be the iterates generated by Algorithm~\ref{alg:pxgb}, then
\begin{align*}
\begin{split}
g\left(\frac{1}{\kappa}\sum_{k=1}^\kappa \btheta^{(k)}\right) - g(\hat{\btheta}) \le \frac{L}{2\kappa} \left(\norm{\btheta^{(0)}-\hat{\btheta}}_2 + \frac{2}{L}\sum_{k=1}^\kappa \norm{\bdelta(\btheta^{(k)})}_2 \right)^2.
\end{split}
\end{align*}
\end{theorem}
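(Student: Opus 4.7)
\subsection*{Proof proposal for Theorem~\ref{thm:avg-convergence-rate}}

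The plan is to mirror the inexact proximal gradient analysis of \cite{schmidt2011convergence}, building directly on Lemma~\ref{lemma:ineq} (already established in the excerpt) and then converting the per-iterate bound into one for the running average by Jensen's inequality. First I would instantiate Lemma~\ref{lemma:ineq} at $\btheta_1=\btheta^{(k-1)}$ and $\btheta_2=\hat{\btheta}$ so that $\btheta_1^\dag=\btheta^{(k)}$ and, using $\grad f(\btheta)-\bm{\Delta}f(\btheta)=-\bdelta(\btheta)$, obtain
\begin{equation*}
g(\btheta^{(k)})-g(\hat{\btheta}) \le \bm{G}_\alpha^\top(\btheta^{(k-1)})(\btheta^{(k-1)}-\hat{\btheta}) - \bdelta(\btheta^{(k-1)})^\top(\btheta^{(k)}-\hat{\btheta}) - \tfrac{\alpha}{2}\norm{\bm{G}_\alpha(\btheta^{(k-1)})}_2^2.
\end{equation*}

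Next I would rewrite the $\bm{G}_\alpha$-terms as a telescoping expression. Expanding $\norm{\btheta^{(k)}-\hat{\btheta}}_2^2 = \norm{\btheta^{(k-1)}-\alpha\bm{G}_\alpha(\btheta^{(k-1)})-\hat{\btheta}}_2^2$ gives the identity
\begin{equation*}
\bm{G}_\alpha^\top(\btheta^{(k-1)})(\btheta^{(k-1)}-\hat{\btheta}) - \tfrac{\alpha}{2}\norm{\bm{G}_\alpha(\btheta^{(k-1)})}_2^2 = \tfrac{1}{2\alpha}\bigl(\norm{\btheta^{(k-1)}-\hat{\btheta}}_2^2 - \norm{\btheta^{(k)}-\hat{\btheta}}_2^2\bigr).
\end{equation*}
Applying Cauchy--Schwarz to the $\bdelta$-term and summing $k=1,\ldots,\kappa$, the left-hand distance terms telescope, and Jensen's inequality applied to the convex $g$ yields
\begin{equation*}
\kappa\bigl[g\bigl(\tfrac{1}{\kappa}\textstyle\sum_{k=1}^\kappa \btheta^{(k)}\bigr)-g(\hat{\btheta})\bigr] \le \tfrac{1}{2\alpha}\norm{\btheta^{(0)}-\hat{\btheta}}_2^2 + \textstyle\sum_{k=1}^\kappa \norm{\bdelta(\btheta^{(k-1)})}_2\,\norm{\btheta^{(k)}-\hat{\btheta}}_2.
\end{equation*}
Substituting $\alpha\le 1/L$ on the leading term will then supply the factor $L/(2\kappa)$ once divided by $\kappa$.

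The main obstacle is converting this mixed bound (which contains the unknown quantity $\norm{\btheta^{(k)}-\hat{\btheta}}_2$ on the right) into the clean squared form $\tfrac{L}{2\kappa}\bigl(\norm{\btheta^{(0)}-\hat{\btheta}}_2+\tfrac{2}{L}\sum\norm{\bdelta}_2\bigr)^2$. To do this I would first isolate a self-referential inequality for $u_k:=\norm{\btheta^{(k)}-\hat{\btheta}}_2$, by discarding the nonnegative $g(\btheta^{(k)})-g(\hat{\btheta})$ terms after rearrangement (this is the standard trick: the telescoping inequality implies $u_k^2\le u_0^2 + 2\alpha\sum_{j=1}^k \norm{\bdelta(\btheta^{(j-1)})}_2 u_j$). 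Then I would invoke the recursive lemma of \cite{schmidt2011convergence} (a consequence of Lemma~A.1 of \cite{polyak1987introduction}): if a nonnegative sequence satisfies $u_k^2 \le S + \sum_{j=1}^k b_j u_j$ with $S\ge 0$ and $b_j\ge 0$, then $u_k \le \tfrac{1}{2}\sum_{j=1}^k b_j + \bigl(S+(\tfrac{1}{2}\sum_{j=1}^k b_j)^2\bigr)^{1/2}$. This yields $u_k \le \norm{\btheta^{(0)}-\hat{\btheta}}_2 + 2\alpha\sum_{j=1}^k \norm{\bdelta(\btheta^{(j-1)})}_2$.

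Finally, I would plug this uniform bound on $u_k$ back into the telescoped inequality. The right-hand side becomes $\tfrac{1}{2\alpha}\norm{\btheta^{(0)}-\hat{\btheta}}_2^2$ plus a cross-term that combines with $\sum_k\norm{\bdelta(\btheta^{(k-1)})}_2$ to complete the square, yielding exactly $\tfrac{1}{2\alpha}\bigl(\norm{\btheta^{(0)}-\hat{\btheta}}_2 + 2\alpha\sum_k\norm{\bdelta(\btheta^{(k-1)})}_2\bigr)^2$. Using $\alpha\le 1/L$ to replace $\tfrac{1}{2\alpha}$ by $\tfrac{L}{2}$ and $2\alpha$ by $\tfrac{2}{L}$, and dividing by $\kappa$, yields the claimed bound (after a harmless reindexing of the $\bdelta$-sum). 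Aside from the recursive lemma, every step is a routine convexity/telescoping argument; the bookkeeping for the squared form is the one place where care is required.
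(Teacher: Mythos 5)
The paper does not actually prove this statement: it is imported verbatim as Proposition~1 of \cite{schmidt2011convergence} and used as a black box in the proof of Theorem~\ref{thm:convergence-rate}. Your proposal instead reconstructs that external proof from the paper's own Lemma~\ref{lemma:ineq}, and the reconstruction is essentially the argument of \cite{schmidt2011convergence}: instantiate the descent lemma at $(\btheta^{(k-1)},\hat{\btheta})$, convert the gradient-map terms into a telescoping difference of squared distances, control the cross term $-\bdelta(\btheta^{(k-1)})^\top(\btheta^{(k)}-\hat{\btheta})$ by Cauchy--Schwarz, bound $u_k=\norm{\btheta^{(k)}-\hat{\btheta}}_2$ via the recursive lemma $u_k^2\le S+\sum_j b_j u_j \Rightarrow u_k\le\frac{1}{2}\sum_j b_j+\bigl(S+(\frac{1}{2}\sum_j b_j)^2\bigr)^{1/2}$, and complete the square; all of these steps check out, and what your route buys is a self-contained derivation that makes explicit exactly which ingredients (convexity of $g$, the $1/L$ step, the Polyak-style recursion) the imported bound rests on. The one place where your write-up is looser than it should be is the final constant substitution: with $\alpha\le 1/L$ one has $\frac{1}{2\alpha}\ge\frac{L}{2}$, so replacing $\frac{1}{2\alpha}$ by $\frac{L}{2}$ in the leading term is an inequality in the wrong direction. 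Your derivation actually delivers $\frac{1}{2\alpha\kappa}\bigl(\norm{\btheta^{(0)}-\hat{\btheta}}_2+2\alpha\sum_k\norm{\bdelta(\btheta^{(k)})}_2\bigr)^2$, which coincides with the claimed bound exactly when $\alpha=1/L$ --- the setting in which Schmidt et al.\ state Proposition~1 --- but does not literally imply it for $\alpha<1/L$. This wrinkle (like the off-by-one in the index of the $\bdelta$-sum, which you correctly flag as a reindexing) is inherited from the theorem as transcribed in the paper rather than a gap in your argument, but it deserves an explicit remark restricting to $\alpha=1/L$ or restating the bound with $\alpha$ in place of $1/L$.
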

Furthermore, according to the assumption that  $g(\btheta^{(k+1)}) \le g(\btheta^{(k)})$ with $k \in \{1, 2, \cdots, \kappa \}$, we have: $g \left(\frac{1}{\kappa}\sum_{k=1}^\kappa \btheta^{(k)}\right)  \ge g(\btheta^{(\kappa)})$. Therefore,
\begin{align*}
\begin{split}
g (\btheta^{(\kappa)}) - g(\hat{\btheta}) \le \frac{L}{2\kappa} \left(\norm{\btheta^{(0)}-\hat{\btheta}}_2 + \frac{2}{L}\sum_{k=1}^\kappa \norm{\bdelta(\btheta^{(k)})}_2 \right)^2.
\end{split}
\end{align*}

\subsection{Proof of Theorem~\ref{thm:exp-err-bound}}
\renewcommand{\ss}{\Norm{\mathbb{E}_{\tilde{\bx}_\tau} [\bdelta(\btheta)\mid \tilde{\bx}_0]}_2}
\subsubsection{Proof of Lemma~\ref{lem:exp-err}}

The rationale behind our proof follow that of \cite{bengio2009justifying} and \cite{fischer2011bounding}.

Let $\tilde{\bx}_0\in\curly{0,1}^p$ be an initialization of the Gibbs sampling algorithm. Let $\btheta$ be the parameterization from which the Gibbs sampling algorithm generates new samples. A Gibbs-$\tau$ algorithm hence uses the $\tau^{th}$ sample , $\tilde{\bx}_\tau$, generated from the chain to approximate the gradient. Since there is only one Markov chain in total, we have $\mathbb{S}=\curly{\tilde{\bx}_\tau}$. The gradient approximation of Gibbs-$\tau$ is hence given by:
\begin{equation}
\label{eq:gradappx1}
\bm{\Delta} f(\btheta) = \bm{\psi}(\tilde{\bx}_\tau) - \mathbb{E}_{\mathbb{X}}\bm{\psi}(\bx).
\end{equation}
The actual gradient, $\grad f(\btheta)$, is given in (\ref{eq:grad}). Therefore, the difference between the approximation and the actual gradient is 
\begin{align*}
\bdelta(\btheta) &= \bm{\bm{\Delta}} f(\btheta) - \grad f(\btheta) =  \bm{\psi}(\tilde{\bx}_\tau) - \mathbb{E}_{\btheta}\bm{\psi}(\bx) = \grad \log \mathrm{P}_{\btheta}(\tilde{\bx}_\tau).
\end{align*}

We rewrite
\begin{equation*}
\mathrm{P}_\tau(\bx \mid \tilde{\bx}_0)  = \mathrm{P}(\tilde{\bX}_\tau = \bx \mid  \tilde{\bx}_0) = \mathrm{P}_{\btheta}(\bx)+ \epsilon_\tau(\bx),
\end{equation*}
where $\epsilon_\tau(\bx)$ is the difference between $\mathrm{P}_\tau(\bx \mid \tilde{\bx}_0)$ and $\mathrm{P}_{\btheta}(\bx)$. Consider the expectation of the $j^{th}$ component of $\bdelta(\btheta)$, $\delta_j(\btheta)$, where $j\in\curly{1,2,\cdots,m}$, after running Gibbs-$\tau$ that is initialized by $\tilde{\bx}_0$:
\begin{align}
\label{eq:exp-grad}
\begin{split}
\mathbb{E}_{\tilde{\bx}_\tau} [\delta_j(\btheta) \mid \tilde{\bx}_0]&=  \sum_{\bx\in\curly{0,1}^p}\hspace{-2mm} \mathrm{P}_\tau (\bx\mid\tilde{\bx}_0) \delta_j(\btheta)= \sum_{\bx\in\curly{0,1}^p} (\mathrm{P}_{\btheta}(\bx)+\epsilon_\tau(\bx)) \delta_j(\btheta)
\\&= \hspace{-2mm} \sum_{\bx\in\curly{0,1}^p}\hspace{-2mm} \epsilon_\tau(\bx) \delta_i(\btheta)=  \sum_{\bx\in\curly{0,1}^p} (\mathrm{P}_\tau(\bx\mid\bx_0) -\mathrm{P}_{\btheta}(\bx)) \delta_j(\btheta)
\\&= \sum_{\bx\in\curly{0,1}^p} (\mathrm{P}_\tau(\bx\mid\bx_0) - \mathrm{P}_{\btheta}(\bx)) \nabla_j \log \mathrm{P}_{\btheta}(\tilde{\bx}_\tau),
\end{split}
\end{align}
where we have used the fact that $\sum_{\bx \in\curly{0,1}^p} \mathrm{P}_{\btheta}(\bx)\nabla_j \log \mathrm{P}_{\btheta}(\bx) = 0$,
and $\nabla_j \log \mathrm{P}_{\btheta}(\bx)$ represents the $j^{th}$ component of $\grad \log \mathrm{P}_{\btheta}(\tilde{\bx}_\tau)$, with $j\in\curly{1,2,\cdots,m}$.

Therefore, from (\ref{eq:exp-grad}),
\begin{align}
\label{eq:exp-grad-abs}
\begin{split}
\lvert \mathbb{E}_{\tilde{\bx}_\tau} [\delta_j(\btheta) \mid \tilde{\bx}_0] \rvert
& \le \sum_{\bx\in\curly{0,1}^p} \lvert\mathrm{P}_\tau(\bx\mid\bx_0) -\mathrm{P}_{\btheta}(\bx)\rvert \cdot \lvert \nabla_j \log \mathrm{P}_{\btheta}(\tilde{\bx}_\tau) \rvert\\
& \le \sum_{\bx\in\curly{0,1}^p} \lvert\mathrm{P}_\tau(\bx\mid\bx_0) -\mathrm{P}_{\btheta}(\bx)\rvert  = 2\Norm{\mathrm{P}_\tau(\bx\mid\tilde{\bx}_0) -\mathrm{P}_{\btheta}(\bx)}_{\text{TV}},
\end{split}
\end{align}
where we have used the fact that $\lvert \nabla_j \log \mathrm{P}_{\btheta}(\tilde{\bx}_\tau) \rvert \le 1$ when $\bm{\psi}(\bx) \in \curly{0,1}^m$, for all $\bx \in \curly{0,1}^p$.

Therefore, by (\ref{eq:exp-grad-abs}),
\begin{align*}
\lVert \mathbb{E}_{\tilde{\bx}_\tau} [\bdelta(\btheta) \mid \tilde{\bx}_0]\rVert_2 = & \sqrt{
\sum_{j=1}^m \lvert \mathbb{E}_{\tilde{\bx}_\tau} [\delta_j(\btheta) \mid \tilde{\bx}_0] \rvert^2}
\le  \sqrt{m \times (2\Norm{\mathrm{P}_\tau(\bx\mid\bx_0) -\mathrm{P}_{\btheta}(\bx)}_{\text{TV}})^2}
\\= &2\sqrt{m}\Norm{\mathrm{P}_\tau(\bx\mid\bx_0) -\mathrm{P}_{\btheta}(\bx)}_{\text{TV}}.
\end{align*}

\subsubsection{Proof of Lemma~\ref{lem:u-bpmn}}

Let $j\ne i$ be given. With $\xi_{ij} = \theta_{\min\curly{i,j},\max\curly{i,j}}$, consider
\begin{align*}
\mathrm{P}_{\btheta}(X_i=1\mid\bX_{-i}) = & \frac{\mathrm{P}_{\btheta}(X_i=1 , \bX_{-i})}{\mathrm{P}_{\btheta}(X_i=0,  \bX_{-i})+\mathrm{P}_{\btheta}(X_i=1 , \bX_{-i})}
\\=& \frac{1}{1+\exp \left( -\theta_{ii}-\sum_{k\ne i}\xi_{i, k} X_k \right)}
\\= & \frac{1}{1+\exp \left(-\theta_{ii}-\sum_{k\ne i,k\ne j}\xi_{i, k}X_k \right) \exp\left(-\xi_{i, j}X_j\right)}\\
= & g\left(\exp\left(-\xi_{i, j}X_j\right), b_1\right),
\end{align*}
where
\begin{equation*}
b = \exp \left(-\theta_{ii}-\sum_{k\ne i,k\ne j}\xi_{i, k}X_k \right) \in [r,s],
\end{equation*}
with\small
\begin{gather*}
r = \exp \left(- \theta_{ii} - \sum_{k\ne i,k\ne j} \xi_{i, k} \max\curly{\mathrm{sgn}(\xi_{i, k}),0}   \right),\quad s = \exp \left( - \theta_{ii} -\sum_{k\ne i,k\ne j}\xi_{i, k} \max\curly{-\mathrm{sgn}(\xi_{i, k}),0}  \right).
\end{gather*}\normalsize

Therefore,
\begin{align*}
C_{ij} =& \max_{\bX,\bY\in N_j}  \frac{1}{2} \lvert \mathrm{P}_{\btheta}(X_i=1\mid\bX_{-i}) - \mathrm{P}_{\btheta}(Y_i=1\mid\bY_{-i})\rvert + \frac{1}{2} \lvert \mathrm{P}_{\btheta}(X_i=0\mid\bX_{-i}) - \mathrm{P}_{\btheta}(Y_i=0\mid\bY_{-i})\rvert\\
=&\max_{\bX,\bY\in N_j}   \lvert \mathrm{P}_{\btheta}(X_i=1\mid\bX_{-i}) - \mathrm{P}_{\btheta}(Y_i=1\mid\bY_{-i})\rvert
\\ =& \max_{\bX,\bY\in N_j}\lvert g\left(\exp\left(-\xi_{i, j}X_j\right), b\right) - g\left(\exp\left(-\xi_{i, j}Y_j\right), b\right)\rvert \\
=& \max_{\bX,\bY\in N_j} \frac{\lvert \exp\left(-\xi_{i, j}X_j\right) - \exp\left(-\xi_{i, j}Y_j\right) \rvert b}{\left(1+b\exp\left(-\xi_{i, j}X_j\right)\right)\left(1+b_1\exp\left(-\xi_{i, j}Y_j\right)\right)}
\\=& \max_{\bX,\bY\in N_j}\frac{\lvert \exp\left(-\xi_{i, j}\right) - 1 \rvert b}{\left(1+b\exp\left(-\xi_{i, j}\right)\right)(1+b)}.
\end{align*}
Then following the Lemma 15 in \cite{mitliagkas2017improving}, we have
\begin{equation}
\begin{gathered}
C_{ij}\le \frac{\lvert \exp\left(-\xi_{i, j}\right) - 1 \rvert b^*}{\left(1+b_1^*\exp\left(-\xi_{i, j}\right)\right)(1+b^*)},
\end{gathered}
\end{equation}
with $b^* = \max\curly{r,\min\curly{s,\exp\left(\frac{\xi_{i, j}}{2}\right)}}$.

\subsection{Proof of Theorem~\ref{thm:sample-err-bound}}
\label{sec:proof-non-asy-bound}
We are interested in concentrating $\Norm{\bdelta(\btheta)}_2$ around $\Norm{\mathbb{E}_{\tilde{\bx}_\tau} [\bdelta(\btheta) \mid \tilde{\bx}_0]}_2$. To this end, we first consider concentrating $\delta_j(\btheta)$ around  $\mathbb{E}_{\tilde{\bx}_\tau} [\delta_j(\btheta) \mid \tilde{\bx}_0]$, where $j\in\curly{1,2,\cdots,m}$. Let $q$ defined in Algorithm~\ref{alg:grad} be given. Then $q$ trials of Gibbs sampling are run, resulting in $\curly{\delta_j^{(1)}(\btheta) , \delta_j^{(2)}(\btheta) ,\cdots, \delta_j^{(q)}(\btheta)}$, and $\curly{\psi_j^{(1)}(\btheta) , \psi_j^{(2)}(\btheta) ,\cdots, \psi_j^{(q)}(\btheta)}$ defined in Section~\ref{sec:non-asy-bound}, one element for each of the $q$ trials. Since all the trials are independent, $ \delta_j^{(i)}(\btheta)$'s can be considered as i.i.d.~samples with mean $\mathbb{E}_{\tilde{\bx}_\tau} [\delta_j(\btheta) \mid \tilde{\bx}_0]$. Furthermore, $\delta_j^{(i)}(\btheta) = \nabla_j \log \mathrm{P}_{\btheta}(\tilde{\bx}_\tau) \in [-1,1]$ when $\bm{\psi}(\bx) \in \curly{0,1}^m$, for all $\bx \in \curly{0,1}^p$. Let $\beta_j>0$ be given; we define the adversarial event:
\begin{equation}
\begin{gathered}
\label{eq:ad-event}
E_j^q(\epsilon_j) = \left\lvert \frac{1}{q}\sum_{i=1}^q \delta_j^{(i)}(\btheta) -  \mathbb{E}_{\tilde{\bx}_\tau} [\delta_j(\btheta) \mid \tilde{\bx}_0]  \right\rvert >\epsilon_j, 
\end{gathered}
\end{equation}
with $j\in\curly{1,2,\cdots,m}$.

Define another random variable $Z_j = \frac{1 + \delta_j(\btheta)}{2}$ with samples $Z_j^{(i)} = \frac{1 + \delta^{(i)}_j(\btheta)}{2}$ and the sample variance $V_{Z_j} = \frac{V_{\delta_j}}{4} = \frac{V_{\psi_j}}{4}$.

Considering $Z \in [0, 1]$, we can apply Theorem 4 in \cite{maurer2009empirical} and achieve 
\begin{equation*}
\mathrm{P}\left( \left\lvert \frac{1}{q}\sum_{i=1}^q Z_j^{(i)} -  \mathbb{E}_{\tilde{\bx}_\tau} [Z_j \mid \tilde{\bx}_0]  \right\rvert >\frac{\epsilon_j}{2} \right) \le 2 \beta_j,
\end{equation*}
where 
\begin{align*}
\frac{\epsilon_j}{2} &= \sqrt{\frac{2V_{Z_j}\ln 2/\beta_j } {q}} + \frac{7\ln2/\beta_j}{3(p-1)} =\sqrt{\frac{V_{\psi_j}\ln 2/\beta_j } {2q}} + \frac{7\ln2/\beta_j}{3(p-1)}.
\end{align*}
That is to say
\begin{equation*}
\mathrm{P}\left(E_j^q(\epsilon_j) \right) \le 2 \beta_j.
\end{equation*}

Now, for all $j\in\curly{1,2,\cdots,m}$, we would like $\frac{1}{m}\sum_{i=1}^m \delta_j^{(i)}(\btheta)$ to be close to $\mathbb{E}_{\tilde{\bx}_\tau} [\delta_j(\btheta) \mid \tilde{\bx}_0]$. i.e., 
\begin{equation*}
\left\lvert \frac{1}{q}\sum_{i=1}^q \delta_j^{(i)}(\btheta) - \mathbb{E}_{\tilde{\bx}_\tau} [\delta_j(\btheta) \mid \tilde{\bx}_0] \right\rvert \le \epsilon_j.
\end{equation*}
This concentrated event will occur with probability:
\begin{align*}
1 & - \mathrm{P}\left(E_j(\epsilon_j) \right)\ge 1- \mathrm{P}\left(E_j^q(\epsilon_j) \right)\ge 1 - 2 \beta_j.
\end{align*}
When all the concentrated events occur for each $j$,
\begin{align*}
 \Norm{ \bdelta(\btheta)}_2 - \Norm{\mathbb{E}_{\tilde{\bx}_\tau} [\bdelta(\btheta) \mid \tilde{\bx}_0]}_2 \le&  \left\lVert \bdelta(\btheta) - \mathbb{E}_{\tilde{\bx}_\tau} [\bdelta(\btheta) \mid \tilde{\bx}_0] \right\rVert_2=  \left\lVert \frac{1}{q}\sum_{i=1}^q \bdelta^{(i)}(\btheta) - \mathbb{E}_{\tilde{\bx}_\tau} [\bdelta(\btheta) \mid \tilde{\bx}_0] \right\rVert_2
\\= & \sqrt{\sum_{j=1}^m \left(\frac{1}{q}\sum_{i=1}^q \delta_j^{(i)}(\btheta) -  \mathbb{E}_{\tilde{\bx}_\tau} [\delta_j(\btheta) \mid \tilde{\bx}_0]\right)^2} \le  \sqrt{\sum_{ j = 1}^m \epsilon_j^2}.
\end{align*}
Therefore,
\begin{align*}
\Norm{ \bdelta(\btheta)}_2 & \le \Norm{\mathbb{E}_{\tilde{\bx}_\tau} [\bdelta(\btheta) \mid \tilde{\bx}_0]}_2 + \sqrt{\sum_{ j = 1}^m \epsilon_j^2}\le 2\sqrt{m}\Norm{\mathrm{P}_\tau(\bx\mid\tilde{\bx}_0) -\mathrm{P}_{\btheta}(\bx)}_{\text{TV}}+\sqrt{\sum_{ j = 1}^m \epsilon_j^2}
\\& \le  2\sqrt{m}\left(\mathscr{G}(\mathbf{B}^\tau)+\sqrt{\frac{\sum_{ j = 1}^m \epsilon_j^2}{4m}}\right).
\end{align*}

That is to say, we can conclude that (\ref{eq:sample-err-bound}) holds provided that all the concentrated events occur. Thus, the probability that (\ref{eq:sample-err-bound}) holds follows the inequality below:
\begin{align*}
\mathrm{P}\left(\Norm{ \bdelta(\btheta)}_2 \le 2\sqrt{m}\left(\mathscr{G}(\mathbf{B}^\tau)+\sqrt{\frac{\sum_{ j = 1}^m \epsilon_j^2}{4m}}\right)\right) \ge &1 - \mathrm{P}\left(\bigcup_{j=1}^m E_j(\epsilon_j) \right)\ge 1-\sum_{j=1}^m \mathrm{P}\left(E_j^q(\epsilon_j) \right)
\ge 1 - 2 \sum_{j = 1}^m \beta_j.
\end{align*}

\subsection{Proof of Theorem~\ref{thm:criterion-decrease}}
\label{sec:proof-criterion-decrease}
We consider the probability that the achieved objective function value decreases in the $k^{th}$ iteration provided that the criterion \ref{eq:criterion-aggressive} is satisfied:
\begin{align*}
\mathrm{P}\left(g(\btheta^{(k+1)}) < g(\btheta^{(k)}) \given 2\sqrt{m} \mathscr{G}(\mathbf{B}^\tau) < \frac{1}{2}\norm{\bm{G}_\alpha(\btheta^{(k)})}_2 \right).
\end{align*}
Since  $\norm{\bdelta(\btheta^{(k)})}_2 \le \frac{1}{2}\norm{\bm{G}_\alpha(\btheta^{(k)})}_2$ provided in Theorem~\ref{thm:obj-decrease} is a sufficient condition for $g(\btheta^{(k+1)}) \le g(\btheta^{(k)})$, we have:
\begin{align*}
&\mathrm{P}\left(g(\btheta^{(k+1)}) \le g(\btheta^{(k)}) \given  2\sqrt{m} \mathscr{G}(\mathbf{B}^\tau)< \frac{1}{2}\norm{\bm{G}_\alpha(\btheta^{(k)})}_2 \right) \\ 
\ge &\mathrm{P}\left(\norm{\bdelta(\btheta^{(k)})}_2 \le \frac{1}{2}\norm{\bm{G}_\alpha(\btheta^{(k)})}_2 \given 2\sqrt{m} \mathscr{G}(\mathbf{B}^\tau) < \frac{1}{2}\norm{\bm{G}_\alpha(\btheta^{(k)})}_2 \right)\\
= & 1 - \mathrm{P}\left(\norm{\bdelta(\btheta^{(k)})}_2 > \frac{1}{2}\norm{\bm{G}_\alpha(\btheta^{(k)})}_2\given 2\sqrt{m} \mathscr{G}(\mathbf{B}^\tau) < \frac{1}{2}\norm{\bm{G}_\alpha(\btheta^{(k)})}_2 \right)\\
\ge & 1 - \mathrm{P}\left(\norm{\bdelta(\btheta^{(k)})}_2 - \Norm{\mathbb{E}_{\tilde{\bx}_\tau} [\bdelta(\btheta) \mid \tilde{\bx}_0]}_2> \frac{1}{2}\norm{\bm{G}_\alpha(\btheta^{(k)})}_2-2\sqrt{m} \mathscr{G}(\mathbf{B}^\tau) \given 2\sqrt{m} \mathscr{G}(\mathbf{B}^\tau) < \frac{1}{2}\norm{\bm{G}_\alpha(\btheta^{(k)})}_2 \right)\\
\ge & 1-\sum_{j=1}^m\mathrm{P}\left(E_j^q(\frac{1}{2\sqrt{m}}\norm{\bm{G}_\alpha(\btheta^{(k)})}_2-2 \mathscr{G}(\mathbf{B}^\tau))\given  2\sqrt{m} \mathscr{G}(\mathbf{B}^\tau)< \frac{1}{2}\norm{\bm{G}_\alpha(\btheta^{(k)})}_2 \right),
\end{align*}
where $E_j^q \left(\frac{1}{2\sqrt{m}}\norm{\bm{G}_\alpha(\btheta^{(k)})}_2-2 \mathscr{G}(\mathbf{B}^\tau)\right)$ is defined in (\ref{eq:ad-event}) and in the $4^{th}$ line we apply (\ref{eq:exp-err-bound}). 
As $q$ approaches infinity, by the weak law of large numbers, we have
\begin{equation*}
\lim_{q \to \infty}\mathrm{P}\left(E_j^q\left((\frac{1}{2\sqrt{m}}\norm{\bm{G}_\alpha(\btheta^{(k)})}_2-2 \mathscr{G}(\mathbf{B}^\tau)\right)\right) = 0.
\end{equation*}
Then,
\begin{align*}
&\lim_{q \to \infty}\mathrm{P}\left(g(\btheta^{(k+1)}) < g(\btheta^{(k)}) \given  2\sqrt{m} \mathscr{G}(\mathbf{B}^\tau) < \frac{1}{2}\norm{\bm{G}_\alpha(\btheta^{(k)})}_2 \right)\\
\ge& 1-\lim_{q \to \infty}\sum_{j=1}^m\mathrm{P}\left(E_j^q(\frac{1}{2\sqrt{m}}\norm{\bm{G}_\alpha(\btheta^{(k)})}_2-2 \mathscr{G}(\mathbf{B}^\tau))\given 2\sqrt{m} \mathscr{G}(\mathbf{B}^\tau)< \frac{1}{2}\norm{\bm{G}_\alpha(\btheta^{(k)})}_2 \right)
=1.
\end{align*}
\subsection{Proof of Theorem~\ref{thm:convergence-rate-tay}}
According to Theorem~\ref{thm:convergence-rate}, we only need to show 
\begin{equation*}
\lim_{q \to \infty}\mathrm{P}\left(g(\btheta^{(k+1)}) \le g(\btheta^{(k)}) \right) = 1,
\end{equation*}
for $ k = 1,2,\cdots, \kappa -1$.

By a union bound, the following inequality is true:
\begin{align*}
\begin{split}
&\lim_{q \to \infty}\mathrm{P}\left(g(\btheta^{(k+1)}) \le g(\btheta^{(k)}) \right) \leq 1 - \sum_{k=1}^{\kappa-1} \lim_{q \to \infty}\mathrm{P}\left(g(\btheta^{(k+1)}) > g(\btheta^{(k)}) \right).
\end{split}
\end{align*}

Notice that, following TAY, we always have:
\begin{equation*}
\mathrm{P}\left(2\sqrt{m} \mathscr{G}(\mathbf{B}^\tau)< \frac{1}{2}\norm{\bm{G}_\alpha(\btheta^{(k)})}_2\right) = 1,
\end{equation*}
suggesting 
\begin{align*}
\begin{split}
\lim_{q \to \infty}\mathrm{P}\left(g(\btheta^{(k+1)}) > g(\btheta^{(k)}) \right) =\lim_{q \to \infty}\mathrm{P}\left(g(\btheta^{(k+1)}) > g(\btheta^{(k)}) \given 2\sqrt{m} \mathscr{G}(\mathbf{B}^\tau) < \frac{1}{2}\norm{\bm{G}_\alpha(\btheta^{(k)})}_2   \right) 
= 0,
\end{split}
\end{align*}
where the equality is due to Theorem~\ref{thm:criterion-decrease}.

Finally, with Theorem~\ref{thm:convergence-rate}, we can finish the proof.

\section{Experiments}
\subsection{Comparison with SPG-based Methods}
In this section, we consider the effect of the regularization parameter $\lambda$. Specifically, we apply the methods mentioned the Section~\ref{sec:structure-learning} with different $\lambda$s. The results are reported in Figure~\ref{fig:str1} and Figure~\ref{fig:str2}.

\begin{figure*}[h]
\centering
\begin{subfigure}{0.32\textwidth}
\centering
\includegraphics[scale=0.22]{./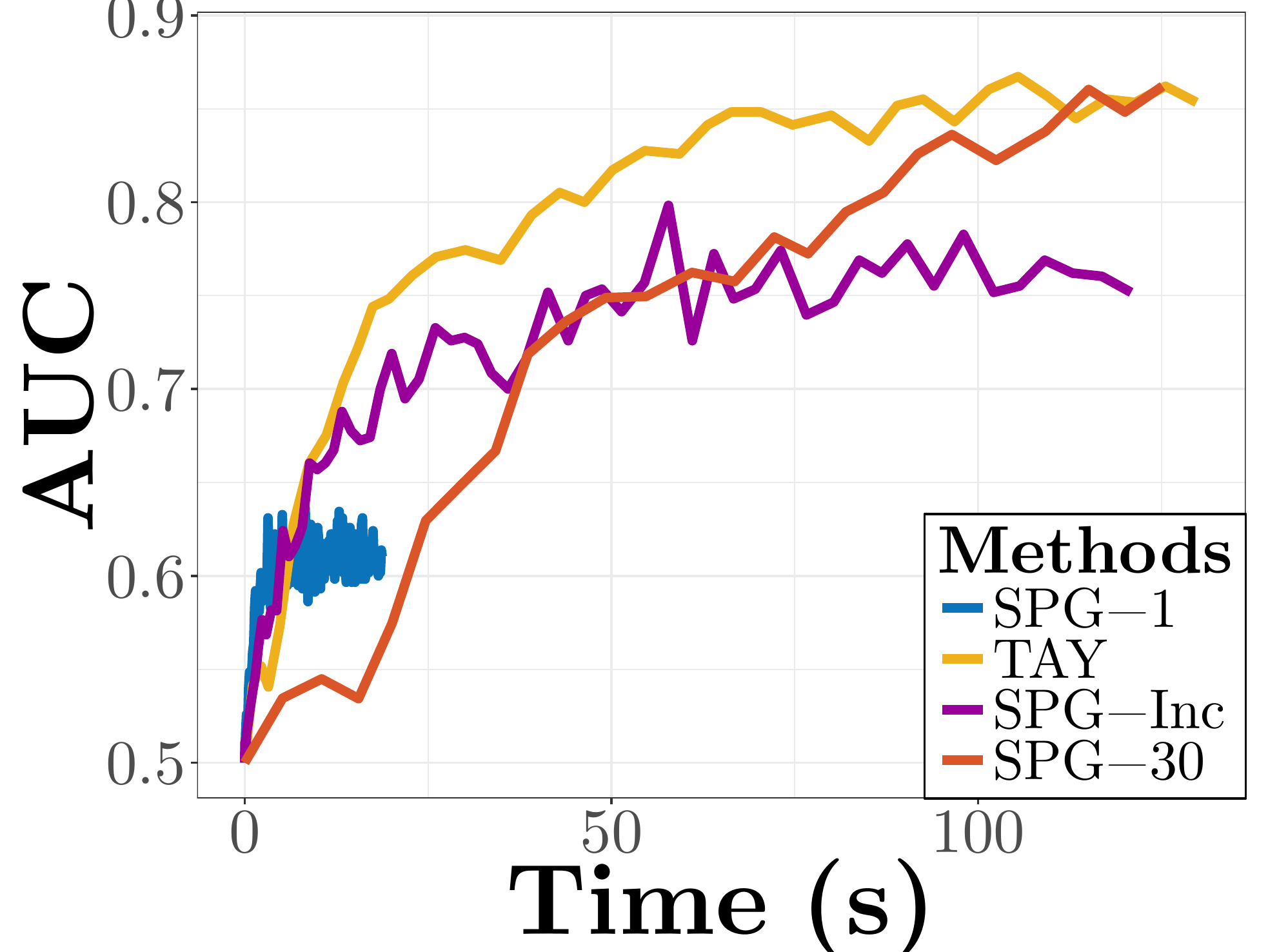}
\caption{AUC v.s. Time \\$\lambda = 0.02$}
\end{subfigure}
\centering
\begin{subfigure}{0.32\textwidth}
\centering
\includegraphics[scale=0.22]{./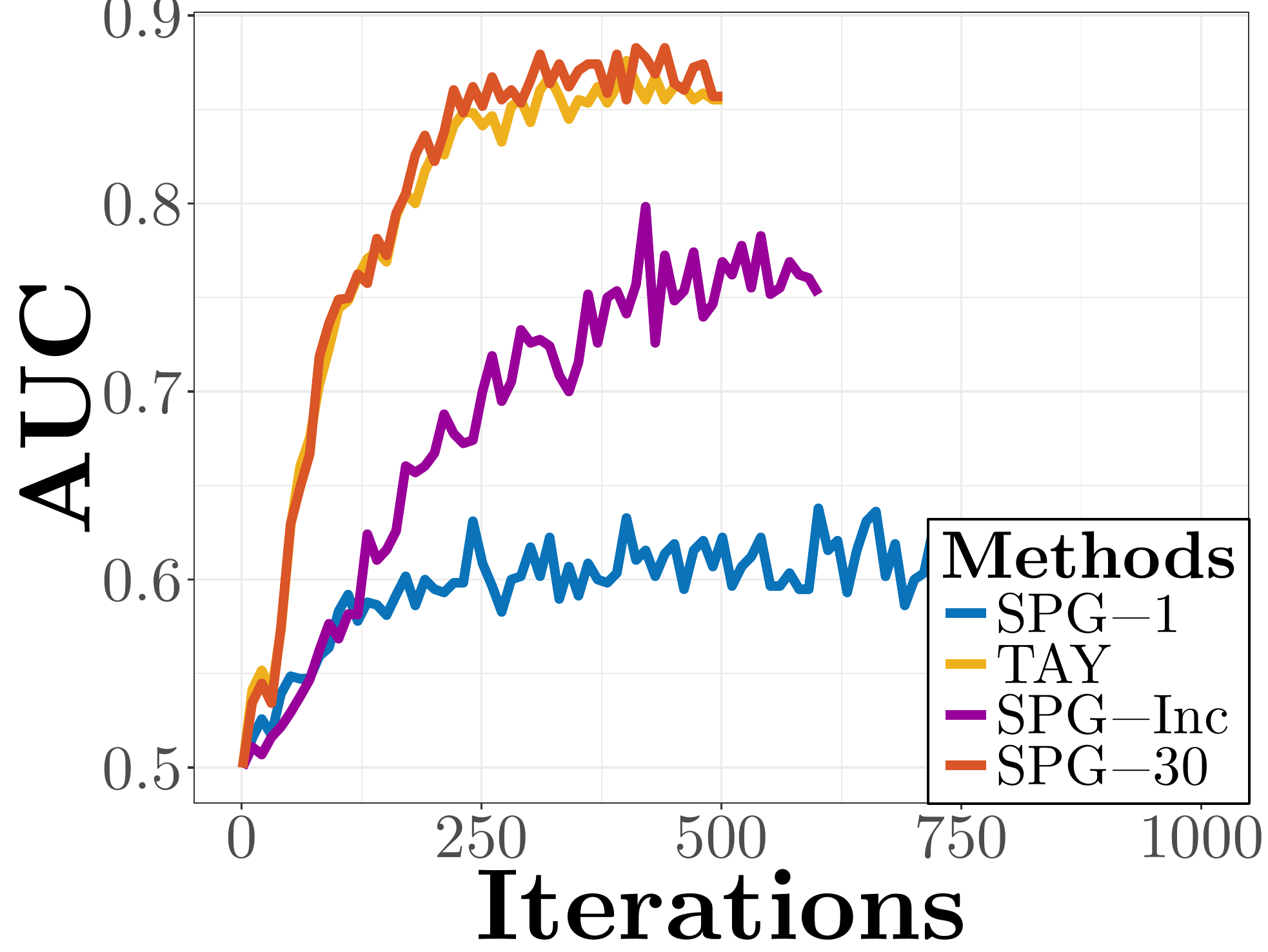}
\caption{AUC v.s. Iterations\\$\lambda = 0.02$}
\end{subfigure}
\centering
\begin{subfigure}{0.32\textwidth}
\centering
\includegraphics[scale=0.22]{./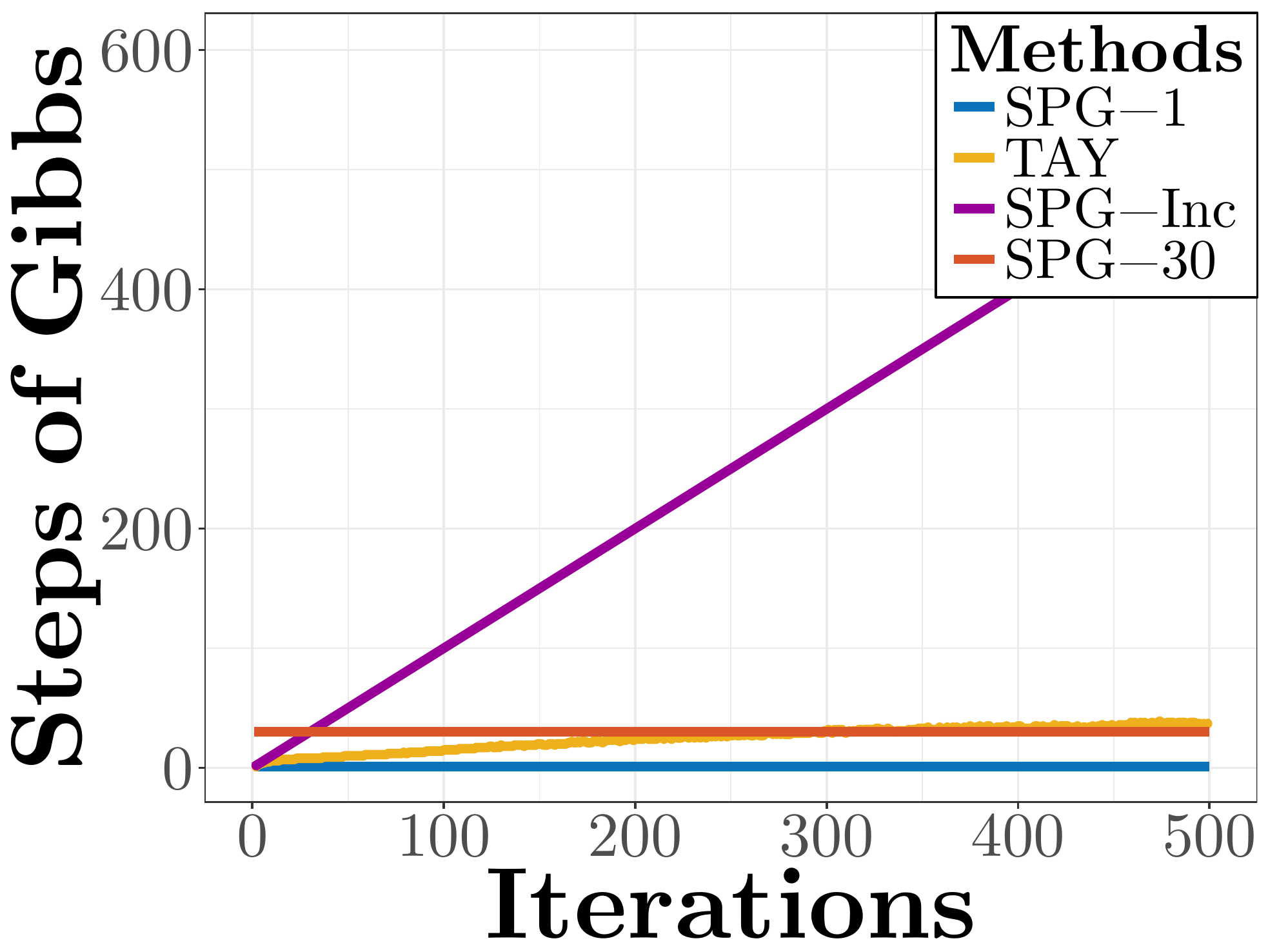}
\caption{$\tau$ v.s. Iterations\\$\lambda = 0.02$}
\end{subfigure}
\centering
\begin{subfigure}{0.32\textwidth}
\centering
\includegraphics[scale=0.22]{./support/structureLearningAUC1025.pdf}
\caption{AUC v.s. Time\\$\lambda = 0.025$}
\end{subfigure}
\begin{subfigure}{0.32\textwidth}
\centering
\includegraphics[scale=0.22]{./support/structureLearningAUCStep1025.pdf}
\caption{AUC v.s. Iterations\\$\lambda = 0.025$}
\end{subfigure}
\centering
\begin{subfigure}{0.32\textwidth}
\centering
\includegraphics[scale=0.22]{./support/structureLearningTau1025.pdf}
\caption{$\tau$ v.s. Iterations\\$\lambda = 0.025$}
\end{subfigure}
\centering
\begin{subfigure}{0.32\textwidth}
\centering
\includegraphics[scale=0.22]{./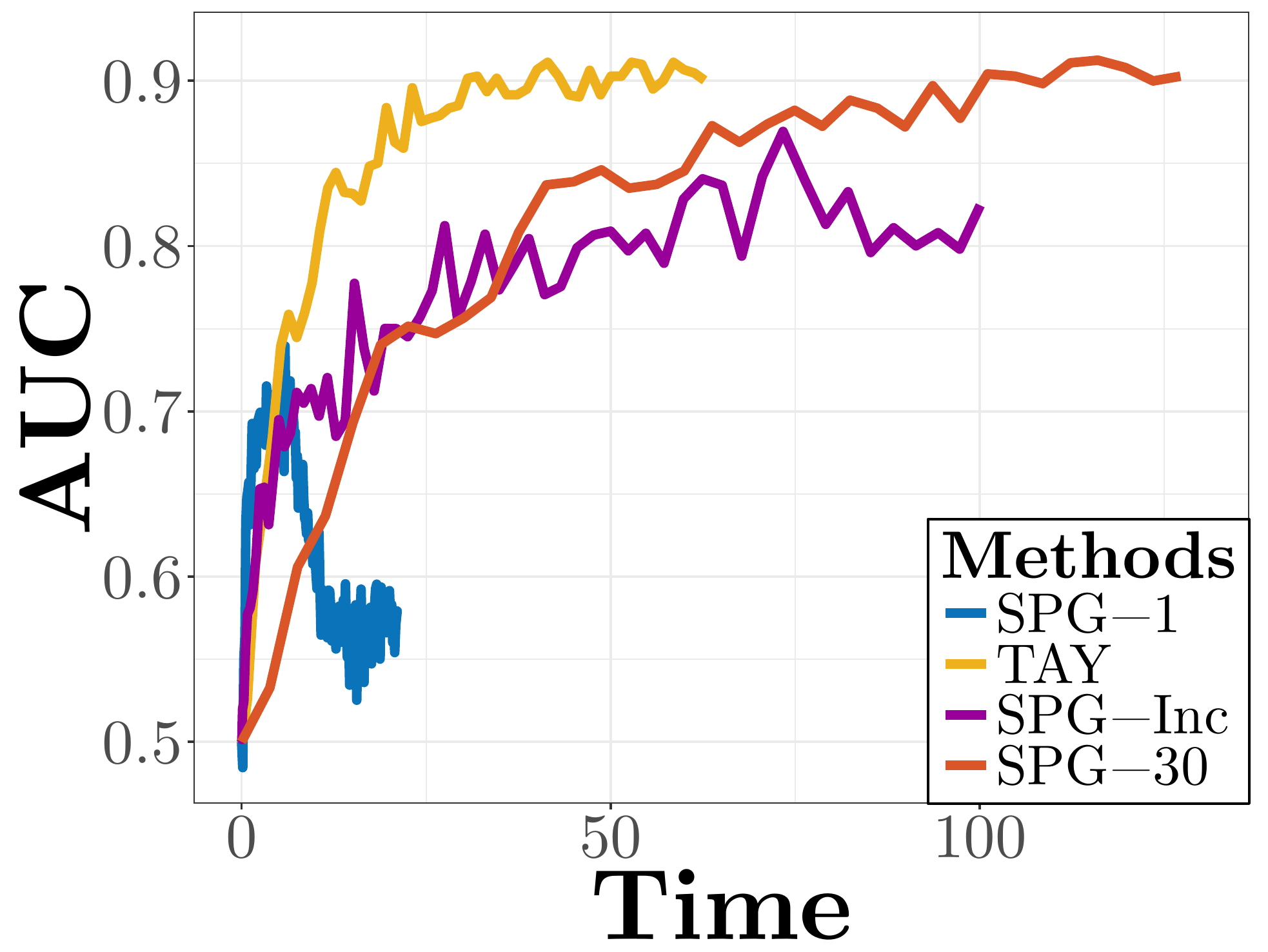}
\caption{AUC v.s. Time \\$\lambda = 0.03$}
\end{subfigure}
\centering
\begin{subfigure}{0.32\textwidth}
\centering
\includegraphics[scale=0.22]{./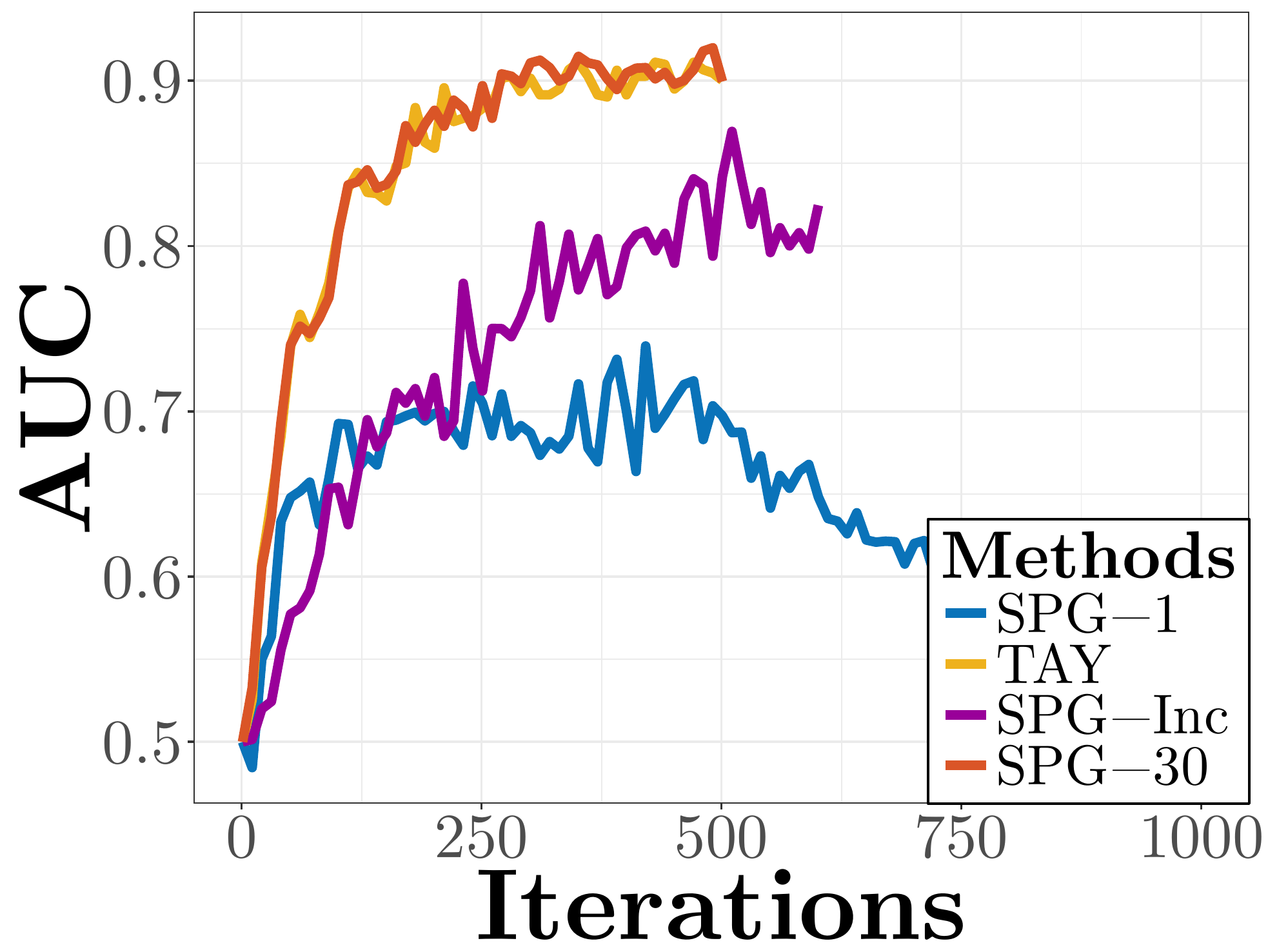}
\caption{AUC v.s. Iterations\\$\lambda = 0.03$}
\end{subfigure}
\centering
\begin{subfigure}{0.32\textwidth}
\centering
\includegraphics[scale=0.22]{./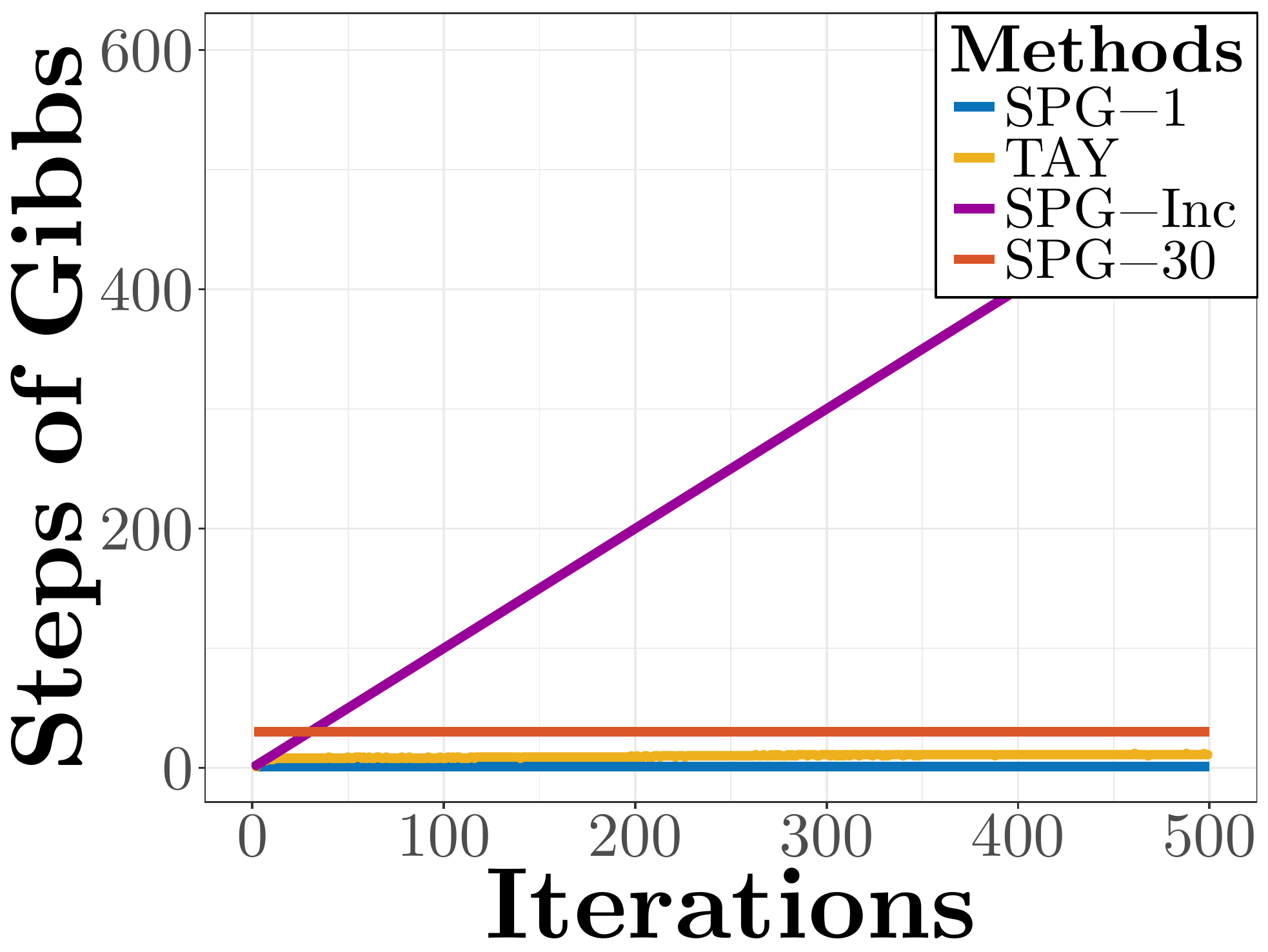}
\caption{$\tau$ v.s. Iterations\\$\lambda = 0.03$}
\end{subfigure}
\caption{Area under curve (AUC) and the steps of Gibbs sampling ($\tau$) for the structure learning of a 10-node network with different $\lambda$'s.}
\label{fig:str1}
\end{figure*}

\begin{figure*}[h]
\centering
\begin{subfigure}{0.32\textwidth}
\centering
\includegraphics[scale=0.22]{./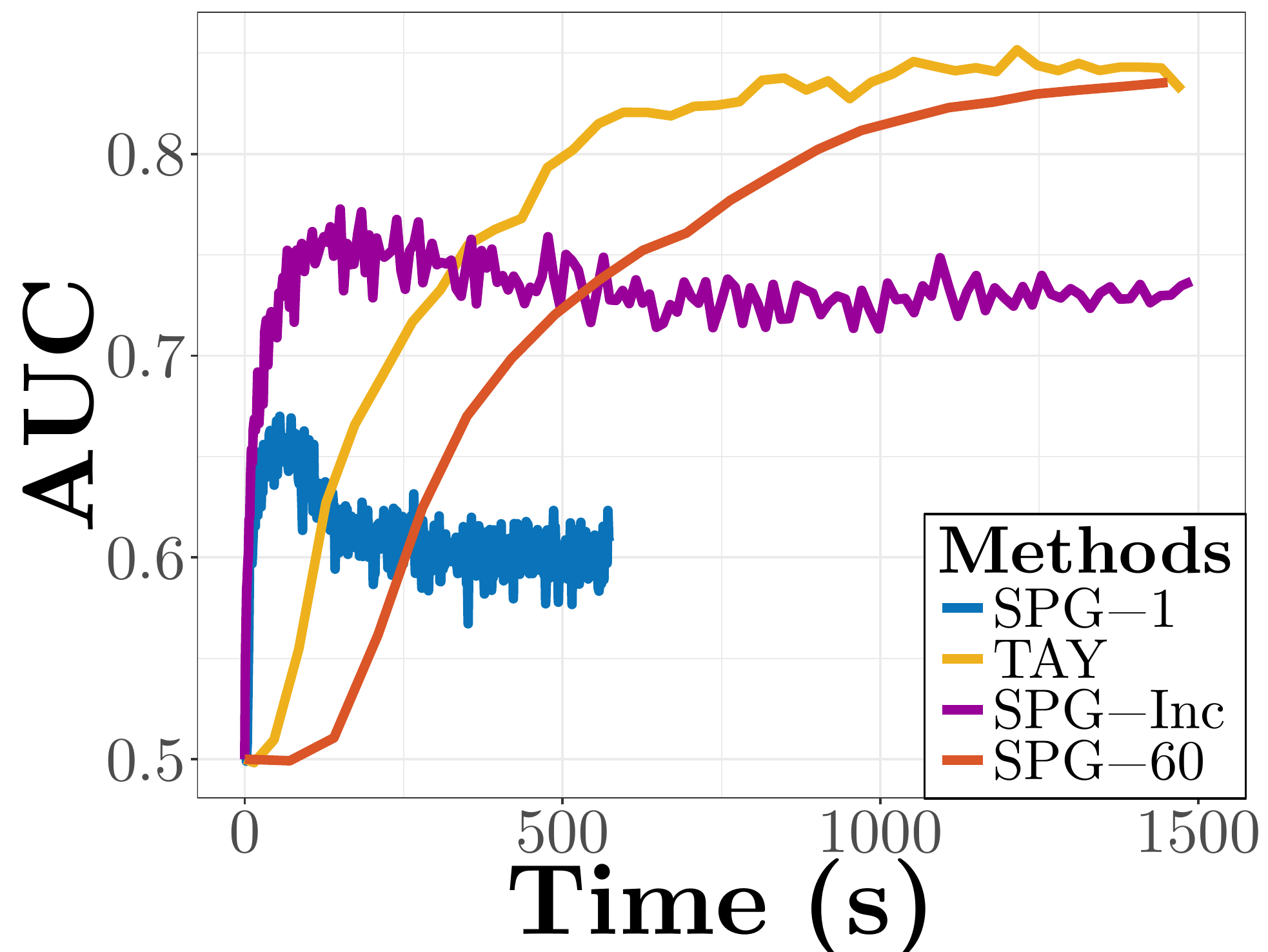}
\caption{AUC v.s. Time \\$\lambda = 0.015$}
\end{subfigure}
\centering
\begin{subfigure}{0.32\textwidth}
\centering
\includegraphics[scale=0.22]{./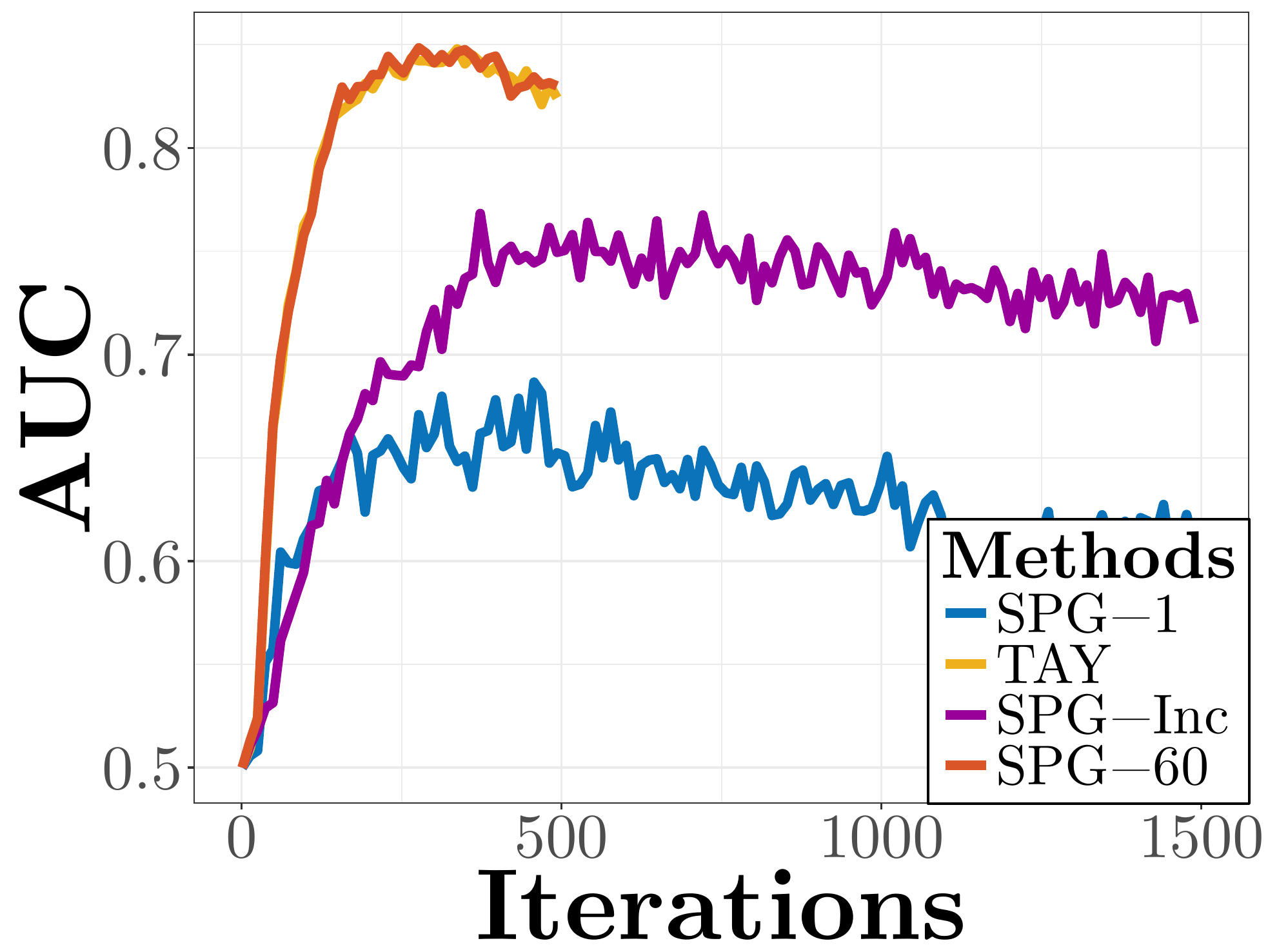}
\caption{AUC v.s. Iterations\\$\lambda = 0.015$}
\end{subfigure}
\centering
\begin{subfigure}{0.32\textwidth}
\centering
\includegraphics[scale=0.22]{./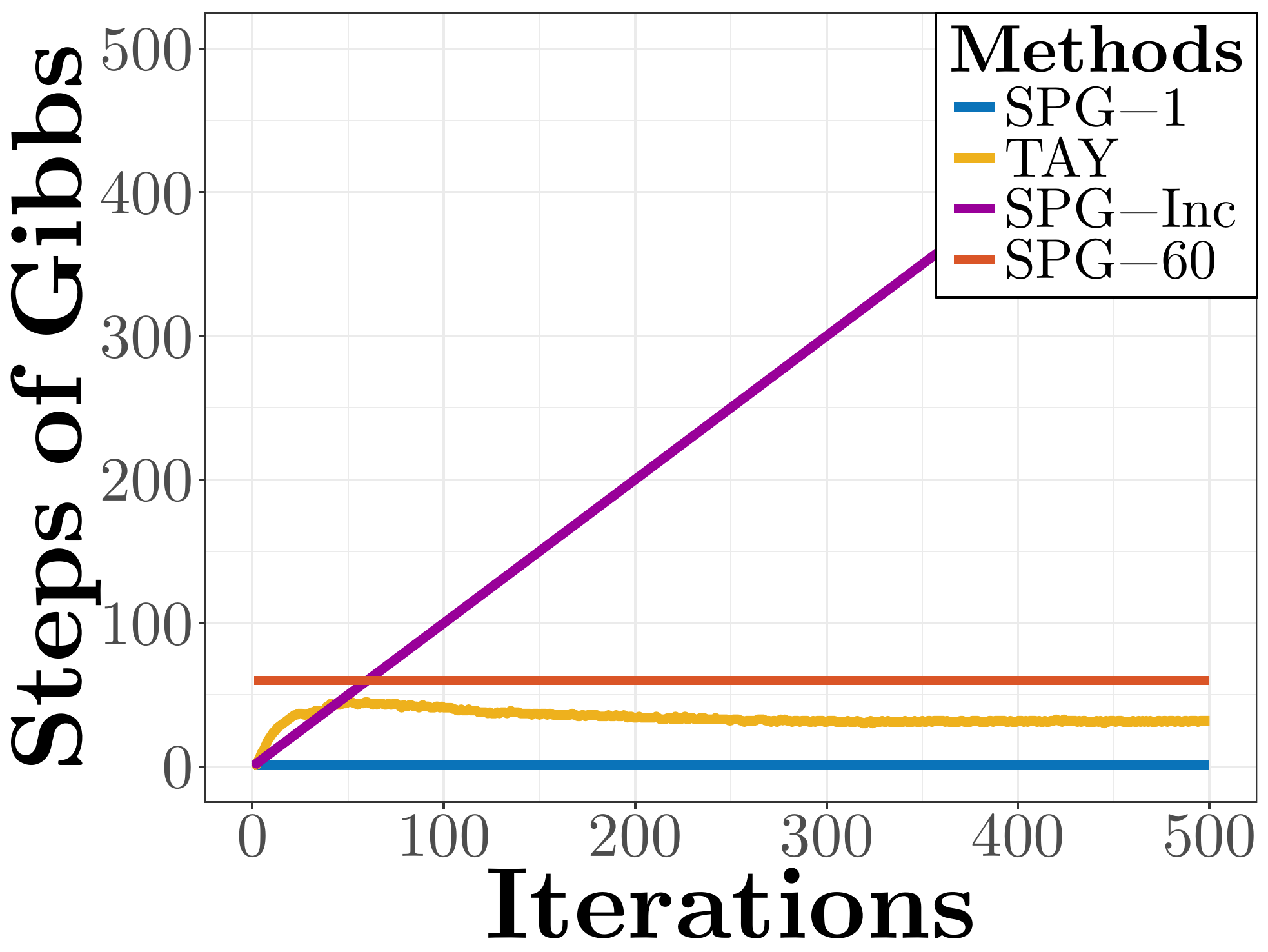}
\caption{$\tau$ v.s. Iterations\\$\lambda = 0.015$}
\end{subfigure}
\centering
\begin{subfigure}{0.32\textwidth}
\centering
\includegraphics[scale=0.22]{./support/structureLearningAUC20.pdf}
\caption{AUC v.s. Time\\$\lambda = 0.017$}
\end{subfigure}
\begin{subfigure}{0.32\textwidth}
\centering
\includegraphics[scale=0.22]{./support/structureLearningAUCStep20.pdf}
\caption{AUC v.s. Iterations\\$\lambda = 0.017$}
\end{subfigure}
\centering
\begin{subfigure}{0.32\textwidth}
\centering
\includegraphics[scale=0.22]{./support/structureLearningTau20.pdf}
\caption{$\tau$ v.s. Iterations\\$\lambda = 0.017$}
\end{subfigure}
\centering
\begin{subfigure}{0.32\textwidth}
\centering
\includegraphics[scale=0.22]{./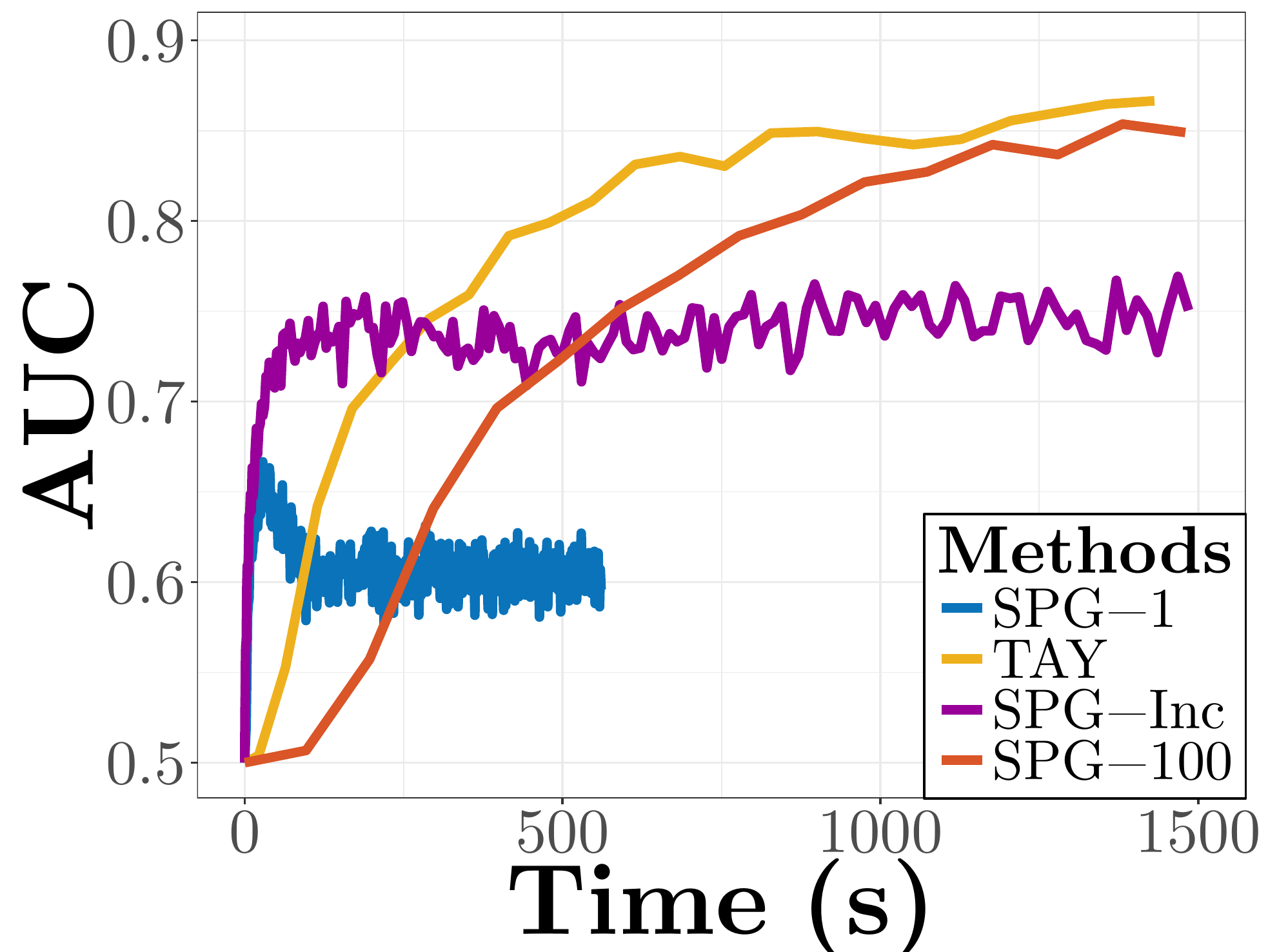}
\caption{AUC v.s. Time \\$\lambda = 0.02$}
\end{subfigure}
\centering
\begin{subfigure}{0.32\textwidth}
\centering
\includegraphics[scale=0.22]{./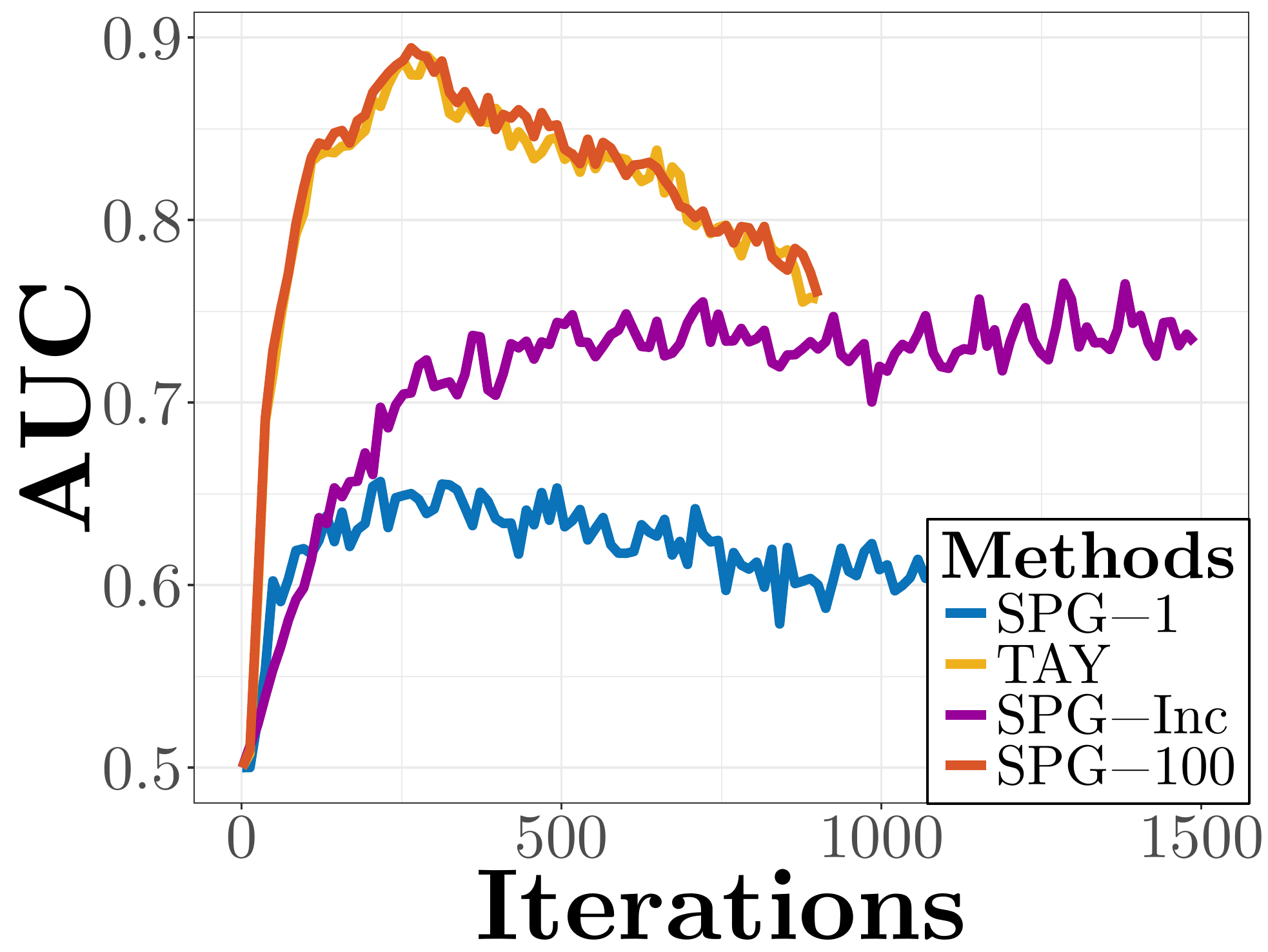}
\caption{AUC v.s. Iterations\\$\lambda = 0.02$}
\end{subfigure}
\centering
\begin{subfigure}{0.32\textwidth}
\centering
\includegraphics[scale=0.22]{./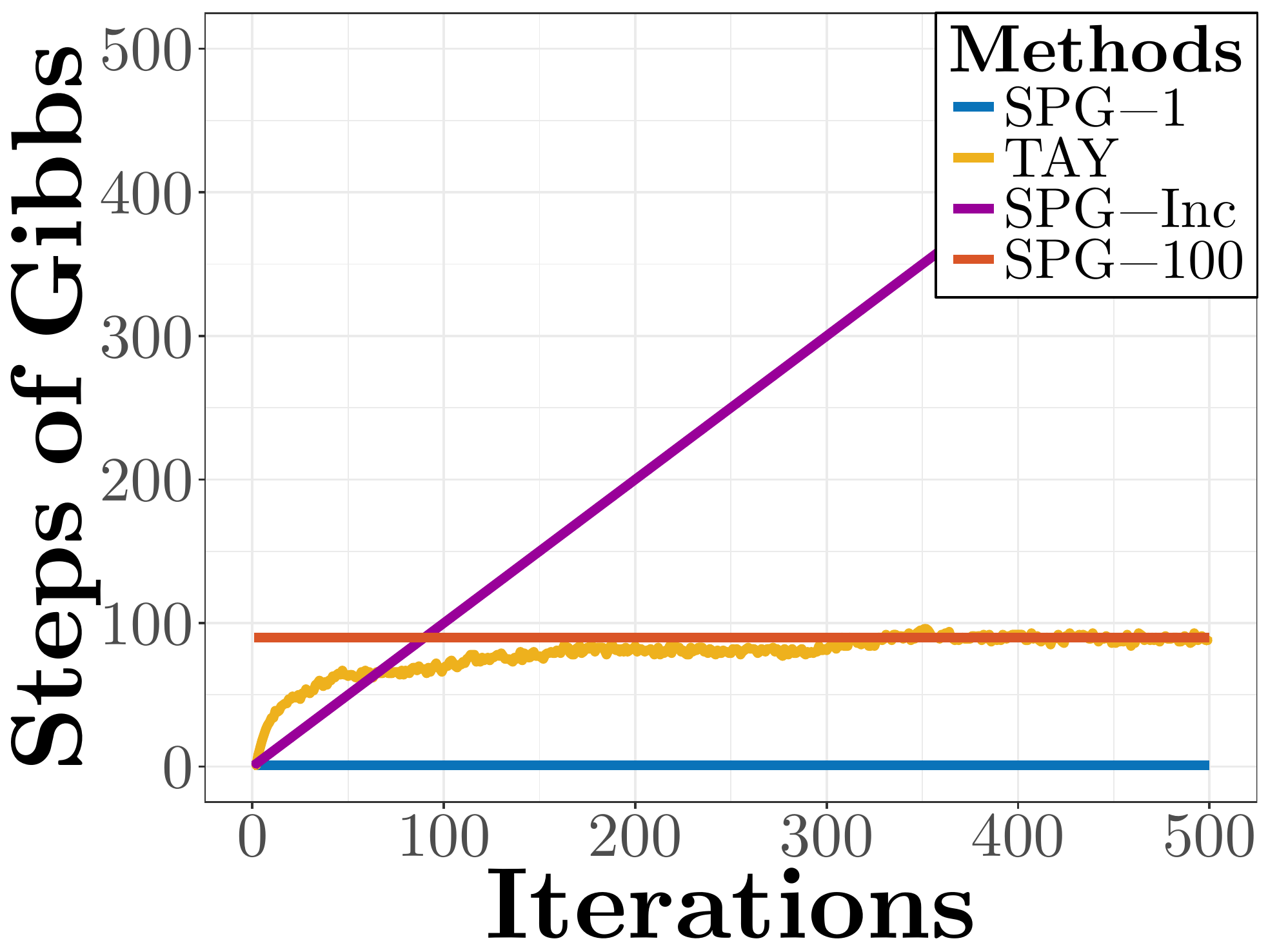}
\caption{$\tau$ v.s. Iterations\\$\lambda = 0.02$}
\end{subfigure}
\caption{Area under curve (AUC) and the steps of Gibbs sampling ($\tau$) for the structure learning of a 20-node network with different $\lambda$'s.}
\label{fig:str2}
\end{figure*}

\clearpage
\subsection{Comparison with the Pseudo-likelihood Method}
We compare TYA with the pseudo-likelihood method (Pseudo) under the same parameter configuration introduced in Section~\ref{sec:structure-learning}. Note that the two methods achieve a comparable performance: Pseudo is slightly better with 10 nodes and TAY outperforms a little with 20 nodes. This is consistent with the theoretical result that the two inductive principles are both sparsistent. 
\begin{figure*}[h]
\centering
\begin{subfigure}{0.32\textwidth}
\centering
\includegraphics[scale=0.35]{./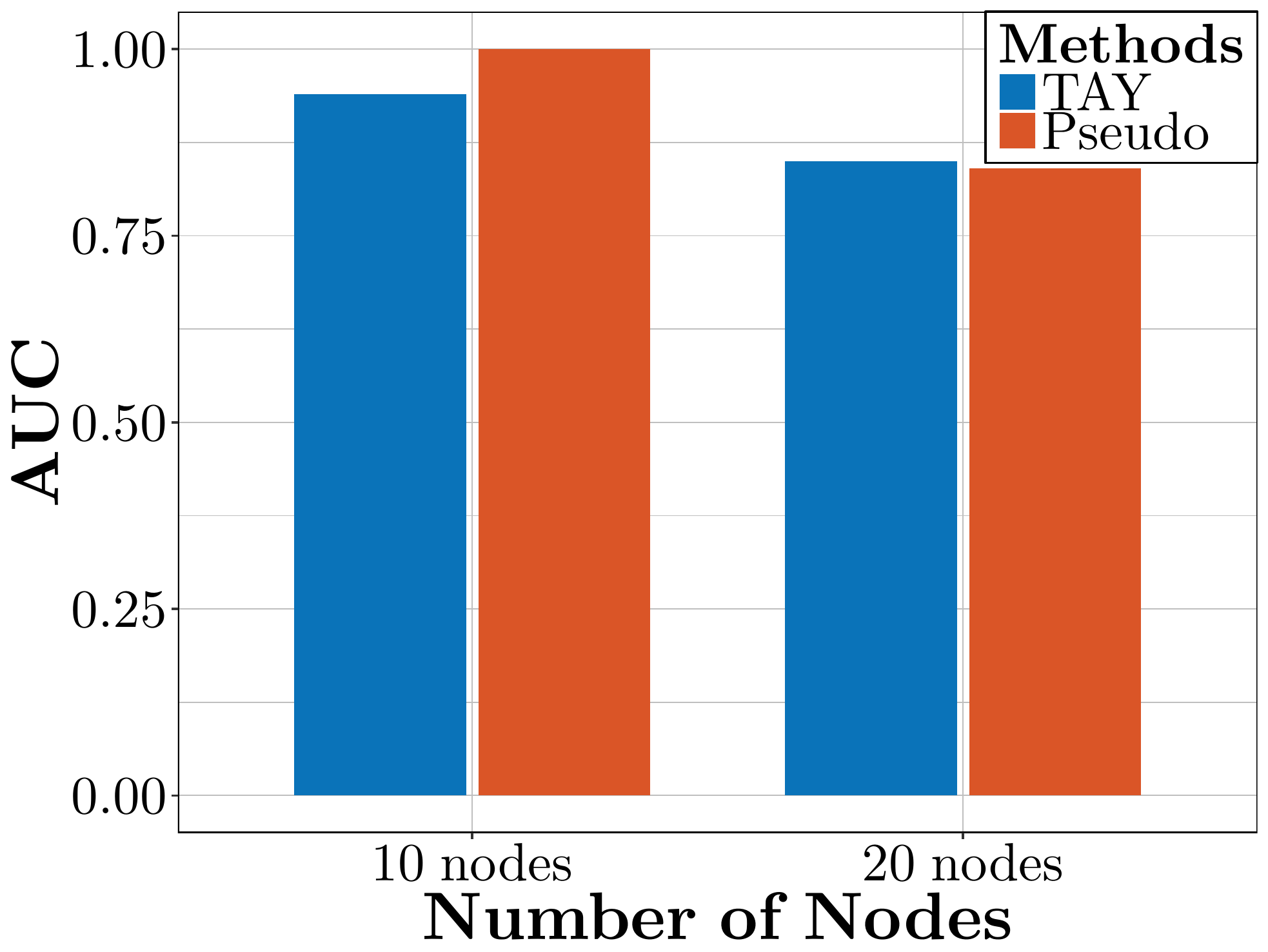}
\end{subfigure}
\caption{Area under curve (AUC) and for the structure learning of a 20-node network.}
\label{fig:str3}
\end{figure*}

\end{document}